\definecolor{blue2}{rgb}{0.13,0.4,0.8} 
\definecolor{blue3}{rgb}{0,0,0.4} 
\definecolor{trueblack}{rgb}{0,0,0} 
\newtheorem{theorem}{Theorem}
\newtheorem{definition}[theorem]{Definition}
\newtheorem{proposition}[theorem]{Proposition}
\newtheorem*{proposition*}{}
\newtheorem{lemma}[theorem]{Lemma}
\newtheorem{claim}[theorem]{Claim}
\def\eqref#1{equation~\ref{#1}}
\def\1{\bm{1}}
\def\vb{{\bm{b}}}
\def\vh{{\bm{h}}}
\def\vs{{\bm{s}}}
\def\vw{{\bm{w}}}
\def\vx{{\bm{x}}}
\def\vy{{\bm{y}}}
\def\vz{{\bm{z}}}
\def\mW{{\bm{W}}}
\DeclareMathAlphabet{\mathsfit}{\encodingdefault}{\sfdefault}{m}{sl}
\SetMathAlphabet{\mathsfit}{bold}{\encodingdefault}{\sfdefault}{bx}{n}
\newcommand{\gand}{\textsc{And}}
\newcommand{\gor}{\textsc{Or}}
\newcommand{\gnot}{\textsc{Not}}
\newcommand{\W}{\mathrm{W}}
\newcommand\false{\mathsf{false}}
\newcommand\true{\mathsf{true}}
\newcommand{\shp}{\text{$\#$}\text{\rm P}}
\newcommand{\ptime}{\text{\rm PTIME}}
\newcommand{\np}{\text{\rm NP}}
\newcommand{\conp}{\text{\rm co-NP}}
\newcommand{\M}{\mathcal{M}}
\newcommand{\K}{\mathcal{K}}
\newcommand{\C}{\mathcal{C}}
\newcommand{\Q}{\mathbb{Q}}
\newcommand{\relu}{\ensuremath{\operatorname{relu}}}
\newcommand{\step}{\ensuremath{\operatorname{step}}}
\newcommand{\NP}{\textsc{NP}}
\newcommand{\Ptime}{\textsc{P}}
\newcommand{\Maj}{{\sf Maj}} 
\newcommand{\dist}{\mathrm{d}}
\title{Model Interpretability through the Lens of Computational Complexity}
\author{
  Pablo Barceló$^{1,4}$, Mikaël Monet$^{2}$, Jorge Pérez$^{3,4}$, Bernardo Subercaseaux$^{3,4}$ \\ \\ 

  { $^1$ Institute for Mathematical and Computational Engineering, PUC-Chile }\\
  { $^2$ Inria Lille, France} \\
  { $^3$ Department of Computer Science, Universidad de Chile}\\
  { $^4$ Millennium Institute for Foundational Research on Data, Chile} \\
  {\footnotesize \texttt{pbarcelo@ing.puc.cl}, \texttt{mikael.monet@inria.fr}, \texttt{[jperez,bsuberca]@dcc.uchile.cl}}
}
\begin{document}

\maketitle
\begin{abstract}
In spite of several claims stating that some models are more interpretable than
others -- e.g.,  ``{linear models are more interpretable than deep neural
networks}'' -- we still lack a principled notion of interpretability to
formally compare among different classes of models.  We make a step towards
such a notion by studying whether folklore interpretability claims have a
correlate in terms of computational complexity theory.  We focus on
\emph{local post-hoc explainability queries} that, intuitively, attempt to
answer why individual inputs are classified in a certain way by a given model.
In a nutshell, we say that a class $\C_1$ of models is \emph{more
interpretable} than another class $\C_2$, if the computational complexity of
answering post-hoc queries for models in $\C_2$ is higher than for those in
$\C_1$.  We prove that this notion provides a good theoretical counterpart to
current beliefs on the interpretability of models; in particular, we show that
under our definition and assuming standard complexity-theoretical assumptions
(such as $\Ptime\neq \NP$), both linear and tree-based models are strictly more
interpretable than neural networks.  Our complexity analysis, however, does not
provide a clear-cut difference between linear  and tree-based models, as we
obtain different results depending on the particular {post-hoc explanations}
considered.  Finally, by applying a finer complexity analysis based on
parameterized complexity, we are able to prove a theoretical result suggesting
that shallow neural networks are more interpretable than deeper ones.

\end{abstract}

\section{Introduction}
\label{sec:introduction}
Assume a dystopian future in which the increasing number of submissions has
forced journal editors to use machine-learning systems for automatically accepting or rejecting papers.
Someone sends his/her work to the journal and the answer is a reject,
so the person demands an explanation for the decision.
The following are examples of three alternative ways in which the editor could provide an explanation for the rejection given by the system:
\begin{enumerate}
\item \emph{In order to accept the submitted paper it would be enough to include a better motivation and to delete at least two mathematical formulas.}
\item \emph{Regardless of the content and the other features of this paper, it was rejected because it has more than 10 pages and 
a font size of less than 11pt.}
\item \emph{We only accept 1 out of 20 papers that do not cite any other paper from our own journal. 
In order to increase your chances next time, please add more references.}
\end{enumerate}
These are examples of so called \emph{local post-hoc explanations}~\cite{BarredoArrieta2020,GuidottiMRTGP19,lipton2018mythos,molnar2019,Murdoch2019}. 
Here, the term ``local'' refers to explaining the verdict of the system for a particular input~\cite{GuidottiMRTGP19,Murdoch2019}, 
and the term ``post-hoc'' refers to interpreting the system after it has 
been trained~\cite{lipton2018mythos,molnar2019}.
Each one of the above explanations can be seen as a \emph{query} asked about a system and an input for it.
We call them \emph{explainability queries}.
The first query is related with the \emph{minimum change required} to obtain a desired outcome
(``what is the minimum change we must make to the article for it to be accepted by the system?'').
The second one is 
known as a \emph{sufficient reason}~\cite{shih2018symbolic}, and intuitively 
asks for a subset of the features of the given input that suffices to obtain the current verdict.
The third one, that we call \emph{counting completions}, relates to the probability of obtaining a particular 
output given the values in a subset of the features of the input.

In this paper we use explainability queries to formally compare the interpretability of machine-learning models.
We do this by relating the interpretability of a class of models (e.g.,~decision trees) 
to the \emph{computational complexity} of answering queries for models in that class.
Intuitively the lower the complexity of such queries is, the more interpretable the class is. 
We study whether this intuition 
provides an appropriate 
correlate to folklore wisdom on the interpretability of models~\cite{Gunning2019,lipton2018mythos,nguyen2020towards}.

\noindent{\bf Our contributions.}
We formalize the framework described above (Section~\ref{sec:framework}) and use it to 
perform
a theoretical study of the computational complexity
of three important types of explainability queries over three classes of models.
We focus on models often mentioned in the literature as extreme points in the interpretability spectrum: 
decision trees, linear models, and deep neural networks. 
In particular, we consider the class of 
\emph{free binary decision diagrams} (FBDDs), that generalize decision trees, the class
of \emph{perceptrons}, and the class of \emph{multilayer perceptrons} (MLPs) with ReLU 
activation functions. 
The instantiation of our framework for these classes is presented in Section~\ref{sec:instantiation}.

We show that, under standard complexity assumptions,
the computational problems associated to our interpretability queries 
are strictly less complex for~FBDDs than they are for~MLPs.
For instance, we show that for~FBDDs, the queries minimum-change-required and counting-completions 
can be solved in polynomial time, while for~MLPs these queries are, respectively, $\textsc{NP}$-complete and~\shp-complete
(where $\shp$ is the prototypical intractable complexity class for counting problems).
These results, together with results for other explainability queries, 
show that under our definition for comparing the interpretability of classes of models,~FBDDs are indeed more interpretable than~MLPs.
This correlates with 
the folklore
statement that tree-based models are more
interpretable than deep neural networks.
We prove similar results for perceptrons: most explainability queries that we consider are strictly less complex 
to answer for perceptrons than they are for~MLPs.
Since perceptrons are a realization of a {linear model}, our results give 
theoretical evidence for another folklore claim stating 
that linear models are more interpretable than deep neural networks.
On the other hand, the comparison between perceptrons and~FBDDs is not definitive and depends on the particular 
explainability query. We establish all our computational complexity results in Section~\ref{sec:complexity}.

Then, we observe that standard complexity classes are not enough to differentiate the interpretability 
of shallow and deep~MLPs.
To present a meaningful comparison, we then use the machinery of 
\emph{parameterized complexity}~\cite{downey2013fundamentals, FG06}, 
a theory that allows the classification of hard computational problems on a finer scale.
Using this theory, we are able to prove that there are explainability queries that are more difficult to solve for 
deeper~MLPs compared to shallow ones, thus giving theoretical evidence that shallow~MLPs are more interpretable.
This is the most technically involved result of the paper, that we think provides new insights on the complexity of interpreting deep neural networks. 
We present the necessary concepts and assumptions as well as a precise statement of this result in~Section~\ref{sec:p-complexity}.

Most definitions of interpretability in the literature are directly related to humans in a subjective manner  \cite{Biran2017ExplanationAJ, Doshi17, Miller2019}.
In this respect we do not claim that our complexity-based notion of interpretability is \emph{the} right notion of interpretability, and thus our results should be taken as a study of the correlation between a formal notion and the folklore wisdom regarding a subjective concept.
We discuss this and other limitations of our results in~Section~\ref{sec:discussion}.
We only present a few sketches for proofs in the body of the paper and 
refer the reader to the appendix for detailed proofs of all our claims.

\section{A framework to compare interpretability}
\label{sec:framework}
In this section we explain the key abstract components of our framework.
The idea is to introduce the necessary terminology to formalize our
notion of being \emph{more interpretable in terms of complexity}.

\paragraph{Models and instances.}
We consider an abstract definition of a model~$\mathcal{M}$ simply as a Boolean function
$\M : \{0,1\}^n \to \{0,1\}$. 
That is, we focus on binary classifiers with Boolean input features.
Restricting inputs and outputs to be Booleans makes our setting cleaner 
while still covering several relevant practical scenarios.
A class of models is just a way of grouping models together.
An \emph{instance} is a vector in~$\{0,1\}^n$ and represents a possible input for a model.
A \emph{partial instance} is a vector in~$\{0,1,\bot\}^n$, with~$\bot$ intuitively representing ``undefined'' components. 
A partial instance~$\vx \in \{0,1,\bot\}^n$ represents, in a compact way, the set of all instances in $\{0,1\}^n$ 
that can be obtained by replacing undefined components in $\vx$ with values in $\{0,1\}$. We call these the 
{\em completions} of $\vx$. 

\paragraph{Explainability queries.}
An \emph{explainability query} is a question that we ask about a model~$\M$ and a (possibly partial) instance~$\vx$, and 
refers to what the model $\M$ does on instance $\vx$. 
We assume all queries to be stated either as \emph{decision problems} (that is, \textsc{Yes}/\textsc{No} queries)
or as \emph{counting problems} (queries that ask, for example, how many completions of a partial instance
satisfy a given property).
Thus, for now we can think of queries simply as functions having models and instances as inputs.
We will formally define some specific queries in the next section, when we instantiate our framework.

\paragraph{Complexity classes.}
We assume some familiarity with the most common computational complexity classes of polynomial time~(\ptime) and 
nondeterministic polynomial time (\np), and with the notion of hardness and 
completeness for complexity classes under polynomial time reductions. 
In the paper we also consider the class~$\Sigma_2^p$,
consisting of those problems that can be solved in 
\np~if we further grant access to an oracle that solves \np~queries in constant time.  
It is strongly believed that $\ptime \subsetneq \np \subsetneq \Sigma_2^p$~\cite{arora2009computational},  
where for complexity classes~$\K_1$ and~$\K_2$ we have that $\K_1 \subsetneq \K_2$ means the following: 
problems in $\K_1$ can be solved in $\K_2$, 
but complete problems for~$\K_2$ cannot be solved in~$\K_1$.

While for studying the complexity of our decision problems the above classes suffice, for counting 
problems we will need another one. This will be the class 
$\shp$, which corresponds to problems that can be defined 
as counting the number of accepting paths of a 
polynomial-time nondeterministic Turing machine~\cite{arora2009computational}. 
Intuitively, $\shp$ is the counting class associated to $\np$: while the
prototypical $\np$-complete problem is checking if a propositional
formula is satisfiable (\textsc{SAT}), the prototypical $\shp$-complete problem
is counting how many truth assignments satisfy a propositional formula (\textsc{$\#$SAT}).
It is widely believed that $\shp$ is ``harder'' than $\Sigma_2^p$, which we write as $\Sigma_2^p \subsetneq \shp$.\footnote{One has to be careful with this notation, however, as 
$\Sigma_2^p$ and $\shp$ are complexity classes for problems of different sort: the former being for decision problems, and the latter for 
counting problems. Although this issue can be solved by considering the class $\textrm{PP}$, we skip these technical details as they are not fundamental for the paper and can be found in most complexity theory textbooks, such as that of Arora and Barak~\cite{arora2009computational}.}

\paragraph{Complexity-based interpretability of models.}
Given an explainability query~$Q$ and a class~$\C$ of models, we denote by~$Q(\C)$ the computational problem
defined by~$Q$ restricted to models in~$\C$.
We define next the most important notion for our framework: that of being \emph{more interpretable in terms of complexity}
(\emph{c-interpretable} for short). We will use this notion to compare among classes of models.

\begin{definition}
Let~$Q$ be an explainability query, and~$\C_1$ and~$\C_2$ be two classes of models.
We say that~$\C_1$ is \emph{strictly more c-interpretable than~$\C_2$ with respect to~$Q$}, if
the problem~$Q(\C_1)$ is in the complexity class~$\K_1$, the problem~$Q(\C_2)$ is hard for complexity class~$\K_2$, and 
$\K_1 \subsetneq \K_2$. 
\end{definition}

For instance, in the above definition one could take~$\K_1$ to be the \ptime\ class~and~$\K_2$ to be the $\np$ class, or $\K_1 = \np$ and $\K_2 = \Sigma_2^p$.


\section{Instantiating the framework and main results}
\label{sec:instantiation}
Here we instantiate our framework on three important classes of Boolean models 
and explainability queries, 
and then present our main theorems comparing such models in terms of 
c-interpretability.

\subsection{Specific models}
\label{subsec:models}

\paragraph{Binary decision diagrams.}
A \emph{binary decision diagram} (BDD~\cite{wegener2004bdds}) is a rooted
directed acyclic graph~$\mathcal{M}$ with labels on edges and nodes, verifying:
(i) each leaf is labeled with~$\true$ or with~$\false$; (ii) each internal node (a
node that is not a leaf) is labeled with an element of~$\{1,\ldots,n\}$; and
(iii) each internal node has an outgoing edge labeled~$1$ and another
one labeled~$0$.  
Every instance $\vx=(x_1,\ldots,x_n)\in \{0,1\}^n$ defines a unique path $\pi_\vx$ from the root to a leaf in $\M$, 
which satisfies the following condition: for every non-leaf node~$u$ in~$\pi_\vx$, if~$i$ is the label of~$u$, then 
the path~$\pi_\vx$ goes through the edge that is labeled with~$x_i$.
The instance $\vx$ is positive, i.e.,~$\mathcal{M}(\vx) \coloneqq 1$, 
if the label of the leaf in the path $\pi_\vx$
is~$\true$, and  negative otherwise.  The \emph{size}~$|\mathcal{M}|$
of~$\mathcal{M}$ is its number of edges.  A binary decision
diagram~$\mathcal{M}$ is \emph{free} (FBDD) if for every path from the root to a
leaf, no two nodes on that path have the same label. A \emph{decision tree} is
simply an FBDD whose underlying graph is a tree.

\paragraph{Multilayer perceptron (MLP).} 
A multilayer perceptron $\M$ with $k$ layers is defined by a
sequence of {\em weight} matrices $\mW^{(1)},\ldots,\mW^{(k)}$, {\em bias} vectors $\vb^{(1)},\ldots,\vb^{(k)}$, 
and {\em activation} functions~$f^{(1)},\ldots,f^{(k)}$. 
Given an instance $\vx$, we inductively define 
\begin{equation}\label{eq:mlp}
\vh^{(i)} \coloneqq f^{(i)}(\vh^{(i-1)}\mW^{(i)} + \vb^{(i)}) \quad \quad \quad (i \in \{1,\dots,k\}),
\end{equation}
assuming that $\vh^{(0)}\coloneqq \vx$. The output of $\M$ on $\vx$ is defined as 
$\M(\vx) := \vh^{(k)}$.
In this paper we assume all weights and biases to be rational numbers. 
That is, we assume that there exists a sequence of positive integers $d_0,d_1,\ldots,d_k$ 
such that $\mW^{(i)}\in \Q^{d_{i-1}\times d_i}$ and $\vb^{(i)}\in \Q^{d_i}$.
The integer $d_0$ is called the \emph{input size} of $\M$, and $d_k$ the \emph{output size}.
Given that we are interested in binary classifiers, we assume that $d_k=1$.
We say that an~MLP as defined above has $(k-1)$ \emph{hidden layers}.
The {\em size} of an~MLP $\M$, denoted by $|\M|$, is the total size of its weights and biases, in which the size of
a rational number $\nicefrac{p}{q}$ is $\log_2(p)+\log_2(q)$ (with the convention that $\log_2(0)=1$).

We focus on~MLPs in which all internal functions $f^{(1)},\dots,f^{(k-1)}$ are the ReLU function~$\relu(x)\coloneqq \max(0,x)$.
Usually, MLP binary classifiers are trained using the \emph{sigmoid} 
as the output function~$f^{(k)}$.
Nevertheless, when an MLP classifies an input (after training), it takes decisions by simply using 
the \emph{pre activations}, also called \emph{logits}.
Based on this and on the fact that we only consider already trained~MLPs, 
we can assume without loss of generality that the output function~$f^{(k)}$ is the \emph{binary step} function,
defined as~$\step(x)\coloneqq 0$ if~$x<0$, and~$\step(x)\coloneqq 1$ if~$x\geq 0$.

\paragraph{Perceptron.}
A perceptron is an MLP with no hidden layers (i.e., $k=1$). 
That is, a perceptron~$\mathcal{M}$ is defined by a pair~$(\mW,\vb)$
such that~$\mW\in\Q^{d\times 1}$ and~$\vb\in \Q$, and the output is~$\mathcal{M}(\vx)=\step(\vx\mW+\vb)$.
Because of its particular structure, a perceptron is usually defined as a pair~$(\vw,b)$ with~$\vw$ a rational vector and~$b$ a rational number.
The output of~$\mathcal{M}(\vx)$ is then~$1$ if and only if 
$\langle \vx,\vw\rangle + b\geq 0$, where~$\langle \vx,\vw\rangle$ denotes the dot product between~$\vx$ and~$\vw$.

\subsection{Specific queries}

Given instances~$\vx$ and~$\vy$, 
we define~$\dist(\vx,\vy) \coloneqq \sum_{i=1}^n|\vx_i-\vy_i|$ as the number of components in which~$\vx$ and~$\vy$ differ. 
We now formalize the minimum-change-required problem, which checks if the output of the model can be changed by flipping  
the value of at most~$k$ components in the input. 

\begin{center}
\fbox{\begin{tabular}{rl}
Problem: & \textsc{MinimumChangeRequired (MCR)} \\
Input: & Model $\mathcal{M}$, instance $\vx$, and~$k \in \mathbb{N}$ \\
Output: & \textsc{Yes}, if there exists an instance $\vy$ with $\dist(\vx,\vy) \leq k$ \\ 
& and $\M(\vx)\neq\M(\vy)$, and \textsc{No} otherwise
\end{tabular}}
\end{center}

Notice that, in the above definition, instead of ``finding'' the minimum change we state the problem as a \textsc{Yes}/\textsc{No}
query (a decision problem) by adding an additional input~$k \in \mathbb{N}$ and then asking for a change of size at most~$k$.
This is a standard way of stating a problem to analyze its complexity~\cite{arora2009computational}.
Moreover, in our results, when we are able to solve the problem in \ptime~then we can also output a minimum change, and it is clear that if the decision problem is hard then the optimization problem is also hard. Hence, we can indeed state our problems as decision problems without loss of generality.

To introduce our next query, recall that a partial instance is a vector~$\vy=(y_1,\ldots,y_n) \in \{0,1,\bot\}^n$, and a completion of 
it is an instance~$\vx=(x_1,\ldots,x_n) \in \{0,1\}^n$ such that for every~$i$ where~$y_i \in \{0,1\}$ it holds that~$x_i = y_i$.
That is, $\vx$ coincides with $\vy$ on all the components of~$\vy$ that are not $\bot$.
Given an instance~$\vx$ and a model~$\M$, a \emph{sufficient reason for~$\vx$ with respect to~$\M$}~\cite{shih2018symbolic} is a partial instance~$\vy$, such that 
$\vx$ is a completion of~$\vy$ and every possible completion~$\vx'$ of~$\vy$ satisfies~$\M(\vx')=\M(\vx)$.
That is, knowing the value of the components that are defined in~$\vy$ is enough to determine the output~$\M(\vx)$.
Observe that an instance~$\vx$ is always a sufficient reason for itself, and that~$\vx$ could have multiple (other) sufficient reasons.
However, given an instance~$\vx$, the sufficient reasons of~$\vx$ that are most interesting are those having the least possible number of defined components; indeed, it is clear that the less defined components a sufficient reason has, the more information it provides about the decision of~$\M$ on~$\vx$.
For a partial instance~$\vy$, let us write~$\|\vy\|$ for its number of components that are not~$\bot$.
The previous observations then motivate our next interpretability query.
\begin{center}
\fbox{\begin{tabular}{rl}
Problem: & \textsc{MinimumSufficientReason (MSR)} \\
Input: & Model $\mathcal{M}$, instance $\vx$, and~$k \in \mathbb{N}$ \\
Output: & \textsc{Yes}, if there exists a sufficient reason~$\vy$ for $\vx$ wrt.~$\M$ with $\|\vy\|\leq k$, \\
& and \textsc{No} otherwise

\end{tabular}}
\end{center}
As for the case of MCR, notice that we have formalized this interpretability query as a decision problem.
The last query that we will consider refers to counting the number of {positive completions} for a given partial instance.

\begin{center}
\fbox{\begin{tabular}{rl}
Problem: & \textsc{CountCompletions (CC)} \\
Input: & Model $\mathcal{M}$, partial instance $\vy$ \\
Output: & The number of completions $\vx$ of $\vy$ such that $\M(\vx)=1$
\end{tabular}}
\end{center}

Intuitively, this query informs us on the proportion of inputs that are accepted by the model, given that some particular features have been fixed; or, equivalently,
on the \emph{probability} that such an instance is accepted, assuming the other features to be uniformly and independently distributed.

\subsection{Main interpretability theorems}

We can now state our main theorems, which are illustrated in Figure~\ref{fig:main-results}. 
In all these theorems we use~$\C_\text{MLP}$ to denote the class of all models (functions from $\{0,1\}^n$ to $\{0,1\}$)
that are defined by MLPs, and similarly for~$\C_\text{FBDD}$ and~$\C_\text{Perceptron}$.
The proofs for all these results will follow as corollaries from the detailed complexity analysis that we present in Section~\ref{sec:complexity}.
We start by stating a strong 
separation between FBDDs and~MLPs, which holds for all the queries presented above.

\begin{theorem}\label{theo:fbdd-mlp}
$\C_\text{\emph{FBDD}}$ is strictly more c-interpretable than $\C_\text{\emph{MLP}}$
with respect to
\textsc{MCR}, \textsc{MSR}, and~\textsc{CC}.
\end{theorem}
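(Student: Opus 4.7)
The plan is to prove each of the three separations $Q(\C_\text{FBDD}) \subsetneq Q(\C_\text{MLP})$ for $Q \in \{\textsc{MCR},\textsc{MSR},\textsc{CC}\}$ by establishing a matching pair of bounds: a polynomial-time upper bound (or, where needed, a bound within some class $\K_1$) for the FBDD side, together with a hardness result for the MLP side in a strictly larger class $\K_2$. The overall structure is then to invoke the definition of c-interpretability with $\K_1 = \ptime$ and $\K_2 \in \{\np, \shp\}$ (or $\Sigma_2^p$ if necessary), using the standard beliefs $\ptime \subsetneq \np \subsetneq \Sigma_2^p \subsetneq \shp$.

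For the FBDD upper bounds, the plan is to exploit the DAG structure together with the free property (each variable appears at most once along any root-to-leaf path). For $\textsc{CC}$, I would do a bottom-up dynamic program: for each node $u$ of the FBDD, compute the number of completions of the input partial instance that reach $u$ weighted by $2^{(\text{unused variables below } u)}$; the fact that variables cannot repeat along a path makes the counting local and the recurrence sound. For $\textsc{MCR}$, I would run a similar DP on a product graph keeping track, at each node, of the minimum number of components in which a path-compatible instance must differ from $\vx$, and then check whether some leaf of the opposite label is reachable within budget $k$. For $\textsc{MSR}$, I would track which coordinates must be fixed so that every completion still reaches a leaf of label $\M(\vx)$; again the free property allows a polynomial-time DP, or, if a direct DP is awkward, a reduction to a polynomial-size weighted reachability problem.

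For the MLP lower bounds, the plan is to rely on the fact that~MLPs with ReLU and a step output can efficiently simulate any propositional formula in CNF (an explicit construction with one hidden layer per clause suffices, which I would build as a small lemma). Given this simulation, $\textsc{CC}(\C_\text{MLP})$ is $\shp$-hard by a direct reduction from $\#\textsc{SAT}$: take the MLP $\M_\varphi$ computing $\varphi$ and the totally undefined partial instance $\vy = (\bot,\dots,\bot)$. For $\textsc{MCR}(\C_\text{MLP})$, I reduce from $\textsc{SAT}$ by constructing $\M_\varphi$ on $\varphi$ and starting from an instance $\vx$ that is rejected by default, so that an MCR-witness at distance $\leq n$ corresponds to a satisfying assignment of $\varphi$. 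For $\textsc{MSR}(\C_\text{MLP})$, I reduce from $\textsc{SAT}$ (or, if necessary for a tighter bound, from a $\Sigma_2^p$-complete problem such as $\forall\exists$-\textsc{SAT}) using gadgets that force every small sufficient reason to encode a satisfying assignment.

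The step I expect to be the main obstacle is $\textsc{MSR}$ on FBDDs. The difficulty is that a sufficient reason must simultaneously be small (minimization) and universal over completions (a $\forall$ quantifier), which for general circuits pushes the problem up to $\Sigma_2^p$; demonstrating that the free property is strong enough to collapse this to $\ptime$ requires a DP design that couples an optimization over subsets of fixed features with a guarantee of constantness across all completions, and the correctness argument must carefully use the non-repetition of variables along paths. Once this obstacle is handled, the three separations follow immediately by plugging the bounds into the c-interpretability definition.
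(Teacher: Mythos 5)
Your plan for \textsc{MCR} and \textsc{CC} matches the paper's route: both are solved in polynomial time on FBDDs by bottom-up dynamic programming over the diagram (for \textsc{CC} exactly the counting-with-$2^{\text{unused variables}}$ weighting you describe; for \textsc{MCR} a quantity $\mathrm{mcr}_u(\vx)$ computed per node), and the MLP lower bounds come from the fact that MLPs with ReLU/step activations simulate Boolean formulas, giving \np-hardness of \textsc{MCR} (the paper reduces from \textsc{VertexCover}, but a \textsc{SAT}-style reduction like yours also works) and \shp-hardness of \textsc{CC} from \textsc{\#SAT}.

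The genuine gap is your treatment of \textsc{MSR}. The step you flag as the ``main obstacle'' --- a polynomial-time DP for minimum sufficient reasons on FBDDs --- cannot be carried out: the paper proves that \textsc{MSR} is \np-complete already for decision trees, via a nontrivial reduction from the dominating-set problem on DAGs. The free property only buys you that \emph{checking} whether a given partial instance is a sufficient reason is in \ptime\ (this is the \textsc{CheckSufficientReason} query); it does not make the minimization tractable, so no correctness argument for your DP can exist unless $\ptime=\np$. Consequently the separation for \textsc{MSR} has to be established one level up: membership of $\textsc{MSR}(\C_\text{FBDD})$ in \np\ (guess a partial instance $\vy$ with $\|\vy\|\le k$ and verify sufficiency in polynomial time on the FBDD) against $\Sigma_2^p$-hardness of $\textsc{MSR}(\C_\text{MLP})$, using $\np\subsetneq\Sigma_2^p$. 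This also means your fallback of reducing from \textsc{SAT} for the MLP side is not optional but insufficient: \np-hardness of $\textsc{MSR}(\C_\text{MLP})$ cannot separate it from an \np-complete FBDD side, so you genuinely need a $\Sigma_2^p$-hard source. The paper reduces from Umans' \textsc{Shortest Implicant Core}, and the reduction is not a routine gadget: one must split every variable outside the core term into $k+1$ copies to prevent a small sufficient reason from fixing features that the implicant core is not allowed to use. Without these two pieces (NP-hardness on decision trees and the $\Sigma_2^p$-hardness construction), the \textsc{MSR} part of the theorem does not go through.
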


For the comparison between perceptrons and MLPs, 
we can establish a strict separation for MCR and~MSR , but not for CC. In fact, 
CC has the same complexity for both classes of models, which means that none of these classes strictly 
``dominates'' the other in terms of c-interpretability for~CC. 

\begin{theorem}\label{theo:perceptron-mlp}
$\C_\text{\emph{Perceptron}}$ is strictly more c-interpretable than $\C_\text{\emph{MLP}}$ 
with respect to
~\textsc{MCR} and \textsc{MSR}. In turn, the problems $\textsc{CC}(\C_\text{\emph{Perceptron}})$ and 
$\textsc{CC}(\C_\text{\emph{MLP}})$ are both complete for the same complexity class. 
\end{theorem}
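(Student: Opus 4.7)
The plan is to address the three claims separately. For $\textsc{MCR}(\C_\text{Perceptron})$, I expect a polynomial-time greedy algorithm. Given a perceptron $(\vw,b)$ and an instance $\vx$ with $\M(\vx)=1$, flipping the $i$-th coordinate shifts the pre-activation score $\langle \vw,\vx\rangle + b$ by exactly $w_i(1-2x_i)$, independently of the other flips. To change the classification one seeks a set $S$ of at most $k$ flips that drives the score below zero; since the contributions are additive and independent, an optimal $S$ consists of those coordinates whose flip produces the most negative shift, which we identify by sorting. The symmetric case $\M(\vx)=0$ is analogous. Combined with the $\np$-hardness of $\textsc{MCR}(\C_\text{MLP})$ established in Section~\ref{sec:complexity}, this yields the strict separation assuming $\ptime\subsetneq \np$.

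For $\textsc{MSR}(\C_\text{Perceptron})$, I plan to collapse the universal quantification over completions into a closed-form arithmetic condition. Fix $\vx$ with $\M(\vx)=1$ and a candidate fixed-coordinate set $S$; the hardest completion assigns each free coordinate $i\notin S$ the value in $\{0,1\}$ minimizing $w_i x'_i$, chosen independently by the sign of $w_i$. Hence $S$ supports a sufficient reason iff $\sum_{i\in S} w_i x_i + \sum_{i\notin S}\min(0,w_i) + b \geq 0$. Setting $a_i \coloneqq w_i x_i - \min(0,w_i)$, a quick case split on the sign of $w_i$ shows $a_i\geq 0$, so the condition becomes a threshold inequality $\sum_{i\in S} a_i \geq T$ with non-negative coefficients, and the smallest valid $S$ is obtained greedily by accumulating the largest $a_i$. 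The case $\M(\vx)=0$ is symmetric. Together with the lower bound for $\textsc{MSR}(\C_\text{MLP})$ from Section~\ref{sec:complexity}, this gives the required separation.

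For CC, the aim is to show both problems are $\shp$-complete, i.e.~complete for the same class so that neither dominates the other. Membership is uniform: a nondeterministic polynomial-time machine guesses a completion of $\vy$ and accepts iff the model outputs $1$, and both perceptrons and MLPs are polynomial-time evaluable. For hardness, I reduce from $\#\textsc{Knapsack}$: given $a_1,\ldots,a_n\in\mathbb{N}$ and target $c$, the count of $\vx\in\{0,1\}^n$ with $\sum_i a_i x_i \leq c$ equals the number of positive completions of the all-$\bot$ partial instance for the perceptron with weight vector $-\va$ and bias $c$. Since $\C_\text{Perceptron}\subseteq \C_\text{MLP}$ with only a constant representation blow-up, the hardness transfers to $\textsc{CC}(\C_\text{MLP})$, placing both in the same $\shp$-complete class.

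The main subtlety I anticipate is the MSR step. The greedy argument is correct precisely because the worst-case completion decomposes coordinate-wise, a property that stems from the linearity of the perceptron output; no analogous decomposition is available for MLPs, which is morally why MSR becomes harder in the general case. By contrast, the MCR argument reduces cleanly to sorting, and the CC argument combines a textbook reduction from a canonical $\shp$-complete problem with the standard nondeterministic upper bound, so neither should present serious technical difficulty beyond bookkeeping on bit lengths of rational weights.
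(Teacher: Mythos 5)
Your proposal is correct, and for MCR it coincides with the paper's argument (the shift $w_i(1-2x_i)$ is exactly the negative of the paper's ``importance'' $s(i)$, and both proofs take the top-$k$ beneficial flips after sorting). Where you genuinely diverge is in MSR and CC, and in both cases your route is somewhat leaner. For MSR over perceptrons, the paper also sorts by importance but proves optimality of the greedy prefix via an exchange argument (a swap lemma showing the worst-completion value $\psi$ never decreases when a fixed coordinate outside the prefix is traded for one inside it); you instead rewrite the sufficiency condition in closed form, observe that $a_i \coloneqq w_i x_i - \min(0,w_i)\geq 0$ in all cases, and reduce minimality to the trivial fact that a threshold inequality with non-negative coefficients is met by the fewest indices exactly when one accumulates the largest coefficients. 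This eliminates the case analysis hidden in the paper's swap claim. For CC, the paper reduces \textsc{$\#$Knapsack} to perceptron counting via complementary counting ($2^n$ minus the answer, using bias $-(k+1)$), whereas your reduction with weights $-\va$ and bias $c$ is parsimonious; and for the MLP upper/lower bound you transfer hardness through the inclusion $\C_\text{Perceptron}\subseteq\C_\text{MLP}$ (a perceptron is an MLP with $k=1$), while the paper invokes its simulation of Boolean formulas by MLPs and $\#$SAT -- both are valid, the paper's choice being driven by the fact that the simulation lemma is needed elsewhere anyway. Two small housekeeping points: you cite $\#$P-completeness of \textsc{$\#$Knapsack} as canonical, while the paper, unable to find a reference, derives it from \textsc{$\#$SubsetSum}; and in the MCR greedy you should state explicitly that only coordinates with strictly negative shift are included (at most $k$ of them), as flipping a non-beneficial coordinate is never helpful. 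Like the paper, you defer the MLP-side lower bounds (NP-hardness of MCR, $\Sigma_2^p$-hardness of MSR) to the corresponding propositions, which is consistent with how the theorem is assembled there.
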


The next result shows that, in terms of c-interpretability, the relationship between FBDDs and perceptrons is not clear, 
as each one of them is strictly more c-interpretable than the other for some explainability query.

\begin{theorem}\label{prop:perceptron-fbdd}
The problems~$\textsc{MCR}(\C_\text{\emph{FBDD}})$
and~$\textsc{MCR}(\C_\text{\emph{Perceptrons}})$ are both in~\ptime.
However,~$\C_\text{\emph{Perceptron}}$ is strictly more c-interpretable
than~$\C_\text{\emph{FBDD}}$ with respect to~\textsc{MSR},
while~$\C_\text{\emph{FBDD}}$ is strictly more c-interpretable
than~$\C_\text{\emph{Perceptron}}$ with respect to~\textsc{CC}.
\end{theorem}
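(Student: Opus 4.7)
The plan is to split the theorem into its three independent claims, proving each by combining a polynomial-time algorithm on one class with a hardness result on the other (the hardness results themselves being part of the detailed complexity analysis forthcoming in Section~\ref{sec:complexity}).

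For the first claim ($\textsc{MCR}$ in $\ptime$ for both classes), I would exhibit the algorithms directly. For $\textsc{MCR}(\C_\text{FBDD})$, given $(\M,\vx,k)$, I would do dynamic programming over the DAG: for each node $u$ compute the minimum Hamming distance $d(u)$ from $\vx$ to any instance whose computation path traverses $u$, via the recurrence $d(u)=\min_{u'\to u}\bigl(d(u')+[x_j\neq b]\bigr)$ where $j$ labels the parent $u'$ and $b$ labels the edge. The free property guarantees that every root-to-leaf path is consistent with some instance, so the answer is \textsc{Yes} iff some leaf whose label differs from $\M(\vx)$ has $d(\text{leaf})\leq k$. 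For $\textsc{MCR}(\C_\text{Perceptron})$ with $(\vw,b)$ and assuming WLOG $\M(\vx)=1$, each flip of coordinate $i$ changes the pre-activation by $\Delta_i:=w_i(1-2x_i)$, so the minimum attainable pre-activation using at most $k$ flips is realized by greedily choosing the $k$ most negative $\Delta_i$'s and checking whether the resulting value is below $0$.

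For the second claim, I would prove $\textsc{MSR}(\C_\text{Perceptron})\in\ptime$ and invoke the $\np$-hardness of $\textsc{MSR}(\C_\text{FBDD})$ from Section~\ref{sec:complexity}. A sufficient reason is determined by the set $S$ of defined coordinates; assuming $\M(\vx)=1$, the adversarial completion of $i\notin S$ sets $y_i=0$ if $w_i>0$ and $y_i=1$ if $w_i<0$. Defining $c_i:=w_ix_i-\min(0,w_i)\in\{0,|w_i|\}$, the sufficiency condition becomes $\sum_{i\in S}c_i\geq T$ for a threshold $T$ depending only on $(\vw,b)$. Since all $c_i\geq 0$, a minimum-cardinality $S$ is found by sorting the $c_i$'s in decreasing order and adding coordinates until the threshold is met. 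For the third claim, I would combine a polynomial-time algorithm for $\textsc{CC}(\C_\text{FBDD})$ with the $\shp$-hardness of $\textsc{CC}(\C_\text{Perceptron})$ from Section~\ref{sec:complexity}: the FBDD algorithm is the standard model-counting DP on the DAG (for each node $u$, count consistent completions of the partial instance $\vy$ whose induced path reaches $u$, aggregating children with multiplicative factors of $1$ or $2$ according to whether the branching variable is defined in $\vy$), and the perceptron hardness follows from a routine reduction from $\#$\textsc{Subset-Sum}, which is a single linear threshold inequality.

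The main obstacle in this plan is the $\np$-hardness of $\textsc{MSR}$ on FBDDs invoked in the second claim. The read-once restriction (no variable appearing twice along any root-to-leaf path) strongly constrains the Boolean functions an FBDD can represent, so a reduction must engineer a gadget encoding an $\np$-hard problem (e.g.\ \textsc{SetCover} or \textsc{Shortest-Implicant}) whose structure respects this restriction while still preserving the combinatorial hardness. All of the remaining pieces are either elementary greedy/DP arguments or standard $\#$\textsc{Knapsack}-style reductions.
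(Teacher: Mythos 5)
Your proposal matches the paper's approach: the theorem is established as a corollary of the query-by-query complexity analysis (Propositions~\ref{prp:k-minchange}, \ref{prp:ksuff}, \ref{prp:counting}), and your polynomial-time algorithms are equivalent to the paper's — your top-down DP on the FBDD computing $d(u)$ is the dual of the paper's bottom-up $\mathrm{mcr}_u(\vx)$ recurrence, your $\Delta_i=w_i(1-2x_i)$ and $c_i=w_ix_i-\min(0,w_i)$ are exactly the negated ``importance'' $s(i)$ and its truncation $\max(0,s(i))$ used in the paper's greedy procedures, and you correctly isolate the $\np$-hardness of $\textsc{MSR}(\C_\text{FBDD})$ as the one genuinely technical ingredient (the paper reduces from the dominating-set problem on DAGs). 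One small imprecision: reducing $\#\textsc{SubsetSum}$ directly to perceptron $\textsc{CC}$ does not work parsimoniously because a perceptron expresses a threshold \emph{inequality} rather than an equality; the paper instead reduces from $\#\textsc{Knapsack}$ (itself $\shp$-hard via a two-query difference from $\#\textsc{SubsetSum}$), and you should follow the same route.
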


We prove these results in the next section, where for each query~$Q$ and class of models~$\mathcal{C}$ we pinpoint the exact complexity of the problem~$Q(\mathcal{C})$.

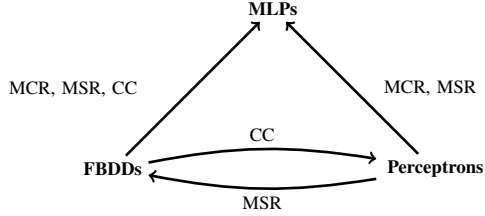
\begin{figure}[H]
	\centering
\scalebox{0.71}{
\begin{tikzpicture}
\begin{scope}[every node/.style={}]
	\node at (0, 0) (mlp) {{\bf MLPs}};
	\node at (-3, -3) (fbdd) {{\bf FBDDs}};
	\node at (3, -3) (p) {{ \bf Perceptrons}};
\end{scope}
\begin{scope}
	\draw[->, line width=.5mm] (p) edge[right] node[text width=3.0cm, right=1.3em] {MCR,  MSR} (mlp);
	\draw[->, line width=.5mm] (fbdd) edge[left] node[text width=3.3cm] {MCR, MSR, CC} (mlp); 
	\draw[->, line width=.5mm] (fbdd) edge[above, bend left=10] node {CC} (p);
	\draw[<-, line width=.5mm] (fbdd) edge[below, bend right=10] node {MSR} (p);  
  
\end{scope}
\end{tikzpicture}
}
	\caption{Illustration of the main interpretability results. Arrows depict that the pointed class of models is harder with respect to the query that labels the edge. We omit labels (or arrows) when a problem is complete for the same complexity class for two classes of models.}
	\label{fig:main-results}
\end{figure}

\section{The complexity of explainability queries}
\label{sec:complexity}
\begin{table}[h]
\begin{center}
\small
\begin{tabular}{lccc}
\toprule
                          & \textbf{FBDDs} & \textbf{Perceptrons} & \textbf{MLPs} \\
\midrule
\textsc{MinimumChangeRequired}   &   \ptime & \ptime     & $\textrm{NP}$-complete \\ 
\textsc{MinimumSufficientReason}  &  $\textrm{NP}$-complete &   \ptime   & $\Sigma_2^p$-complete \\ 
\textsc{CheckSufficientReason}  &   \ptime     &\ptime   & coNP-complete      \\ 
\textsc{CountCompletions}   & \ptime   & \shp-complete     & \shp-complete \\ 
\bottomrule
\end{tabular}
\end{center}
\caption{Summary of our complexity results.}
\label{table:results}
\end{table}

In this section we present our technical complexity results proving
Theorems~\ref{theo:fbdd-mlp},~\ref{theo:perceptron-mlp}, and \ref{prop:perceptron-fbdd}.
We divide our results in terms of the queries
that we consider.  We also present a few other complexity results that we find
interesting on their own.
A summary of the results is shown in Table~\ref{table:results}.
With the exception of Proposition~\ref{prp:ksuff}, items (1) and (3), the proofs for this section are relatively routine, were already known or follow from known techniques.
As mentioned in the introduction, 
we only present the main ideas of some of the proofs in the body of the paper, and a detailed exposition of each result can be found in the appendix.

\subsection{The complexity of \textsc{MinimumChangeRequired}}
\label{ssec:mcr}

In what follows we determine the complexity of the \textsc{MinimumChangeRequired} problem for the three classes of models that we consider.
\begin{restatable}{proposition}{kminchange}
\label{prp:k-minchange}
The \textsc{MinimumChangeRequired} query is (1) in $\ptime$ for FBDDs, (2) in $\ptime$ for perceptrons, and (3) $\np$-complete for MLPs.
\end{restatable}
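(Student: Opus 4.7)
My plan splits by the three items, each using a different technique. For \textbf{item (1)}, FBDDs, I would run a bottom-up dynamic programming pass over the diagram. For every node $u$ and every target label $b\in\{0,1\}$, let $c(u,b)$ be the minimum number of coordinates of $\vx$ one must flip so that the computation path starting at $u$ reaches a leaf labeled $b$. Base case: if $u$ is a leaf then $c(u,b)=0$ when $u$ itself carries label $b$ and $+\infty$ otherwise. If $u$ is an internal node labeled by variable $i$, with $0$-child $u_0$ and $1$-child $u_1$, the recurrence is
\[
c(u,b) \;=\; \min\bigl(c(u_0,b)+[x_i\neq 0],\; c(u_1,b)+[x_i\neq 1]\bigr),
\]
where $[\cdot]$ is the Iverson bracket. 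Because the diagram is \emph{free}, no variable repeats along any root-to-leaf path, so variables not on the chosen path may be left equal to those of $\vx$ at no additional cost, which is precisely what makes the recurrence exact. The algorithm returns \textsc{Yes} iff $c(\text{root},\,1-\M(\vx))\leq k$ and runs in time linear in $|\M|$.

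For \textbf{item (2)}, perceptrons, I would use a greedy argument on the single linear threshold. Writing $s\coloneqq\langle\vx,\vw\rangle+b$, flipping the $i$-th input changes $s$ by exactly $\Delta_i\coloneqq(1-2x_i)w_i$. If $\M(\vx)=1$ we need a set $S\subseteq\{1,\ldots,n\}$ with $|S|\leq k$ minimizing $\sum_{i\in S}\Delta_i$; the optimum is obtained by sorting the $\Delta_i$ in increasing order and keeping the prefix of length $\min(k,\,|\{i:\Delta_i<0\}|)$. The case $\M(\vx)=0$ is fully symmetric (maximize the sum, keep positive $\Delta_i$). A sort followed by one pass solves the problem in polynomial time, and also exhibits a minimum-size flip set.

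For \textbf{item (3)}, MLPs, NP-membership is immediate: nondeterministically guess $\vy$ with $\dist(\vx,\vy)\leq k$ and evaluate $\M(\vy)$ in polynomial time. For NP-hardness I would reduce from \textsc{SAT}. Given a CNF $\phi(x_1,\ldots,x_n)=\bigwedge_{j=1}^m C_j$, I would build a two-layer ReLU+step MLP $\M_\phi$ on $n+1$ inputs that computes $\phi(x_1,\ldots,x_n)\wedge x_{n+1}$. Each clause receives one hidden unit $h_j=\relu(1-L_j)$, where $L_j$ is the affine form whose value equals the number of satisfied literals of $C_j$; thus $h_j\in\{0,1\}$ exactly flags ``$C_j$ is violated''. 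The output unit then returns $\step\!\left(x_{n+1}-1-\sum_{j=1}^m h_j\right)$, which equals $1$ iff every clause is satisfied and $x_{n+1}=1$. Hence $|\M_\phi|$ is polynomial in $|\phi|$. Taking $\vx=(0,\ldots,0)$ and $k=n+1$ yields $\M_\phi(\vx)=0$, and there exists $\vy$ with $\dist(\vx,\vy)\leq n+1$ and $\M_\phi(\vy)=1$ iff $\phi$ is satisfiable.

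The main obstacle I anticipate is in item (3): one must craft the ReLU-plus-step realization above so as to match the paper's convention that only internal activations are ReLU while the final activation is the binary step, which is what forces the two-layer design described. The FBDD argument requires only the careful observation that freeness is exactly what lets off-path variables contribute zero; the perceptron greedy is entirely standard.
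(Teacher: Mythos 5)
Your items (1) and (2) follow essentially the same route as the paper: the FBDD argument is the same bottom-up recurrence (your $c(u,b)$ with $c(\text{root},1-\M(\vx))$ is the paper's $\mathrm{mcr}_u(\vx)$, and your observation that freeness is what makes the branch choices realizable by an actual instance is exactly the point the paper proves), and your perceptron greedy is the paper's importance-based greedy with $\Delta_i=-s(i)$, sorted and truncated at the negative prefix. For item (3) the membership argument coincides, but your hardness proof takes a genuinely different route: the paper reduces from \textsc{VertexCover}, builds the monotone formula $\bigwedge_{(u,v)\in E}(x_u\vee x_v)$, and invokes its generic circuit-to-MLP simulation lemma with $\vx=0^n$ and the cover size as $k$; you instead reduce directly from \textsc{SAT}, hand-craft a two-layer network with clause-violation units $h_j=\relu(1-L_j)$, neutralize $k$ by taking $k=n+1$, and use a fresh conjunct $x_{n+1}$ only to force $\M_\phi(0^{n+1})=0$. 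Both reductions are sound; yours is more self-contained (no simulation lemma needed) and makes the budget $k$ irrelevant, while the paper's keeps $k$ meaningful (it equals the cover size) and reuses machinery needed elsewhere in the paper. One small repair you should make: your output unit $\step\bigl(x_{n+1}-1-\sum_j h_j\bigr)$ reads the raw input $x_{n+1}$ at the second layer, which violates the paper's strictly layered MLP definition (each layer sees only the preceding one); route $x_{n+1}$ through an extra hidden unit $\relu(x_{n+1})$, which is the identity on Boolean inputs, and the construction conforms. Also state explicitly the final acceptance test in item (2), namely that you answer \textsc{Yes} iff $s$ plus the selected prefix sum crosses the threshold; this is implicit in your write-up and is the step the paper proves via its exchange claim.
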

\begin{proofsketch}
This query has been shown to be solvable in \ptime~for \emph{ordered} binary decision diagrams (OBDDs, a restricted form of FBDDs) by Shih et al.~\cite[Theorem 6]{shih2018formal} (the query is called \emph{robusteness} in the work of Shih et al.~\cite{shih2018formal}). 
We show that the same proof applies to FBDDs.
Recall that in an FBDD every internal node is labeled with a feature index in~$\{1,\ldots,n\}$.
The main idea is to compute a quantity~$\mathrm{mcr}_u(\vx) \in \mathbb{N}\cup \{\infty\}$ for every node~$u$ of the FBDD~$\M$.
This quantity 
represents the minimum number of features that we need to flip in~$\vx$ to 
modify the classification~$\M(\vx)$ 
if we are only allowed to change features associated with the paths from~$u$ to some leaf in the FBDD.
One can easily compute these values by processing the FBDD bottom-up.
Then the minimum change required for~$\vx$ is the value~$\mathrm{mcr}_r(\vx)$ where~$r$ is the root of~$\M$,
and thus we simply return \textsc{Yes} if~$\mathrm{mcr}_r(\vx)\leq k$, and~\textsc{No} otherwise.

For the case of a perceptron~$\M=(\vw,b)$ and of an instance~$\vx$, let us assume without loss of generality that~$\M(\vx)=1$.
We first define the \emph{importance}~$s(i) \in \mathbb{Q}$ of every input feature at position~$i$ as follows: if~$x_i =1$ then~$s(i) \coloneqq w_i$, and if~$x_i=0$ then~$s(i) \coloneqq -w_i$. Consider now the set~$S$ that contains the top~$k$
most important input features for which~$s(i) > 0$. We can easily show that it is enough to check whether flipping every feature in~$S$ 
changes 
the classification of~$\vx$, in which case we return~\textsc{Yes}, and return~\textsc{No} otherwise.

Finally, \np~membership of MCR for MLPs is clear: guess a partial instance~$\vy$ with~$\dist(\vx, \vy) \leq k$ and check in polynomial time that~$\M(\vx) \neq \M(\vy)$. We prove hardness with a simple reduction from the~\textsc{VertexCover} problem for graphs, which is known to be \np-complete. 
\end{proofsketch}

Notice that this result immediately yields Theorems~\ref{theo:fbdd-mlp}, \ref{theo:perceptron-mlp}, and  \ref{prop:perceptron-fbdd} for the case of MCR. 

\subsection{The complexity of \textsc{MinimumSufficientReason}}

We now study the complexity of \textsc{MinimumSufficientReason}. 
The following result yields Theorems~\ref{theo:fbdd-mlp}, \ref{theo:perceptron-mlp}, and  \ref{prop:perceptron-fbdd} for the case of MSR. 

\begin{restatable}{proposition}{ksuff}
\label{prp:ksuff}
The \textsc{MinimumSufficientReason} query is (1) $\np$-complete for FBDDs (and hardness holds already for decision trees), (2) in~$\ptime$ for perceptrons, and (3) $\Sigma_2^p$-complete for MLPs.
\end{restatable}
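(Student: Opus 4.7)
The plan handles each of the three items separately.

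\textbf{Item (2), perceptrons.} Given a perceptron $(\vw,b)$ with $\M(\vx^*)=1$ (the case $\M(\vx^*)=0$ is symmetric), a partial instance $\vy$ agreeing with $\vx^*$ is a sufficient reason iff the adversarial completion---which sets each unfixed $i$ with $w_i>0$ to $0$ and each unfixed $i$ with $w_i<0$ to $1$---still satisfies $\langle \vx, \vw \rangle + b \geq 0$. Writing out the resulting constraint, fixing feature $i$ contributes a nonnegative ``gain'' of $|w_i|$ precisely when $\vx^*_i$ agrees with the sign of $w_i$ (i.e., $\vx^*_i=1$ and $w_i>0$, or $\vx^*_i=0$ and $w_i<0$), and zero otherwise. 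The minimum sufficient reason thus reduces to picking the fewest features whose gains sum to at least a fixed threshold, which is solved greedily in polynomial time by sorting gains in decreasing order and taking the shortest prefix that exceeds the threshold.

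\textbf{Item (1), FBDDs.} Membership in $\np$ is immediate: guess $\vy$ with $\|\vy\|\leq k$ and verify in polynomial time that the FBDD $\M$ restricted by $\vy$ is a constant function equal to $\M(\vx^*)$, via a bottom-up pass that labels each node as ``reaches only positive / only negative / both'' leaves. For hardness already at decision trees, I plan to reduce from \textsc{HittingSet}. Given universe $[n]$ and sets $S_1,\ldots,S_m$, create $n+1$ copies of each set to obtain $m'=m(n+1)$ sets, and build a decision tree with features $\{x_i\}_{i\in[n]}$ plus selectors $\{y_j^{(l)}\}_{j\in[m],\,l\in[n+1]}$. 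The tree is a chain testing the $y_j^{(l)}$'s in a fixed order; at each $y_j^{(l)}=0$ branch it enters a small gadget that tests the elements of $S_j$ in sequence, reaching a negative leaf only if all of them are $0$. Take $\vx^*=(1,\ldots,1)$; a sufficient reason must block every negative path by fixing either the relevant $y_j^{(l)}$ or some $x_i\in S_j$ (the latter simultaneously blocking all $n+1$ copies of $S_j$'s paths). The $(n+1)$-fold scaling guarantees that missing even a single original set $S_j$ costs at least $n+1$, which exceeds any genuine hitting set size (at most $n$); hence the minimum SR size equals the minimum hitting set size, yielding $\np$-hardness. Because each feature appears in at most one branch, the construction is a legitimate decision tree.

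\textbf{Item (3), MLPs.} Membership in $\Sigma_2^p$ is by the canonical $\exists\forall$ pattern: existentially guess $\vy$ with $\|\vy\|\leq k$, then universally verify that every completion classifies as $\vx^*$ does, using polynomial-time MLP evaluation inside a coNP oracle. For hardness, I plan to reduce from Umans' $\Sigma_2^p$-complete \textsc{ShortestImplicant} problem on DNFs. Any DNF $\psi=\bigvee_j C_j$ can be computed by a polynomial-size ReLU MLP with a single hidden layer: the hidden unit for clause $C_j$ outputs $\relu(\sum_{i\in C_j^+}x_i+\sum_{i\in C_j^-}(1-x_i)-|C_j|+1)$, which equals $1$ exactly when $C_j$ is satisfied, and the output step unit computes the OR of the hidden activations. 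A designated positive instance $\alpha$ of $\psi$ then plays the role of $\vx^*$, so that short implicants of $\psi$ containing $\alpha$ correspond exactly to small sufficient reasons of $\alpha$ with respect to the simulating MLP. The main obstacle is that Umans' problem asks for an unrestricted shortest implicant; this is handled by a standard padding gadget that introduces a small number of fresh variables forced into every short implicant, thereby pinning down the witness $\alpha$ and transferring the hardness cleanly to the MSR formulation.
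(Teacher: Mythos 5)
Your items (1) and (2) are sound. For perceptrons your greedy argument is essentially the paper's (the paper sorts by an ``importance'' score and proves the prefix optimal via an exchange lemma; your ``gain'' reformulation is the same computation). For FBDDs your reduction is genuinely different from the paper's: the paper reduces from dominating set on DAGs and reuses the graph's vertices as both chain and gadget variables, needing a topological order to keep the tree read-once, whereas you reduce from \textsc{HittingSet} with fresh selector variables on the chain and make selectors expensive by duplicating each set $n+1$ times. Your construction is read-once, the blocking analysis (each negative path for copy $(j,l)$ is killed only by fixing $y_j^{(l)}$ or some $x_i\in S_j$, never by fixing earlier chain variables to $1$) is correct, and the $(n+1)$-scaling does force the $x$-part of any small sufficient reason to be a hitting set; modulo the trivial preprocessing $k\le n$ and nonempty sets, this works and is arguably simpler than the paper's route.

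Item (3) is where there is a genuine gap, and it sits exactly at the step you wave away. The equivalence you do have is: sufficient reasons of $\vx^*=\alpha$ with respect to the simulating MLP are precisely the implicants of $\psi$ that are \emph{consistent with} $\alpha$. But Umans' \textsc{ShortestImplicant} asks for an implicant of arbitrary polarity pattern, and you cannot fix $\alpha$ in advance because you do not know which assignment a short implicant will be consistent with. No ``padding gadget that forces fresh variables into every short implicant'' addresses this: forcing extra literals into the implicant says nothing about the polarities of the original variables relative to a fixed $\alpha$. (The natural attempts fail concretely: e.g.\ a dual-rail encoding with $\alpha=1^{2n}$ collapses minimum sufficient reasons to minimum \emph{term} size, since the adversarial completion sets all unfixed rails to $0$.) The paper avoids this by reducing from Umans' \textsc{Shortest Implicant Core}, where the implicant is required to lie inside a designated term $t_n$, so a completion of $t_n$ can serve as $\vx^*$ — and even then a second obstacle remains that your sketch does not mention: a small sufficient reason for that completion may exploit features \emph{outside} $t_n$ (the paper gives an explicit counterexample), which is repaired by splitting every non-core variable into $k+1$ copies so that no sufficient reason of size $\le k$ can rely on them. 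Your proposal supplies neither the correct source problem nor this duplication argument, so the $\Sigma_2^p$-hardness claim for MLPs is not established as written.
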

\begin{proofsketch}
Membership of the problem in the respective classes is easy. We show \np-completeness of the problem for FBDDs by a nontrivial reduction from the 
\np-complete problem of determining whether a directed acyclic graph has a dominating set of size at most~$k$~\cite{King_2005}. 
For a perceptron~$\M=(\vw,b)$ and an instance~$\vx$, assume without loss of generality that~$\M(\vx)=1$. 
As in the proof of Proposition~\ref{prp:k-minchange}, we consider the importance of every component of $\vx$, and prove that it is enough to check whether the $k$ most important features of $\vx$ are a sufficient reason for it, in which case we return~\textsc{Yes}, and simply return~\textsc{No} otherwise.
Finally, the~$\Sigma_2^p$-completeness for~MLPs is obtained again using a technical reduction from the problem called \textsc{Shortest Implicant Core}, defined and shown to be~$\Sigma_2^p$-complete by Umans~\cite{Umans2001}.
\end{proofsketch}

To refine our analysis, we also consider the natural problem of \emph{checking} if a given partial instance is a sufficient reason for an instance.

\begin{center}
\fbox{\begin{tabular}{rl}
Problem: & \textsc{CheckSufficientReason (CSR)} \\
Input: & Model $\mathcal{M}$, instance~$\vx$ and a partial instance $\vy$ \\
Output: & \textsc{Yes}, if $\vy$ is a sufficient reason for~$\vx$ wrt. $\M$, and \textsc{No} otherwise
\end{tabular}}
\end{center}

We obtain the following (easy) result. 

\begin{restatable}{proposition}{checksuff}
\label{prp:checksuff}
The query \textsc{CheckSufficientReason} is (1) in~$\ptime$ for FBDDs, (2)~in~$\ptime$ for perceptrons, and (3) $\conp$-complete for MLPs.
\end{restatable}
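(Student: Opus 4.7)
The plan is to prove each case separately: polynomial-time algorithms for FBDDs and perceptrons, and matching membership/hardness bounds for MLPs. In all three cases, I would first check the trivial precondition that $\vx$ is a completion of $\vy$ (otherwise return \textsc{No}), and then, writing $c \coloneqq \M(\vx)$, reduce the task to deciding whether some completion of $\vy$ produces the opposite label $1-c$.

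For FBDDs I plan to use a bottom-up dynamic program. For each node~$u$ of the FBDD, I would define $R(u)$ as the Boolean value indicating that some assignment agreeing with $\vy$ on its defined coordinates can reach a leaf labeled $1-c$ when the traversal starts at $u$. This is read off leaf labels directly, and at an internal node $u$ testing feature $i$ with $0$- and $1$-children $u_0, u_1$, it satisfies $R(u) = R(u_{y_i})$ when $y_i \in \{0,1\}$, and $R(u) = R(u_0) \vee R(u_1)$ when $y_i = \bot$. Because the diagram is free, no root-to-leaf path tests any variable twice, so the subfunction computed at~$u$ depends only on variables in the subgraph rooted at~$u$; this legitimizes the disjunctive step, since completions of $\vy$ can be chosen independently on each side. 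The final answer is $R(r) = 0$ at the root $r$, computable in time linear in $|\M|$.

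For perceptrons $\M = (\vw, b)$, I would reduce sufficiency to a separable linear optimization: when $\M(\vx) = 1$, the partial instance $\vy$ is a sufficient reason iff $\min_{\vx'}\bigl(\langle \vw, \vx'\rangle + b\bigr) \geq 0$ over completions $\vx'$ of $\vy$ (with a strict inequality on the maximum in the symmetric case $\M(\vx) = 0$). The minimum decouples coordinate-by-coordinate: a defined coordinate contributes $w_i y_i$, while an undefined coordinate contributes $\min(0, w_i)$. Summing and comparing to $0$ is clearly polynomial.

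For MLPs, membership in \conp{} is immediate by guessing a completion $\vx'$ of $\vy$ and verifying in polynomial time that $\M(\vx') \neq \M(\vx)$. For \conp-hardness, I plan to reduce from \textsc{Unsat}: given a CNF $\varphi$ over $x_1, \ldots, x_n$, I would form $\psi \coloneqq \varphi \wedge x_{n+1}$ so that $\psi$ is equisatisfiable with $\varphi$ and $\psi(\mathbf{0}) = 0$, then build in polynomial time an MLP $\M_\psi$ computing $\psi$ using the standard constant-size ReLU gadgets for $\vee$, $\wedge$, $\neg$, together with the binary step at the output. Setting $\vx \coloneqq \mathbf{0}$ and $\vy \coloneqq \bot^{n+1}$, we get $\M_\psi(\vx) = 0$, and $\vy$ is a sufficient reason for $\vx$ iff $\M_\psi$ is identically zero, iff $\varphi$ is unsatisfiable. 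The one step that really demands care is the soundness of the FBDD recurrence on the non-tree structure, where freeness is what rules out a variable being retested further down the traversal; the remaining pieces are direct.
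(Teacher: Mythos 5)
Your proof is correct and follows essentially the same route as the paper's: a linear‑time pass over the FBDD restricted by $\vy$ (the paper phrases this as deleting the edges inconsistent with $\vy$ and checking which leaves remain reachable, you phrase it as the equivalent bottom‑up DP, with freeness justifying it in both cases), coordinate‑wise min/max for the perceptron, and a trivial guess‑and‑verify for \conp\ membership together with a reduction from a \conp‑complete satisfiability variant for hardness. The only cosmetic deviation is that you reduce from \textsc{Unsat} with a dummy conjunct to force $\M_\psi(\mathbf{0})=0$, whereas the paper reduces from \textsc{Taut} and handles the label of the chosen $\vx$ by an explicit preliminary check; both are sound and of the same difficulty.
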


We note that this result for FBDDs already appears in~\cite{darwiche2002knowledge} (under the name of \emph{implicant check}).
Interestingly, we observe that this new query maintains the comparisons in terms of c-interpretability, in the sense that~$\C_\text{FBDD}$ and~$\C_\text{Perceptron}$ are
strictly more c-interpretable than $\C_\text{MLP}$ with respect to~CSR.

\subsection{The complexity of \textsc{CountCompletions}}

What follows is our main complexity result regarding the query \textsc{CountCompletions},
which yields Theorems~\ref{theo:fbdd-mlp}, \ref{theo:perceptron-mlp}, and  \ref{prop:perceptron-fbdd} for the case of CC. 
\begin{restatable}{proposition}{counting}
\label{prp:counting}
The query \textsc{CountCompletions} is (1) in $\ptime$ for FBDDs, (2) \shp-complete for perceptrons, and (3) \shp-complete for MLPs.
\end{restatable}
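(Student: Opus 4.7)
The plan is to handle the three items in order, noting first that \shp-membership for all three classes is straightforward and that the hardness in item~(3) will follow from the hardness in item~(2) by treating a perceptron as an MLP with $k=1$.

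For item~(1), I would use dynamic programming on the FBDD that exploits the freeness property. Given an FBDD $\M$ with root $r$ and a partial instance $\vy$ with undefined set $U \coloneqq \{i : y_i = \bot\}$, I compute bottom-up, for every node $u$, a rational value $c(u)$ equal to the number of completions $\vx$ of $\vy$ for which evaluation starting at $u$ with input $\vx$ reaches a $\true$-leaf. The base cases are $c(u) \coloneqq 2^{|U|}$ for a $\true$-leaf and $c(u) \coloneqq 0$ for a $\false$-leaf. For an internal node $u$ labelled $i$ with children $u_0, u_1$ on the $0$- and $1$-edges respectively, set $c(u) \coloneqq c(u_b)$ when $y_i = b \in \{0,1\}$, and $c(u) \coloneqq \tfrac{1}{2}(c(u_0) + c(u_1))$ when $y_i = \bot$. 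Freeness is essential in the halving rule: since variable $i$ is not queried in either subtree, exactly half of the completions reaching $u_b$ actually have $x_i = b$. All rationals arising have denominators dividing $2^n$, so they admit polynomial bit-size representations, and the overall procedure runs in polynomial time, returning $c(r)$.

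For items~(2) and~(3), \shp-membership is immediate: a polynomial-time nondeterministic machine guesses a completion $\vx$ of $\vy$ and accepts iff $\M(\vx)=1$, and the number of accepting computations equals the number of positive completions (evaluating a perceptron or an MLP on a fixed input takes polynomial time). For \shp-hardness of $\textsc{CC}(\C_\text{Perceptron})$, I would reduce from the classical \shp-complete problem \textsc{\#Knapsack}: given positive integers $a_1,\dots,a_n$ and a threshold $t$, count the $\vx \in \{0,1\}^n$ with $\sum_i a_i x_i \leq t$. Build the perceptron $\M \coloneqq (-\va, t)$, so that $\M(\vx) = \step(t - \langle \va,\vx\rangle) = 1$ exactly when $\langle \va, \vx\rangle \leq t$, and use the fully undefined partial instance $\vy \coloneqq (\bot,\dots,\bot)$; the answer to $\textsc{CC}$ on $(\M, \vy)$ is then precisely the number of knapsack solutions, and the reduction is polynomial. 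Item~(3) follows at once because every perceptron is syntactically a $1$-layer MLP, so the very same reduction witnesses \shp-hardness for $\C_\text{MLP}$.

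The only genuinely delicate step is verifying correctness of the FBDD dynamic program, where the freeness condition is exactly what makes the halving rule sound; an analogous attempt for general (non-free) BDDs would break down because a variable could be re-queried in a subtree and the $1/2$-factor would no longer be valid. Everything else amounts to standard guess-and-check arguments for \shp-membership and a textbook reduction from counting-knapsack solutions.
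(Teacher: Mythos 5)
Your proposal is correct, and for items (1) and (2) it is essentially the paper's argument: (1) is the standard bottom-up count on the FBDD (the paper simply cites it as a well-known consequence of freeness, exactly the halving/soundness point you spell out), and (2) is a reduction from \textsc{\#Knapsack}, just packaged slightly differently --- the paper uses the perceptron with weights $s_1,\ldots,s_n$ and bias $-(k+1)$ and recovers the knapsack count as $2^n$ minus the number of positive completions, whereas your perceptron $(-\va,t)$ gives a parsimonious reduction; both are fine, and your reliance on \textsc{\#Knapsack} being \shp-complete matches the paper, which only includes a short proof of that folklore fact for lack of a reference. Where you genuinely diverge is item (3): you inherit \shp-hardness for $\C_\text{MLP}$ from the perceptron case, using that a perceptron is syntactically an MLP with $k=1$, which is a perfectly valid and more economical argument for the proposition as stated. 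The paper instead proves (3) by simulating arbitrary Boolean formulas with ReLU MLPs (Lemma~\ref{lem:circuits-to-MLPs}) and reducing from \textsc{\#SAT}. The payoff of the paper's heavier route is that the resulting hard instances have small (even unary) integer weights, which is what later separates MLPs from perceptrons around Proposition~\ref{prp:pseudo} (pseudo-polynomial tractability and the FPRAS for perceptrons versus continued hardness and non-approximability for MLPs); your knapsack-based hardness would not yield those refinements, since \textsc{\#Knapsack} becomes tractable with unary numbers, but none of that is required for the statement you were asked to prove.
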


\begin{proofsketch}
Claim (1) is a a well-known fact that is a direct consequence of the definition of FBDDs; indeed, we can easily compute by bottom-up induction of the FBDD a quantity representing for each node the number of positive completions of the sub-FBDD rooted at that node (e.g.,
see~\cite{darwiche2002knowledge,wegener2004bdds}). We prove (2) by showing a reduction from
the \shp-complete problem \textsc{$\#$Knapsack}, i.e., counting the number of solutions to
a~$0/1$ knapsack input.\footnote{Recall that such an input consists of natural numbers (given in binary)
$s_1,\ldots,s_n, k \in \mathbb{N}$, and a solution to it is a set $S \subseteq
\{1,\ldots,n\}$ with $\sum_{i \in S} s_i \leq k$.}
For the last
claim, we show that~MLPs with ReLU activations can simulate arbitrary Boolean formulas, 
which allows us to directly conclude~(3) since counting the number of
satisfying assignments of a Boolean formula is \shp-complete.
\end{proofsketch}

\paragraph{Comparing perceptrons and MLPs.}
Although the query \textsc{CountCompletions} is \shp-complete for perceptrons, we can still show that the complexity goes down to \ptime~if we assume the weights and biases to be integers given in unary; this is commonly called \emph{pseudo-polynomial time}.
\begin{restatable}{proposition}{pseudo}
\label{prp:pseudo}
The query \textsc{CountCompletions} can be solved in pseudo-polynomial time for perceptrons (assuming the weights and biases to be integers given in unary).
\end{restatable}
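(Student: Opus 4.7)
The plan is to give a standard dynamic-programming algorithm of the knapsack type, run in pseudo-polynomial time.  Fix a perceptron $\M=(\vw,b)$ with $\vw\in\mathbb{Z}^n$ and $b\in\mathbb{Z}$ given in unary, and fix a partial instance $\vy\in\{0,1,\bot\}^n$.  I would first split the coordinates into the set $F=\{i:\vy_i\neq\bot\}$ of fixed positions and the set $I=\{1,\ldots,n\}\setminus F$ of free positions, and let $c\coloneqq b+\sum_{i\in F,\,\vy_i=1} w_i$.  Writing $m\coloneqq|I|$ and enumerating $I=\{i_1,\ldots,i_m\}$, the quantity to compute is
\[
\#\!\left\{S\subseteq I\ :\ \textstyle\sum_{i\in S} w_i\ \geq\ -c\right\}.
\]

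The key step is the DP.  Let $W_+\coloneqq\sum_{i\in I,\,w_i>0} w_i$ and $W_-\coloneqq\sum_{i\in I,\,w_i<0} w_i$, so every partial sum $\sum_{i\in S}w_i$ lies in the interval $[W_-,W_+]$.  For $j\in\{0,1,\ldots,m\}$ and $s\in\{W_-,W_-+1,\ldots,W_+\}$, I would define $T[j,s]$ as the number of subsets $S\subseteq\{i_1,\ldots,i_j\}$ with $\sum_{i\in S}w_i=s$, with base case $T[0,0]=1$ and $T[0,s]=0$ for $s\neq 0$, and recursion $T[j,s]=T[j-1,s]+T[j-1,s-w_{i_j}]$ (treating out-of-range entries as $0$).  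The output is then $\sum_{s\geq -c} T[m,s]$.  Correctness is an immediate induction on $j$.

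For the running time, the table has at most $(m+1)\cdot(W_+-W_-+1)$ entries, and each is filled in constant arithmetic operations on integers that are bounded by $2^m$ (so of bit-length $O(m)$).  Since the weights are given in unary, $W_+-W_-\leq\sum_{i=1}^n|w_i|$ is polynomial in the input size, and every arithmetic step runs in polynomial time as well, giving a total pseudo-polynomial running time.

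There is no serious obstacle; the only subtlety is the presence of \emph{negative} weights, which prevents the most textbook formulation of the knapsack DP from going through unchanged.  I would handle this either by indexing the DP over the shifted range $[0,W_+-W_-]$, or equivalently by the variable-flip trick that replaces $x_i$ by $1-x_i'$ for every $i\in I$ with $w_i<0$ (absorbing the constant shift into $c$); both options reduce the computation to the standard nonnegative-weight counting knapsack, which is known to admit a pseudo-polynomial algorithm by exactly the DP above.
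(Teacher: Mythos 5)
Your proposal is correct and follows essentially the same route as the paper: fix the components defined by the partial instance (absorbing their contribution into the bias), then run a knapsack-style counting DP over the free components, handling negative weights by either shifting the index range or the variable-flip trick. The only organizational difference is that the paper factors the reduction from a perceptron-plus-partial-instance to a \textsc{\#Knapsack} input with nonnegative unary weights into a separate lemma (using exactly the flip $h(x'_i)=1-x'_i$ on positions with $w'_i\geq 0$ that you mention as an alternative), and then invokes the textbook DP on the resulting instance; this factoring is reused later to obtain the FPRAS for perceptrons, whereas your in-line DP over the range $[W_-,W_+]$ proves the present proposition just as well.
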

\begin{proofsketch}
This is proved by first reducing the problem to \textsc{$\#$Knapsack}, and then using a classical dynamic
programming algorithm to solve \textsc{$\#$Knapsack} in pseudo-polynomial time.
\end{proofsketch}

This result establishes a difference between perceptrons and MLPs in terms of CC, as this query remains \shp-complete for the latter
even if weights and biases are given as integers in unary. Another difference is established by the fact that \textsc{CountCompletions} for perceptrons can be 
efficiently approximated, while this is not the case for MLPs. To present this idea, we briefly recall the notion 
of 
{\em fully polynomial randomized
approximation scheme} (FPRAS~\cite{jerrum1986random}), which 
is heavily used to refine the analysis of the complexity of \shp-hard problems. 
Intuitively, an FPRAS is a polynomial time algorithm that computes with high probability a
$(1-\epsilon)$-multiplicative approximation of the exact solution, for
$\epsilon > 0$, in polynomial time in the size of the input and in the parameter
$1 / \epsilon$. 
We show:

\begin{restatable}{proposition}{approx}
\label{prp:approx}
The problem \textsc{CountCompletions} restricted to perceptrons
admits an FPRAS (and the use of randomness is not even needed in this case). This is not the case for MLPs, on the other hand, at least under standard 
complexity assumptions. 
\end{restatable}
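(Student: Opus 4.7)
The plan is to handle the positive (perceptron) and negative (MLP) parts of the statement separately.

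For perceptrons, I would revisit the reduction from \textsc{CountCompletions} on perceptrons to \textsc{$\#$Knapsack} already used in the proof of Proposition~\ref{prp:counting}(2). That reduction is parsimonious, so any approximation scheme for \textsc{$\#$Knapsack} transfers immediately to \textsc{CountCompletions} on perceptrons. I would then invoke a classical \emph{deterministic} fully polynomial-time approximation scheme for \textsc{$\#$Knapsack} -- for instance that of Gopalan, Klivans and Meka, or of \v{S}tefankovi\v{c}, Vempala and Vigoda -- to conclude that \textsc{CountCompletions} on $\C_{\text{Perceptron}}$ admits a deterministic FPTAS, which is in particular an FPRAS for which randomness is not needed.

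For the negative side, the plan is to parsimoniously reduce \textsc{$\#$SAT} to \textsc{CountCompletions} on MLPs. Exploiting the simulation of arbitrary Boolean formulas by ReLU MLPs from the proof of Proposition~\ref{prp:counting}(3), given a Boolean formula $\varphi$ on $n$ variables I would build in polynomial time an MLP $\M_\varphi$ computing $\varphi$, and take the fully undefined partial instance $\vy = (\bot, \ldots, \bot)$, so that the number of positive completions of $\vy$ with respect to $\M_\varphi$ equals the number of satisfying assignments of $\varphi$. Assume toward contradiction that an FPRAS existed for \textsc{CountCompletions} on MLPs; running it on $(\M_\varphi, \vy)$ with, say, $\epsilon = 1/2$ would return, with probability at least $3/4$, a value within a multiplicative $(1 \pm 1/2)$ factor of the true count. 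Such a value is strictly positive if and only if $\varphi$ is satisfiable, which would give an \rptime\ algorithm for \textsc{SAT} and hence $\np \subseteq \rptime$, contradicting the standard complexity-theoretic assumption $\np \not\subseteq \rptime$.

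The main obstacle I anticipate is essentially bookkeeping: one has to make sure that both reductions involved are \emph{parsimonious} (exactly count-preserving), so that multiplicative approximation guarantees transfer between the source and target counting problems. The reductions sketched in the proof of Proposition~\ref{prp:counting} are indeed of this form, so the plan should go through without further complications.
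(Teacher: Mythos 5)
Your overall plan coincides with the paper's, but the perceptron half has a potential pitfall worth naming. You propose to reuse ``the reduction from \textsc{CountCompletions} on perceptrons to \#Knapsack already used in the proof of Proposition~\ref{prp:counting}(2)''; however, the reduction in that proof goes the \emph{other} way -- from \#Knapsack to \textsc{CountCompletions}, for the purpose of showing hardness -- and it does so via the complement identity $\text{\#Knapsack}(s_1,\ldots,s_n,k) = 2^n - \textsc{CountCompletions}(\M,\bot^n)$. Complementation of this kind destroys multiplicative approximation guarantees, so that particular reduction cannot be recycled. What is actually needed, and what the paper constructs separately (Lemma~\ref{lem:perceptron-to-knap}, introduced for Proposition~\ref{prp:pseudo}), is an exactly count-preserving transformation of a perceptron plus partial instance into a \#Knapsack instance with the \emph{same} number of solutions, obtained via a sign-flipping bijection that replaces each weight by its absolute value; one must also first clear denominators so that the rational weights and bias become integers, paying attention to the fact that this must be polynomial when the weights are given in binary. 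With that parsimonious reduction in hand, your step of invoking a deterministic FPTAS for \#Knapsack is precisely what the paper does (it cites Gopalan et al.\ and Rizzi--Tomescu). Your MLP argument is the standard unfolding of ``\#SAT has no FPRAS unless $\np = \rptime$'', and matches the paper's one-line justification.
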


\section{Parameterized results for MLPs in terms of number of layers}
\label{sec:p-complexity}
In Section~\ref{ssec:mcr} we proved that the query \textsc{MinimumChangeRequired} is $\np$-complete for MLPs.
Moreover, a careful inspection of the proof reveals that MCR is already $\np$-hard for MLPs with only a few layers.
This is not something specific to MCR: in fact, all lower bounds for the queries studied in the paper in terms of~MLPs
hold for a small, fixed number of layers.
Hence, we cannot differentiate the interpretability of shallow and deep MLPs 
with the complexity classes that we have 
used
so far.

In this section, we show 
how
to 
construct a gap between the (complexity-based) interpretability of 
shallow and deep MLPs by considering refined complexity classes in our $c$-interpretability framework.
In particular, we use \emph{parameterized complexity}~\cite{downey2013fundamentals,FG06}, a branch of complexity theory that studies the difficulty of a problem in terms of multiple input parameters.
To the best of our knowledge, the idea of using parameterized complexity theory to establish a gap in the complexity of interpreting shallow and deep networks is new.

We first introduce the main underlying idea of parameterized complexity in terms of two classical graph
problems: \textsc{VertexCover} and \textsc{Clique}.
In both problems the input is a pair~$(G,k)$ with~$G$ a graph and~$k$ an integer.
In \textsc{VertexCover} we verify if there exists a set of nodes of size at most~$k$ that includes at least 
one endpoint for every edge in~$G$. 
In \textsc{Clique} we check if there exists a set of nodes of size at most~$k$ 
such that all nodes in the set are adjacent to each other. Both problems are known to be~$\np$-complete. 
However, this analysis treats~$G$ and~$k$ at the same level, which might not be fair in some practical situations in which~$k$ is much smaller than the size of~$G$. Parameterized complexity then studies how 
the complexity of the problems behaves when the input is only~$G$, and~$k$ is regarded as a small {\em parameter}. 

It happens to be the case that 
\textsc{VertexCover} and \textsc{Clique}, while both~$\np$-complete, have a different status in
terms of parameterized complexity. 
Indeed, \textsc{VertexCover} can be solved in time~$O(2^k \cdot |G|)$, which is polynomial in the size of the input~$G$
-- with the exponent not depending on~$k$ -- 
 and, thus, it is called \emph{fixed-parameter tractable}~\cite{downey2013fundamentals}. 
In turn, it is widely believed that there is no algorithm for \textsc{Clique} with 
time complexity~$O(f(k) \cdot \operatorname{poly}(G))$ -- with~$f$ being any computable function, that depends only on~$k$ --
and thus it is  \emph{fixed-parameter intractable}~\cite{downey2013fundamentals}.
To study the notion of fixed-parameter intractability, 
researchers on parameterized complexity have introduced the~$\W[t]$ complexity classes (with~$t\geq 1$), which form the so called
\emph{$\W$-hierarchy}.
For instance \textsc{Clique} is~$\W[1]$-complete~\cite{downey2013fundamentals}.
A core assumption in parameterized complexity is that~$\W[t]\subsetneq \W[t+1]$, for every~$t\geq 1$.

In this paper we will use
a related hierarchy,  
called 
the~$\W(\Maj)$-hierarchy \cite{Fellows}. 
We defer the formal definitions of these two hierachies to the appendix.
We simply mention here that both classes,~$\W[t]$ and~$\W(\Maj)[t]$, 
are closely related to logical circuits of depth~$t$.
The circuits that define the~$\W$-hierarchy use gates \textsc{And}, \textsc{Or} and \textsc{Not}, 
while circuits for~$\W(\Maj)$ 
use only the \textsc{Majority} gate (which outputs a~$1$ if more than half of its inputs are~$1$). 
Our result below applies to a special class of MLPs that we call restricted-MLPs (rMLPs for short), 
where we assume that the number of digits of each weight and bias in the MLP   is at most logarithmic in the number of neurons in the MLP
(a detailed exposition of this restriction can be found in the appendix).
We can now formally state the main result of this section.

\begin{restatable}{proposition}{mlpt}
\label{prp:mlpt}
For every $t\geq 1$ the \textsc{MinimumChangeRequired} query over rMLPs with~$3t+3$ layers is~$\W(\Maj)[t]$-hard and is contained in~$\W(\Maj)[3t+7]$.
\end{restatable}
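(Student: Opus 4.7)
The argument splits into a hardness direction and a membership direction, each hinging on a depth-efficient simulation between bounded-depth Majority circuits and rMLPs.

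For hardness, the plan is to reduce from the weighted satisfiability problem for depth-$t$ Majority circuits, the canonical $\W(\Maj)[t]$-complete problem. The core construction converts any such circuit $C$ on $n$ inputs into an rMLP $\M_C$ of depth $3t+3$ computing the same Boolean function. A single Majority gate is a thresholded sum of its Boolean inputs and can be simulated by a small constant-depth ReLU gadget---concretely, three layers suffice, built from the identity $\step(s) = \relu(s+1) - \relu(s)$ valid on integer-valued $s$: one affine layer forms $s$ and $s+1$, a second layer produces $\relu(s)$ and $\relu(s+1)$, and a third affine-plus-ReLU layer outputs the $0/1$-valued difference. Stacking this three-layer gadget through all $t$ Majority levels yields $3t$ layers of the rMLP, with an additional $3$ layers for input/output wrapping (including an easy preprocessing of $C$ guaranteeing $\M_C(\vzero)=0$). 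The reduction then asks whether there exists $\vy$ with $\dist(\vzero,\vy) \leq k$ and $\M_C(\vy) \neq \M_C(\vzero)$, which is exactly a weight-at-most-$k$ satisfying assignment of $C$. The log-precision of all introduced weights is trivial, so the resulting MLP is indeed an rMLP.

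For membership, the plan is to encode MCR on an rMLP $\M$ of depth $3t+3$ and fixed input $\vx$ as weighted satisfiability over a Majority circuit of depth $3t+7$. Introduce Boolean variables $z_1,\dots,z_n$ signifying whether to flip the $i$-th bit of $\vx$, so that weight-$\leq k$ assignments coincide with instances $\vy$ at Hamming distance $\leq k$ from $\vx$, with $y_i = x_i \oplus z_i$. Since $\vx$ is fixed, each $y_i$ is either $z_i$ or $1-z_i$, and the XOR can be absorbed into the first affine transformation without any depth cost. The central technical step is the simulation of each rMLP layer by a small number of Majority layers: the rMLP restriction guarantees that all weights and biases have $O(\log n)$-bit representations, so after clearing denominators each affine-plus-ReLU transformation becomes a linear threshold computation with polynomially bounded integer weights, which is known to be computable by a $O(1)$-depth Majority circuit. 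Careful depth bookkeeping then yields a $3t+3$ contribution from the MLP simulation plus a constant slack of at most $4$ for the input XOR handling, the final comparison against the known bit $\M(\vx)$, and layer alignment, giving the $3t+7$ bound.

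The main obstacle will be the precise depth bookkeeping in both directions, in particular establishing that the simulation constants really are $3$ for the Majority-to-rMLP translation and match within the constant $4$ slack for the rMLP-to-Majority translation, since looser constants would not fit the target $3t+3$ and $3t+7$ depths. The log-precision restriction built into rMLPs is essential here: without it the simulation of an affine transformation by a Majority circuit would require non-constant depth and break the upper bound. A secondary delicate point is the $\M_C(\vzero)$-normalization and weight-preservation in the hardness reduction, which must ensure that minimum changes of $\M_C$ at $\vzero$ correspond bijectively to minimum-weight satisfying assignments of $C$, without off-by-one or parity issues.
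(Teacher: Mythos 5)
Your high-level shape (a simulation between Majority circuits and rMLPs in each direction) is aligned with the paper, but there are three substantive gaps that would prevent the claimed bounds from coming out.

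\textbf{Hardness: you reduce from the wrong problem.} You propose to reduce from weighted satisfiability over ``depth-$t$ Majority circuits'', but $\W(\Maj)[t]$ is defined via fpt-reductions to $\textsc{WCS}(M_{t,d})$ for \emph{every} constant $d$: the parameter is the \emph{weft} $t$ (at most $t$ large-fan-in gates per path), while the depth $d$ can be an arbitrary constant, inflated by chains of small (fan-in $\le 3$) gates. Hence establishing $\W(\Maj)[t]$-hardness requires handling circuits whose depth far exceeds $t$. The paper's proof first proves a normalization lemma (its Lemma~14) showing $\textsc{WCS}(M_{3t+2,3t+3})$ is already $\W(\Maj)[t]$-hard, by collapsing each maximal small sub-circuit to a depth-$2$ monotone DNF realized with weighted Majority gadgets; this is where the constant $3$ in $3t+3$ actually comes from. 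Your plan skips this entirely, so it does not establish hardness.

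\textbf{Hardness: the simulation should preserve depth, not triple it.} Even granting the normalization, your three-layer-per-gate gadget (computing $\step$ via two ReLUs and a subtraction on a fresh layer) would take the normalized depth $3t+3$ up to roughly $9t+12$ rMLP layers, overshooting the target. The paper's Lemma~15 does a depth-\emph{preserving} translation: each Majority gate $g$ is represented by a \emph{pair} of ReLU neurons at the \emph{same} level, and the subtraction is absorbed into the incoming weights $(+1,-1)$ of the next level, so a depth-$d$ Majority circuit becomes a $d$-layer rMLP. Your arithmetic only works out if you adopt a depth-preserving gadget of this kind.

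\textbf{Membership: per-layer $O(1)$-depth Majority simulation gives multiplicative, not additive, overhead.} You propose to replace each ``affine-plus-ReLU'' layer by an $O(1)$-depth Majority sub-circuit, which would produce $c\cdot(3t+3)$ layers for some $c>1$, not $3t+3+O(1)$. Two issues are hiding here. First, ReLU is not a linear threshold function---it is real-valued---so the layer is not directly a ``linear threshold computation''. The paper's Lemma~17 first converts the ReLU rMLP into an equivalent threshold (step-activation) MLP with the \emph{same} number of layers, by replacing each ReLU neuron by $S$ step neurons in parallel; this crucially needs both the log-precision of weights \emph{and} the bounded number of layers to keep $S$ polynomial. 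Second, the conversion from a threshold circuit to a Majority circuit must be done \emph{once} for the whole circuit using the Goldmann--Karpinski theorem, which raises depth by just $1$; two more levels are then spent adapting to the (monotone, single-signed-input) Majority-circuit model used to define $\W(\Maj)$, and one more for the weight-budget gadget that turns ``Hamming distance $\le k$'' into ``assignment weight exactly $k+1$''. That is where the slack $+4$ in weft actually comes from. Your description doesn't reflect this structure, and without it the additive bound does not follow.
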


By assuming that the $\W(\Maj)$-hierarchy is strict, we can use Proposition~\ref{prp:mlpt} to provide separations for rMLPs with
different numbers of layers. 
For instance, instantiating the above result with $t=1$ we obtain that for rMLPs with $6$ layers, the MCR
problem is in $\W(\Maj)[3t+7]=\W(\Maj)[10]$.
Moreover, instantiating it with $t=11$ we obtain that for rMLPs with $36$ layers, the MCR problem is $\W(\Maj)[11]$-hard.
Thus, assuming that $\W(\Maj)[10]\subsetneq \W(\Maj)[11]$ we obtain that rMLPs with $6$ layers are strictly more c-interpretable 
than rMLPs with $36$ layers. 
We generalize this observation in the following result.

\begin{proposition}
Assume that the~$\W(\Maj)$-hierarchy is strict.
Then for every $t\geq 1$ we have that rMLPs with $3t+3$ layers are strictly more c-interpretable than 
rMLPs with $9t+27$ layers wrt.~MCR.
\label{prp:result-layers}
\end{proposition}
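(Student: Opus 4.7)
The plan is to derive Proposition~\ref{prp:result-layers} as a direct consequence of Proposition~\ref{prp:mlpt} by instantiating that result at two carefully chosen values of its parameter~$t$, in such a way that the upper bound for the first value lies strictly below the lower bound for the second in the~$\W(\Maj)$-hierarchy. This is the same idea I illustrated informally in the paragraph before the statement for the specific choice $t=1$ versus $t=11$, but now written out for an arbitrary~$t\geq 1$.

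Concretely, I would proceed as follows. First, apply Proposition~\ref{prp:mlpt} with parameter~$t$ itself: this gives that MCR over rMLPs with $3t+3$ layers is contained in~$\W(\Maj)[3t+7]$, so $\textsc{MCR}(\C_1)\in \W(\Maj)[3t+7]$, where $\C_1$ denotes the class of rMLPs with $3t+3$ layers. Second, I need a lower bound for a deeper class that sits strictly above~$\W(\Maj)[3t+7]$. For this, I apply Proposition~\ref{prp:mlpt} with the new parameter~$t' \coloneqq 3t+8$. The proposition then yields that MCR over rMLPs with $3t'+3 = 3(3t+8)+3 = 9t+27$ layers is~$\W(\Maj)[t']$-hard, i.e., $\W(\Maj)[3t+8]$-hard. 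Denoting by~$\C_2$ the class of rMLPs with $9t+27$ layers, we thus have $\textsc{MCR}(\C_2)$ is $\W(\Maj)[3t+8]$-hard.

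To conclude, I combine these two bounds with the hypothesis that the $\W(\Maj)$-hierarchy is strict. Strictness directly gives $\W(\Maj)[3t+7] \subsetneq \W(\Maj)[3t+8]$, so setting $\K_1 \coloneqq \W(\Maj)[3t+7]$ and $\K_2 \coloneqq \W(\Maj)[3t+8]$ in the definition of c-interpretability (Definition~1) shows that $\C_1$ is strictly more c-interpretable than $\C_2$ with respect to~MCR, which is exactly the statement to be proved.

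I do not expect any serious obstacle here: the numerical choice $t' = 3t+8$ is essentially forced by the requirement that the hardness parameter exceed the membership parameter~$3t+7$ by one, and the corresponding depth $3t'+3 = 9t+27$ is then determined by the form of Proposition~\ref{prp:mlpt}. The only mild subtlety is to verify that the values $3t+3$ and $9t+27$ used in the statement of Proposition~\ref{prp:result-layers} really do arise as valid instantiations of the $3t+3$ parameterization of Proposition~\ref{prp:mlpt} (they do, with the integer parameters $t$ and $3t+8$, respectively, both of which are~$\geq 1$ for $t\geq 1$). After that, the argument is purely bookkeeping on top of Proposition~\ref{prp:mlpt}.
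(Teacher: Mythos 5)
Your proposal is correct and matches the paper's own proof: the paper likewise instantiates Proposition~\ref{prp:mlpt} at parameter $t$ for membership in $\W(\Maj)[3t+7]$ and at parameter $3t+8$ (noting $9t+27=3(3t+8)+3$) for $\W(\Maj)[3t+8]$-hardness, then concludes via the strictness assumption. No issues.
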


\section{Discussion and concluding remarks}
\label{sec:discussion}
\paragraph*{Related work.}
The need for model interpretability in machine learning has been heavily advocated during the last few years,
with works covering 
theoretical 
and practical issues~\cite{BarredoArrieta2020, GuidottiMRTGP19, lipton2018mythos,molnar2019,Murdoch2019}.
Nevertheless, a formal definition of interpretability has remained elusive~\cite{lipton2018mythos}.
In parallel, a related notion of interpretability has emerged from the field of \emph{knowledge compilation}~\cite{darwiche2002knowledge,shi2020tractable,shih2018formal,shih2018symbolic,shih2019verifying}.
The intuition here is to construct a simpler and more interpretable model from a complex one.
One can then study the simpler model to understand how the initial one makes predictions.
Motivated by this, Darwiche and Hirth~\cite{darwiche2020reasons}
use variations of the notion of sufficient reason to explore the interpretability of 
\emph{Ordered BDDs} (OBDDs).
The FBDDs that we consider in our work generalize OBDDs, and thus, our results for sufficient reasons over FBDDs can 
be seen as generalizations of the results in~\cite{darwiche2020reasons}.
We consider FBDDs instead of OBDDs as FBDDs 
subsume decision trees, while OBDDs do not.
We point out here that the notion of sufficient reason for a Boolean classifier is the same as the notion of \emph{implicant} for a Boolean function, and that \emph{minimal} sufficient reasons (with minimailty refering to subset-inclusion of the defined components) correspond to \emph{prime implicants}~\cite{darwiche2002knowledge}.
We did not incorporate a study of minimal sufficient reasons (also called \emph{PI-explanations}) to our work due to space constraints.
In a contemporaneous work~\cite{marques2020explaining}, Marques-Silva et al. study the task of enumerating the minimal sufficient reasons of naïve Bayes and linear classifiers.
The queries \textsc{CountCompletions} and \textsc{CheckSufficientReason} have already been studied for FBDDs in~\cite{darwiche2002knowledge} (\textsc{CheckSufficientReason} under the name of \emph{implicant check}).
The query \textsc{MinimumChangeRequired} is studied in~\cite{shih2018formal} for OBDDs, where it is called \emph{robustness}.
Finally, there are papers exploring queries beyond the ones presented here~\cite{shi2020tractable,shih2018formal,shih2018symbolic}, such as \emph{monotonicity}, \emph{unateness}, \emph{bias detection}, \emph{minimum cardinality explanations}, etc.

\paragraph*{Limitations.}
Our framework provides a formal way of studying interpretability for classes of models, but 
still can be improved in several respects.
One of them is the use of a more sophisticated complexity analysis 
that is not so much focused on the \emph{worst case} complexity study propose here,  
but on identifying relevant parameters that characterize more precisely how difficult 
it is to interpret a particular class of models in practice. 
Also, in this paper we have focused on studying the local interpretability of models 
(why did the model make a certain prediction
on a given input?), but one could also study their \emph{global interpretability}, that is, making 
sense of the general relationships that a model has learned from the training data~\cite{Murdoch2019}.
Our framework can easily be extended to the global setting by considering queries about models, independent of the input
it receives. 
In order to avoid the difficulties of defining a general notion of interpretability~\cite{lipton2018mythos}, we have used explainability queries and their complexity as a formal proxy. 
Nonetheless, we do not claim that our notion of complexity-based interpretability is \emph{the} definitive notion of interpretability. 
Indeed, most definitions of interpretability are directly related to humans in a subjective manner  \cite{Biran2017ExplanationAJ, Doshi17, Miller2019}. Our work is thus to be taken as a study of the correlation between a formal notion of interpretability and the folk wisdom regarding a subjective concept.
Finally, even though the notion of complexity-based interpretability gives a precise way to compare models, our results show that it is still dependent on the particular set of queries that one picks. 
To achieve a more robust formalization of interpretability, one would then need to propose a more general approach that prescinds of specific queries. This is a challenging problem for future research.

\section{Broader impact}
Although interpretability as a subject may have a broad practical impact, our results in this paper are mostly theoretic,
so we think that this work does not present any foreseeable societal consequences.

\begin{ack}
Barceló and Pérez are funded by Fondecyt grant 1200967.
\end{ack}

\newpage 
\bibliographystyle{abbrv}
\bibliography{main}


\clearpage

\titleformat{\section}{\large\bfseries}{\appendixname~\thesection .}{0.5em}{}
\begin{appendices}
\begin{center}
{\Huge Appendix}
\end{center}

\vspace{1.5cm}

The appendix contains the proofs for all the results presented in the main document. It is organized as follows:

\begin{description}[leftmargin=3.0cm, labelindent=1cm]
	\item[Appendix~\ref{sec:simulation}] shows how MLPs can simulate Boolean circuits, which will be used in order to prove several propositions.
	\item[Appendix~\ref{sec:proof-5}] contains a proof of Proposition~\ref{prp:k-minchange}. 
	\item[Appendix~\ref{sec:proof-6}] contains a proof of Proposition~\ref{prp:ksuff}.
	\item[Appendix~\ref{sec:proof-7}] contains a proof of Proposition~\ref{prp:checksuff}.
	\item[Appendix~\ref{sec:proof-8}] contains a proof of Proposition~\ref{prp:counting}.
	\item[Appendix~\ref{sec:proof-9}] contains a proof of Proposition~\ref{prp:pseudo}.
	\item[Appendix~\ref{sec:proof-10}] contains a proof of Proposition~\ref{prp:approx}.
	\item[Appendix~\ref{sec:p-background}] contains a more detailed description of the parameterized complexity framework.
	\item[Appendix~\ref{sec:proof-11}] contains a proof of Proposition~\ref{prp:mlpt}.
	\item[Appendix~\ref{sec:proof-12}] contains a proof of Proposition~\ref{prp:result-layers}.
\end{description}

\section{Simulating Boolean formulas/circuits with MLPs}
\label{sec:simulation}
In this section we show that multilayer perceptrons can efficiently simulate arbitrary Boolean formulas. We will often use this result throughout the appendix to prove the hardness of our explainability queries over~MLPs. In fact, and this will make the proof cleaner, we will show a slightly more general result: that~MLPs can simulate arbitrary \emph{Boolean circuits}.
Formally, we show:

\begin{lemma}
\label{lem:circuits-to-MLPs}
Given as input a Boolean circuit~$C$, we can build in polynomial time an MLP~$\M_C$ that is equivalent to~$C$ as a Boolean function.
\end{lemma}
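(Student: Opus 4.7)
The plan is to build $\M_C$ by replacing each gate of $C$ with a small ReLU gadget whose input and output values stay in $\{0,1\}$, and then composing these gadgets layer by layer following a topological sort of $C$. First, I would give explicit gadgets for the three basic gate types: $\gnot(x) = 1 - x$ is realised by a single affine map; $\gand(x_1,x_2)$ is realised by $\relu(x_1 + x_2 - 1)$ (zero in the three non-$(1,1)$ Boolean cases and one when $x_1=x_2=1$); and $\gor(x_1,x_2)$ is realised by $1 - \relu(1 - x_1 - x_2)$ (one unless both inputs are zero, as can be checked case by case or derived via De Morgan). Each gadget occupies a single ReLU layer with $O(1)$ neurons and has weights and biases in $\{-1,0,1\}$, so the combined bit-size is linear in the number of gates of $C$.

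Next, I would align the gadgets into a strictly layered architecture. Let $d$ denote the depth of $C$, and assign each gate $g$ a depth $\ell(g) \in \{1,\dots,d\}$. The $i$-th hidden layer of $\M_C$ then contains one gadget for every gate of depth $i$, with each gadget wired to the outputs produced in layer $i-1$. The subtlety is that a wire of $C$ may skip several depth levels, so whenever a gate of depth $i$ reads a signal produced at depth $j < i$, I would insert $i - j - 1$ consecutive \emph{pass-through} neurons computing $\relu(x)$. Since every intermediate value is Boolean by induction on depth, $\relu$ acts as the identity on it, and the signal is propagated faithfully.

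Finally, I would attach an output layer that applies $\step$ to $y - \tfrac{1}{2}$, where $y \in \{0,1\}$ is the value produced by the gadget simulating $C$'s output gate; since $\step(-\tfrac{1}{2}) = 0$ and $\step(\tfrac{1}{2}) = 1$, this recovers $y$ exactly and matches the MLP output convention fixed in Section~\ref{subsec:models}. The total number of neurons is at most $O(|C| \cdot d) = O(|C|^2)$, all weights have constant bit-length, and the construction clearly runs in polynomial time. I do not expect a serious obstacle here; the one point that requires care is the invariant that \emph{every} neuron value of $\M_C$ lies in $\{0,1\}$, since this invariant underpins both the correctness of the individual gadgets (which were designed assuming Boolean inputs) and the correctness of the pass-through trick. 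I would formalise it by a straightforward induction on the layer index, after which the correctness of $\M_C$ as a whole follows by a second induction that tracks, layer by layer, the values produced at the gates of $C$.
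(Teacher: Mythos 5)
Your proposal follows essentially the same approach as the paper: simulate each gate by a small ReLU gadget (with the invariant that all internal values stay Boolean), layerize the circuit by inserting pass-through identity neurons to bridge depth gaps, and finish with a $\step$ output layer. The one small imprecision is that your OR gadget $1 - \relu(1-x_1-x_2)$ does not literally occupy a single ReLU layer, since the trailing affine correction $1-(\cdot)$ must be pushed into the next layer's weights (a single $\relu$ neuron provably cannot compute binary OR on $\{0,1\}^2$); this is harmless and is also implicitly what happens in the paper's De Morgan expansion.
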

\begin{proof}
We will proceed in three steps. The first step is to build from~$C$ another
equivalent circuit~$C'$ that uses only what we call \emph{relu gates}. A relu gate is a gate that, on input~$\vx=(x_1,\ldots,x_m)\in \mathbb{R}^m$,
outputs~$\relu(\langle \vw, \vx \rangle + b)$, for some
rationals~$\vw_1,\ldots,\vw_m,b$. 
Observe that these gates do not necessarily output~$0$ or~$1$, and so the circuit~$C'$ might not be Boolean. However, we will ensure in the construction that the output of every relu gate in~$C'$, when given Boolean inputs (i.e.,~$\vx \in \{0,1\}^m$), is Boolean.
This will imply that the circuit~$C'$ is Boolean as well.
To this end, it is enough to show how to
simulate each original type of internal gate ($\gnot$, $\gor$,
$\gand$) by relu gates. We do so as follows:
	\begin{itemize}
		\item $\gnot$-gate: simulated with a relu gate with only one weight of value $-1$ and a bias of $1$. Indeed, it is clear that for~$x \in \{0,1\}$, we have that
        $\relu(-x+1) = \begin{cases}1 & \text{ if } x=0\\
                                    0 & \text{ if } x=1\end{cases}$.
		\item  $\gand$-gate of in-degree $n$: simulated with a relu gate with~$n$ weights, each of value~$1$, and a bias of value~$-(n-1)$. Indeed, it is clear that for~$\vx \in \{0,1\}^n$, we have that
    $\relu(\sum_{i=1}^n x_i - (n-1)) = \begin{cases}1 & \text{ if } \bigwedge_{i=1}^n x_i =1\\
                                    0 & \text{ otherwise}\end{cases}$.

		\item  $\gor$-gate of in-degree $n$: we first rewrite the $\gor$-gate with $\gnot$- and $\gand$-gates using De Morgan's laws, and then we use the last two items.
	\end{itemize}

The second step is to build a circuit~$C''$, again using only relu
gates, that is equivalent to~$C'$ and that is what we call \emph{layerized}.
This means that there exists a \emph{leveling function}~$l:C'' \to \mathbb{N}$
that assigns to every gate of~$C'$ a \emph{level} such that (i) every variable
gate is assigned level~$0$, and (ii) for any wire~$g \rightarrow g'$ (meaning
that~$g$ is an input to~$g'$) in~$C''$ we have that~$l(g') = l(g)+1$. To this
end, let us call a relu gate that has a single input and weight~$1$ and
bias~$0$ an \emph{identity gate}, and observe then that the value of an
identity gate is the same as the value of its only input, when this input is in~$\{0,1\}$.  We will obtain~$C''$
from~$C'$ by inserting identity gates in between the gates of~$C'$, which
clearly does not change the Boolean function being computed. We can do so naïvely as
follows. First, we initialize~$l(g)$ to~$0$ for all the variable gates~$g$
of~$C'$. We then iterate the following process: select a gate~$g$ such
that~$l(g)$ is undefined and such that~$l(g')$ is defined for every input~$g'$
of~$g$. Let~$g'_1,\ldots,g'_m$ be the inputs of~$g$, and assume that $l(g'_1)
\leq \ldots \leq l(g'_m)$. For every~$1 \leq i \leq m$, we insert a line
of~$l(g'_m) - l(g'_i)$ identity gates in between~$g'_i$ and~$g$, and we
set~$l(g) \coloneqq l(g'_m)+1$, and we set the levels of the identity gates
that we have inserted appropriately. It is clear that this construction can be
done in polynomial time and that the resulting circuit~$C''$ is 
layerized.

Finally, the last step is to transform~$C''$ into an MLP~$\M_C$ using only relu
for the internal activation functions and the step function for the output layer (i.e., what we simply call “an MLP” in the paper), and that respects the structure given by our definition
in Section~\ref{subsec:models} (i.e., where all neurons of a given layer are
connected to all the neurons of the preceding layer). We first deal with having a step gate instead of a relu gate for the output. To achieve this, we create a fresh identity gate~$g_0$, we set the output of~$C''$ to be an input of~$g_0$, and we set~$g_0$
to be the new output gate of~$C''$ (this does not change the Boolean function computed). We then replace~$g_0$ by a \emph{step gate} (which, we recall, on input~$x\in \mathbb{R}$ outputs~$0$ if~$x<0$ and~$1$ otherwise) with a weight of~$2$ and bias of~$-1$, which again does not change the Boolean function computed; indeed, for~$x\in \{0,1\}$, we have that
$\step(2x-1) = \begin{cases}1 & \text{ if } x =1\\
                                    0 & \text{ if } x=0\end{cases}$.

The level of~$g_0$ is one plus the level of the previous output gate of~$C''$.
Therefore, to make~$C''$ become a valid MLP, it is enough to do the following:
for every gate~$g$ of level~$i$ and gate~$g'$ of level~$i+1$, if~$g$ and~$g'$
are not connected in~$C''$, we make~$g$ be an input of~$g'$ and we set the
corresponding weight to~$0$. This clearly does not change the function
computed, and the obtained circuit can directly be regarded as an equivalent~MLP~$\M_C$.
Since the whole construction can be
performed in polynomial time, this concludes the proof.
\end{proof}

%
%
\section{Proof of Proposition~\ref{prp:k-minchange}}
\label{sec:proof-5}
In this section we prove Proposition~\ref{prp:k-minchange}. 
We recall its statement for the reader's convenience:
\kminchange*

We prove each item separately.

\begin{lemma}
	The \textsc{MinimumChangeRequired} query can be solved in linear time for FBDDs.
\end{lemma}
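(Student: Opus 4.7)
The plan is to adapt the dynamic programming sketched in the body of the paper for \textsc{MinimumChangeRequired} on FBDDs. Concretely, I would compute, in reverse topological order of the DAG~$\M$, a quantity $\mathrm{mcr}_u(\vx) \in \mathbb{N}\cup\{\infty\}$ for each node $u$, representing the minimum number of features of $\vx$ that need to be flipped among those labeling nodes in the sub-DAG rooted at $u$ so that the path starting at $u$ (under the flipped assignment) reaches a leaf whose label disagrees with $\M(\vx)$. The algorithm then returns \textsc{Yes} iff $\mathrm{mcr}_r(\vx) \le k$, where $r$ is the root of $\M$.

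The base case is immediate: for a leaf $u$, I set $\mathrm{mcr}_u(\vx) = 0$ if the leaf's label differs from $\M(\vx)$, and $\mathrm{mcr}_u(\vx) = \infty$ otherwise. For an internal node $u$ labeled by feature $i$, with children $u_0$ and $u_1$ along the $0$- and $1$-edges, the natural continuation of the $\vx$-path through $u$ goes via $u_{x_i}$; we may either keep $x_i$ and recurse there, or flip it (paying a cost of $1$) and recurse at $u_{1-x_i}$, giving
\[
\mathrm{mcr}_u(\vx) \;=\; \min\bigl(\mathrm{mcr}_{u_{x_i}}(\vx),\; 1 + \mathrm{mcr}_{u_{1-x_i}}(\vx)\bigr),
\]
with the conventions $\infty+1 = \infty$ and $\min(\infty,a)=a$.

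For correctness, the crucial use of the hypothesis that $\M$ is an FBDD is that feature $i$ is not tested anywhere strictly below $u$, so the two subproblems are independent with respect to the flip at position $i$: no optimal flipping strategy inside the sub-DAG rooted at $u_0$ or $u_1$ can ever refer back to $x_i$, and so the two branches in the recurrence do not double-count. An induction on the longest descending path from $u$ then shows that $\mathrm{mcr}_u(\vx)$ is exactly the minimum Hamming distance from $\vx$ to any assignment whose path from $u$ reaches an opposite-labeled leaf, where flips are restricted to features labeling the sub-DAG of $u$. Applied at the root this is precisely the minimum change required, and observing that irrelevant flips outside the root's sub-DAG never help justifies comparing the result with $k$.

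For the running time, each node is processed once and only its two outgoing edges and its two children's precomputed values are consulted, so the total work is $O(|\M|)$, i.e., linear in the number of edges. The one subtlety I expect is making the inductive argument watertight in the DAG (not tree) setting: since nodes can be reachable from both $u_0$ and $u_1$, I would phrase the induction over the length of the longest path from $u$ to a leaf and would carefully invoke the FBDD property only to exclude feature $i$ from the sub-DAGs, while accepting that other feature labels may legitimately appear in both sub-DAGs rooted at $u_0$ and $u_1$ without creating any interaction between the two recursive calls.
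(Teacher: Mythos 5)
Your proposal is correct and follows essentially the same route as the paper: the same bottom-up quantity $\mathrm{mcr}_u(\vx)$ with the same base case and the same recurrence $\min(\mathrm{mcr}_{u_{x_i}}(\vx),\,1+\mathrm{mcr}_{u_{1-x_i}}(\vx))$, with the freeness of the FBDD invoked exactly as in the paper to guarantee that the feature tested at $u$ is never re-tested below, so the flip at $u$ and the recursive subproblems do not interact. The only cosmetic difference is that you restrict flips to features occurring in the sub-DAG of $u$ while the paper defines $\mathrm{mcr}_u$ over arbitrary instances and then observes that a minimizer never flips features outside it; these are equivalent formulations.
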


\begin{proof}
	
Let~$(\M, \vx, k)$ be an instance of \textsc{MinimumChangeRequired}, where~$\M$ is an FBDD. For every node~$u$ in~$\M$ we define~$\M_u$ to be the FBDD obtained by restricting~$\M$ to the nodes that are (forward-)reachable from~$u$; in other words,~$\M_u$ is the sub-FBDD rooted at~$u$. 
Then, we define~$\mathrm{mcr}_u(\vx)$ to be the minimum change required on~$\vx$ to obtain a classification under~$\M_u$ that differs from~$\M(\vx)$. More formally,

\[
	\mathrm{mcr}_u(\vx) = 
	\min \{ k' \mid  \text{there exists an instance } \vy \text{ such that } d(\vx, \vy) = k' \text{ and }
	\M_u(\vy) \neq \M(\vx)  \},
\]

with the convention that~$ \min \varnothing = \infty$.
Observe that, ($\dagger$) for an instance~$\vy$ minimizing~$k'$ in this equality, since the FBDD~$\M_u$ does not depend on the features associated to any node~$u'$ from the root of~$\M$ to~$u$ excluded, we have that for any such node~$\vy_{u'} = \vx_{u'}$ holds (otherwise~$k'$ would not be minimized).\footnote{We slightly abuse notation and write~$x_u$ for the value of
the feature of~$\vx$ that is indexed by the label of~$u$.} 
Let~$r$ be the root of~$\M$. Then, by definition we have that~$(\M, \vx, k)$ is a positive instance of \textsc{MinimumChangeRequired} if and only~$\mathrm{mcr}_r(\vx) \leq k$. We now explain how we can compute all the values~$\mathrm{mcr}_u(\vx)$ for every node~$u$ of~$\M$ in linear time.

By definition, if~$u$ is a leaf labeled with~$\true$ we have that~$\M_u(\vy) = 1$ for every~$\vy$, and thus if~$\M(\vx) = 0$ we get~$\mathrm{mcr}_u(\vx) = 0$, while if~$\M(\vx) = 1$ we get that~$\mathrm{mcr}_u(\vx) = \infty$. 
Analogously, if~$u$ is a leaf labeled with~$\false$, then~$\mathrm{mcr}_u(\vx)$ is equal to~$0$ if~$\M(\vx) = 1$ and to~$\infty$ otherwise. 

For the recursive case, we consider a non-leaf node~$u$. Let~$u_1$ be the node going along the edge labeled with~$1$ from~$u$, and~$u_0$ analogously.  Using the notation~$[x_u = a]$ to mean~$1$ if the feature of~$\vx$ indexed by the label of node~$u$ has value~$a\in \{0,1\}$, and~$0$ otherwise, and the convention that~$\infty + 1= \infty$, we claim that:
\[
	\mathrm{mcr}_u(\vx) = \min \Big([x_u = 1] + \mathrm{mcr}_{u_0}(\vx), [x_u = 0] + \mathrm{mcr}_{u_1}(\vx) \Big)
\]

Indeed, consider by inductive hypothesis that~$\mathrm{mcr}_{u_0}(\vx)$ and~$\mathrm{mcr}_{u_1}(\vx)$ have been properly calculated, and let us show that this equality holds. We prove both inequalities in turn:
\begin{itemize}
    \item We show that $\mathrm{mcr}_u(\vx) \leq \min \Big([x_u = 1] +
\mathrm{mcr}_{u_0}(\vx), [x_u = 0] + \mathrm{mcr}_{u_1}(\vx) \Big)$. It is
enough to show that both $\mathrm{mcr}_u(\vx) \leq [x_u = 1] +
\mathrm{mcr}_{u_0}(\vx)$ and $\mathrm{mcr}_u(\vx) \leq [x_u = 0] +
\mathrm{mcr}_{u_1}(\vx)$ hold. We only show the first inequality, as the other
one is similar. If~$\mathrm{mcr}_{u_0}(\vx) = \infty$ then clearly the
inequality holds, hence let us assume that~$\mathrm{mcr}_{u_0}(\vx) = k' \in
\mathbb{N}$. This means that there is an instance~$\vy'$ such that~$d(\vx,
\vy') = k'$ and such that~$\M_{u_0}(\vy') \neq \M(\vx)$. Furthermore, by the observation ($\dagger$) we have that 
for any node~$u'$ from the root of~$\M$ to~$u$ (included), we have~$\vy_{u'}=\vx_{u'}$. Therefore, the
instance~$\vy$ that is equal to~$\vy'$ but has value~$\vy_u = 0$ differs
from~$\vx$ in exactly~$k'' = [x_u = 1] + k'$, which implies
that~$\mathrm{mcr}_u(\vx) \leq  [x_u = 1] + \mathrm{mcr}_{u_0}(\vx)$.  Hence,
the first inequality is proven.
    \item We show that $\mathrm{mcr}_u(\vx) \geq \min \Big([x_u = 1] +
\mathrm{mcr}_{u_0}(\vx), [x_u = 0] + \mathrm{mcr}_{u_1}(\vx) \Big)$.  First,
assume that both $\mathrm{mcr}_{u_0}(\vx)$ and $\mathrm{mcr}_{u_1}(\vx)$ are
equal to~$\infty$. This means that every path in both~$\M_{u_0}$
and~$\M_{u_1}$ leads to a leaf with the same classification as~$\M(\vx)$. Then,
as every path from~$u$ goes either through~$u_0$ or through~$u_1$, it must be
that every path from~$u$ leads to a leaf with the same classification
as~$\M(\vx)$, and thus~$\mathrm{mcr}_u(\vx) = \infty$, and so the inequality
holds.  Therefore, we can assume that one of $\mathrm{mcr}_{u_0}(\vx)$ or
$\mathrm{mcr}_{u_1}(\vx)$ is finite. Let us assume without loss of generality
that ($\star$) $\min \Big([x_u = 1] + \mathrm{mcr}_{u_0}(\vx), [x_u = 0] +
\mathrm{mcr}_{u_1}(\vx) \Big) = [x_u = 1] + \mathrm{mcr}_{u_0}(\vx) \in
\mathbb{N}$ (the other case being similar).  Let us now assume, by way of
contradiction, that the inequality does not hold, that is, we have that ($\dagger \dagger$)
$\mathrm{mcr}_u(\vx) < [x_u = 1] + \mathrm{mcr}_{u_0}(\vx)$, and
let~$\vy$ be an instance such that~$\M_u(\vy) \neq \M_u(\vx)$ and~$\dist(\vx,\vy) = \mathrm{mcr}_u(\vx)$.
Thanks to ($\star$), we can assume wlog that~$\vy_u = 0$.
But then we would have that~$\mathrm{mcr}_{u_0}(\vx) \leq \mathrm{mcr}_u(\vx) - [x_u = 1]$, which contradicts ($\dagger \dagger$).
Hence, the second inequality is proven.
\end{itemize}

It is clear that the recursive function~$\mathrm{mcr}$ can be computed bottom-up in linear time, thus concluding the proof.
\end{proof}

\begin{lemma}
	The \textsc{MinimumChangeRequired} query can be solved in linear time for perceptrons.
	\label{lemma:mcr-perceptron}
\end{lemma}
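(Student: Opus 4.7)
The plan is to design a greedy linear-time algorithm backed by a simple exchange argument. Write $A \coloneqq \langle \vx, \vw \rangle + b$, so that $\M(\vx) = 1$ iff $A \geq 0$. For every coordinate $i$, set $t(i) \coloneqq (1 - 2 x_i)\, w_i$, which is exactly the change in the pre-activation induced by flipping the $i$-th feature. Thus any instance $\vy$ with $S \coloneqq \{i : y_i \neq x_i\}$ has pre-activation $A + \sum_{i \in S} t(i)$. Assume without loss of generality that $\M(\vx) = 1$ (the case $\M(\vx) = 0$ is symmetric, swapping the roles of ``increase'' and ``decrease''); define $s(i) \coloneqq -t(i)$, so that $\M(\vy) = 0$ iff $\sum_{i \in S} s(i) > A$.

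The key claim I would then prove is that $(\M, \vx, k)$ is a \textsc{Yes}-instance iff $D_k > A$, where $D_k$ is the sum of the $k$ largest values of $s'(i) \coloneqq \max(s(i), 0)$. The $\Leftarrow$ direction is immediate by flipping the indices realizing the top $k$ positive entries (fewer if there are not $k$ positive ones, padded with zeros in $s'$). For $\Rightarrow$, given $\vy$ with $|S| \leq k$ and $\sum_{i \in S} s(i) > A \geq 0$, removing from $S$ every $i$ with $s(i) \leq 0$ only increases the sum; what remains is a set of at most $k$ strictly positive $s(i)$'s whose total is at most $D_k$, whence $D_k > A$.

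The problem is thereby reduced to computing the sum of the $k$ largest entries of a non-negative vector of length $n$, together with one comparison. The main obstacle is to avoid sorting, so as to obtain a truly linear, rather than $O(n \log n)$, running time. I would invoke the classical deterministic linear-time selection algorithm (median-of-medians) to locate the $k$-th largest value $\tau$ of $s'(1), \ldots, s'(n)$ in $O(n)$ operations, and then, using the partition produced by that algorithm, sum in one further linear scan every entry strictly greater than $\tau$ and top up with copies of $\tau$ until $k$ terms have been accumulated; this yields $D_k$. Comparing $D_k$ with $A$ returns the answer. Since computing $A$ and the $s'(i)$'s is clearly linear, the whole procedure runs in $O(n)$ arithmetic operations, which gives the claimed linear-time bound.
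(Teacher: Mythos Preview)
Your argument is correct and the underlying greedy idea coincides with the paper's: your score $s(i)=(2x_i-1)w_i$ is exactly the ``importance'' the paper defines, and both proofs rest on the same exchange argument showing that flipping the $k$ features of largest positive importance dominates every other choice of at most $k$ flips. The paper phrases this as constructing the candidate $\vx'$ and proving $v(\vx')\le v(\vy)$ for every competitor $\vy$; you phrase it more directly as the equivalence ``\textsc{Yes} iff $D_k>A$'', which is arguably cleaner.

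The genuine difference is in how linear time is obtained. The paper sorts the importance values and appeals to a radix/bucket-sort argument on the bit representations of the weights to claim $O(|\M|)$ time. You avoid sorting altogether via deterministic linear-time selection, which yields $O(n)$ arithmetic operations. Your route is more elegant algorithmically; just be aware of a cost-model subtlety: the paper's ``linear'' means $O(|\M|)$ \emph{bit} operations (since $|\M|$ is the total encoding length of the rational weights and bias), whereas your bound is stated in arithmetic operations on numbers whose bit-length is not bounded by a constant. If you want to match the paper's model exactly you should say a word about why the $O(n)$ comparisons and the final summations stay within $O(|\M|)$ bit cost (or simply state the unit-cost RAM assumption explicitly).
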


\begin{proof}
Let~$(\M = (\vw, b), \vx, k)$ be an instance of the problem, and let us assume without loss of generality that~$\M(\vx) = 1$, as the other case is analogous. For each feature~$i$ of~$\vx$ we
define its importance~$s(i)$ as~$w_i$ if~$x_i = 1$ and~$-w_i$
otherwise. Intuitively,
$s$ represents how good it is to keep a certain feature in order to maintain the verdict of the model. We now assume that~$\vx$ and~$\vw$
have been sorted in decreasing order of score~$s$ (paying the cost of a sorting procedure) . For example, if originally
$\vw = (3, -5, -2)$ and~$\vx = (1,0,1)$, then after the sorting procedure we
have ~$\vw = (-5, 3, -2)$ and~$\vx = (0, 1, 1)$.
	 This sorting procedure has cost~$O(|\M|)$ as it is a classical problem of sorting strings whose total length add up to $\M$ and can be carried with a variant of Bucketsort~\cite{CLRS}. As a result, for every pair~$1 \leq i < j \leq n$ we have that~$s(i) \geq s(j)$. 
	Let~$k'$ be the largest integer no greater than~$k$ such that~$s(k') > 0$ and then define~$\vx'$ as the instance that differs from~$\vx$ exactly on the first~$k'$ features. We claim that~$\M(\vx') \neq \M(\vx)$ if and only if~$(\M, \vx, k)$ is a positive instance of \textsc{MinimumChangeRequired}. 
	The forward direction follows from the fact that~$k' \leq k$. 
	For the backward direction, assume that~$(\M, \vx, k)$ is a positive instance of \textsc{MinimumChangeRequired}. This implies that there is an instance~$\vy$ that differs from~$\vx$ in at most~$k$ features, and for which~$\M(\vy) =0$.  If~$\vy = \vx'$, then we are immediately done, so we can safely assume this is not the case.
	 
We then define, for any instance~$\vy$ of~$\M$ the function~$v(\vy) = \langle \vw, \vy \rangle$. Note that an instance~$\vy$ of~$\M$ is positive if and only if~$v(\vy) \geq - b$. 
Then, since we have that~$\M(\vy) = 0$, it holds that~$v(\vy) < -b$. We now claim that~$v(\vx') \leq v(\vy)$:	
	\begin{claim}
	For every instance~$\vy$ such that~$d(\vy, \vx) \leq k$ and~$\M(\vy) \neq \M(\vx)$, it must hold that~$v(\vx') \leq v(\vy)$.
	\end{claim}
	
	\begin{proof}

	For an instance~$\vz$, let us write~$C_\vz$ for the set of features for which~$\vz$ differs from~$\vx$. We then have on the one hand 
	\[
		v(\vx') = \sum_{i \in C_{\vx'} \setminus C_{\vy}}(1-x_i)w_i + \sum_{i \in C_{\vy} \cap C_{\vx'}}(1-x_i)w_i + \sum_{i \not\in C_{\vx'} \cup C_{\vy}}x_i w_i
		+ \sum_{i \in C_{\vy} \setminus C_{\vx'}}x_i w_i
	\]
	and on the other hand
	\[
		v(\vy) = \sum_{i \in C_{\vy} \setminus C_{\vx'}}(1-x_i)\vw_i + \sum_{i \in C_{\vy} \cap C_{\vx'}}(1-x_i)w_i + \sum_{i \not\in C_{\vx'} \cup C_{\vy}}x_i w_i + \sum_{i \in C_{\vx'} \setminus C_{\vy}}x_i w_i
	\]
	As the two middle terms are shared, we only need to prove that 
	\[
		\sum_{i \in C_{\vx'} \setminus C_{\vy}}(1-x_i)w_i + \sum_{i \in C_{\vy} \setminus C_{\vx'}}x_i w_i
 	\leq
 		\sum_{i \in C_{\vy} \setminus C_{\vx'}}(1-x_i)w_i 
 		+ \sum_{i \in C_{\vx'} \setminus C_{\vy}}x_i w_i
	\]
	which is equivalent to proving that
	\[
		\sum_{i \in C_{\vx'} \setminus C_{\vy}, x_i =0}w_i + \sum_{i \in C_{\vy} \setminus C_{\vx'}, x_i = 1}w_i
 	\leq
 		\sum_{i \in C_{\vy} \setminus C_{\vx'}, x_i = 0} w_i 
 		+ \sum_{i \in C_{\vx'} \setminus C_{\vy}, x_i = 1} w_i
	\]
	and by using the definition of importance, equivalent to
	\[
		\sum_{i \in C_{\vx'} \setminus C_{\vy}, x_i =0}-s(i) + \sum_{i \in C_{\vy} \setminus C_{\vx'}, x_i = 1}s(i)
 	\leq
 		\sum_{i \in C_{\vy} \setminus C_{\vx'}, x_i = 0} -s(i) 
 		+ \sum_{i \in C_{\vx'} \setminus C_{\vy}, x_i = 1} s(i)
	\]
	which can be rearranged into 
	\[
	\sum_{i \in C_{\vy}\setminus C_{\vx'}} s(i) \leq \sum_{i \in C_{\vx'}\setminus C_{\vy}} s(i)
	\]
	But this inequality must hold as~$C_\vx'$ is by definition the set~$C$ of features of size at most~$k$ that maximizes~$\sum_{i \in C} s(i)$.
	\end{proof}

Because of the claim, and the fact that~$v(\vy) < -b$ we conclude that~$v(\vx') < -b$, and thus~$\M(\vx') \neq \M(\vx)$. This concludes the backward direction, and thus, the fact that checking whether~$\M(\vx') \neq \M(\vx)$ is enough to solve the entire problem. Since checking this can be done in linear time, constructing~$\vx'$ is the most expensive part of the process, which can effectively be done in time~$O(|\M|)$. This concludes the proof of the lemma.
\end{proof}

\begin{lemma}
	The \textsc{MinimumChangeRequired} query is~$\NP$-complete for MLPs.
\end{lemma}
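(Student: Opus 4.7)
The plan is to prove both containment in $\NP$ and $\NP$-hardness, combining them to establish $\NP$-completeness.

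For containment in $\NP$, I would note that given an instance $(\M, \vx, k)$, a polynomial-size witness is simply an instance $\vy \in \{0,1\}^n$. A nondeterministic algorithm guesses $\vy$, checks that $\dist(\vx, \vy) \leq k$, and then evaluates both $\M(\vx)$ and $\M(\vy)$ to verify that $\M(\vx) \neq \M(\vy)$. Since $\M$ has rational weights and biases of size polynomial in $|\M|$, and since each layer application in~\eqref{eq:mlp} involves only additions, multiplications, and the functions $\relu$ and $\step$, the evaluation of $\M$ on any input can be carried out in polynomial time in $|\M|$, which suffices for $\NP$ membership.

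For $\NP$-hardness, I would reduce from \textsc{VertexCover}: given a graph $G=(V,E)$ with $V=\{v_1,\ldots,v_n\}$ and an integer $k$, decide whether $G$ admits a vertex cover of size at most $k$. Given such an input, consider the Boolean circuit $C_G$ over variables $z_1,\ldots,z_n$ that computes
\[
C_G(\vz) \;=\; \bigwedge_{(v_i, v_j) \in E} (z_i \lor z_j),
\]
so that $C_G(\vz) = 1$ if and only if the set $\{v_i : z_i = 1\}$ is a vertex cover of $G$. This circuit has size polynomial in $|V|+|E|$, so by Lemma~\ref{lem:circuits-to-MLPs} we can build in polynomial time an MLP $\M_{C_G}$ equivalent to $C_G$ as a Boolean function. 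I then output the MCR instance $(\M_{C_G}, \vx, k)$ where $\vx \coloneqq (0,\ldots,0) \in \{0,1\}^n$.

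For correctness, observe that $\M_{C_G}(\vx) = C_G(\vzero) = 0$ since the empty set does not cover any edge of $G$ (assuming $E$ is nonempty, which we can assume without loss of generality). Because flipping bits from $\vx = \vzero$ to obtain $\vy$ corresponds to selecting the set of vertices $S_\vy = \{v_i : y_i = 1\}$, and $\dist(\vx, \vy) = |S_\vy|$, the existence of $\vy$ with $\dist(\vx,\vy) \leq k$ and $\M_{C_G}(\vy) \neq \M_{C_G}(\vx) = 0$ is equivalent to the existence of a set $S_\vy \subseteq V$ with $|S_\vy| \leq k$ and $C_G(\vy) = 1$, i.e., $S_\vy$ is a vertex cover of size at most $k$. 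This reduction is clearly polynomial time, and so $\NP$-hardness follows.

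The main conceptual step is recognizing that \textsc{VertexCover} is naturally expressible as a bounded-weight Boolean satisfiability problem over a monotone CNF, whose circuit is small and whose unique "base point" $\vzero$ has verdict $0$ and can only be moved to $1$ by hitting a vertex cover. No step is really an obstacle here: the heavy lifting is done by Lemma~\ref{lem:circuits-to-MLPs}, which converts the Boolean circuit into an equivalent MLP in polynomial time, so the reduction reduces to choosing the right circuit.
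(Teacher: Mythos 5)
Your proposal is correct and follows essentially the same approach as the paper: $\NP$-membership by guess-and-check using polynomial-time evaluability of the MLP, and $\NP$-hardness by a reduction from \textsc{VertexCover} via the monotone CNF $\bigwedge_{(u,v)\in E}(x_u\lor x_v)$, converted to an MLP with Lemma~\ref{lem:circuits-to-MLPs} and tested at the base point $\vzero$. You even handle the same edge case (assuming $E$ is nonempty so that $\vzero$ is a negative instance).
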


\begin{proof}
	Membership is easy to see, it is enough to non-deterministically guess an instance~$\vy$ and check that~$d(\vx, \vy) \leq k$ and~$\M(\vx) \neq \M(\vy)$.
	
	In order to prove hardness, we reduce from \textsc{Vertex Cover}.  Given an undirected graph~$G = (V,E)$ and an integer~$k$, the \textsc{Vertex Cover} problem consists in deciding whether there is a subset~$S \subseteq V$ of at most~$k$ vertices such that every edge of~$G$ touches a vertex in~$S$. Let~$(G = (V,E), k)$ be an instance of \textsc{Vertex Cover}, and let~$n$ denote~$|V|$. Based on~$G$, we build a formula~$\varphi_G$, where propositional variables correspond to vertices of~$G$.
	
	\[
	\varphi_G = \bigwedge_{(u, v) \in E} (x_u \lor x_v)
	\]
	
	It is clear that the satisfying assignments of~$\varphi_G$ correspond to the vertex covers of~$G$, and furthermore, that a satisfying assignment of Hamming weight~$k$ (number of variables assigned to~$1$) corresponds to a vertex cover of size~$k$.
	
	Moreover, we can safely assume that there is at least~$1$ edge in~$G$, as otherwise the instance would be trivial, and a constant size positive instance of MCR would finish the reduction. This implies in turn, that we can assume that assigning every variable to~$0$ does not satisfy~$\varphi_G$.
	
	We now build an MLP~$\M_\varphi$ from~$\varphi_G$, using Lemma~\ref{lem:circuits-to-MLPs}. We claim that the instance~$(\M_\varphi, 0^n, k)$ is a positive instance of \textsc{MinimumChangeRequired} if and only if~$(G,k)$ is a positive instance of \textsc{Vertex Cover}.
	
	Indeed,~$0^n$ is a negative instance of~$\M_\varphi$, as assigning  every variable to~$0$ does not satisfy~$\varphi_G$. Moreover a positive instance of weight~$k$ for~$\M_\varphi$ corresponds to a satisfying assignment of weight~$k$ for~$\varphi_G$, which in turn corresponds to a vertex cover of size~$k$ for~$G$. This is enough to conclude conclude the proof, recalling that both the construction of~$\varphi_G$ and~$\M_\varphi$ take polynomial time.
\end{proof}

%
%
\section{Proof of Proposition~\ref{prp:ksuff}}
\label{sec:proof-6}
In this section we prove Proposition~\ref{prp:ksuff}, whose statement we recall here:

\ksuff*

Again, we prove each claim separately.

\begin{lemma}
The \textsc{MinimumSufficientReason} query is $\np$-complete for FBDDs, and hardness holds already for decision trees.
\end{lemma}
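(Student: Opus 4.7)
The plan is to prove $\np$-membership by a standard guess-and-verify strategy, and $\np$-hardness by a polynomial-time reduction from the $\np$-complete problem of deciding whether a directed acyclic graph has a dominating set of size at most $k$~\cite{King_2005}, arranging the reduction so that the object it produces is a decision tree (and hence an FBDD, as required by the stronger claim).

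For membership, on an input $(\M, \vx, k)$ I would nondeterministically guess a subset $S \subseteq \{1, \ldots, n\}$ with $|S| \leq k$, form the partial instance $\vy$ that agrees with $\vx$ on positions in $S$ and equals $\bot$ elsewhere, and verify that $\vy$ is a sufficient reason for $\vx$ with respect to $\M$. This verification runs in polynomial time by Proposition~\ref{prp:checksuff}, which shows that \textsc{CheckSufficientReason} is solvable in $\ptime$ over FBDDs.

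For hardness on decision trees, given a DAG $G = (V, E)$ with vertices $v_1, \ldots, v_n$ listed in topological order together with a parameter $k$, I would build in polynomial time a decision tree $T_G$ on variables $x_1, \ldots, x_n$ (one per vertex) and a positive instance $\vx^\star$, such that $T_G$ admits a sufficient reason of size at most $k$ for $\vx^\star$ iff $G$ has a dominating set of size at most $k$. The correspondence to aim for identifies a sufficient reason with the subset $D \subseteq V$ encoded by its defined positions, and forces $D$ to satisfy every clause of the monotone CNF $\bigwedge_{i} \bigl(x_{v_i} \vee \bigvee_{(v_j,v_i) \in E} x_{v_j}\bigr)$, whose minimum-weight positive satisfying assignments are exactly the minimum dominating sets of $G$.

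The principal obstacle, and the reason the reduction is nontrivial, is keeping $T_G$ of polynomial size: a decision tree that directly computes this CNF blows up exponentially because overlapping in-neighborhoods force the same variable to be relevant in many branches while a decision tree cannot re-query a feature along a path. The DAG structure is essential to avoid this blow-up, by allowing the gadgets to be laid out along a topological ordering and a branch to be short-circuited as soon as a vertex is witnessed to be dominated by an earlier-queried in-neighbor. Once $T_G$ is constructed, correctness then reduces to two monotonicity-based checks: any dominating set of size $\leq k$ lifts to a sufficient reason of the same size, and conversely any sufficient reason of size $\leq k$ for $\vx^\star$ projects back to a dominating set of the same size.
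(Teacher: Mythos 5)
Your proposal matches the paper's approach on both halves: $\np$-membership by guessing a partial instance of at most~$k$ defined components and verifying it with \textsc{CheckSufficientReason} (in $\ptime$ for FBDDs by Proposition~\ref{prp:checksuff}), and $\np$-hardness by a reduction from \textsc{Dom-DAG} that exploits a topological ordering of the DAG to keep the decision tree polynomial-size. Your membership argument is in fact more precise than the sentence in the paper's proof, which appears to have been copied from the MCR membership argument by mistake.

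Where the proposal stops short of a proof is in the actual gadget and its verification. The paper's tree is a ``spine'' chain $v_n, v_{n-1}, \dots, v_1$, with $v_i$ labeled by the $i$-th vertex $\varphi_i$ in topological order, where each $v_i$'s $1$-edge goes to $v_{i-1}$ and its $0$-edge goes into a linear-size subtree $\mathcal{T}_i$ computing $\bigvee_{(\varphi_j,\varphi_i)\in E} x_{\varphi_j}$; the input instance is $\vx = 1^n$. The reason the topological ordering matters is not (only) to allow short-circuiting, but to guarantee that no feature is queried twice on any root-to-leaf path (every label in $\mathcal{T}_i$ precedes $\varphi_i$ in the order, while the spine above $v_i$ contains only $\varphi_i, \dots, \varphi_n$), i.e., that the object produced is actually a valid decision tree. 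Your phrase ``short-circuited as soon as a vertex is witnessed to be dominated by an earlier-queried in-neighbor'' gestures at this but does not pin down the construction, and the forward direction of correctness is the genuinely nontrivial part: one must argue that a small sufficient reason for $1^n$ cannot exploit the spine directly, which in the paper hinges on the FBDD property forbidding repeated labels along a path. That argument, and the explicit gadget, are what would need to be added to turn your sketch into a complete proof.
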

\begin{proof}
Membership in NP is clear, it suffices to guess the instance $\vy$ and check both that it has less than $k$ defined components and that is a sufficient reason for $\vx$, which can be done thanks to Lemma~\ref{lem:CheckSufficientReason-fbdds}.
We will prove that hardness holds already for the particular case of decision trees, and when the input instance $\vx$ is positive. Hardness of this particular setting implies of course the hardness of the general problem. In order to do so, we will reduce from the problem of determining whether a directed acyclic graph has a dominating set of size at most $k$, which we abbreviate as \textsc{Dom-DAG}. 
Recall that in a directed graph $G = (V,E)$, a subset of vertices $D \subseteq V$ is said to be dominating if every vertex in $V \setminus D$ has an incoming edge from a vertex in $D$. 
The problem of \textsc{Dom-DAG} is shown to be NP-complete in \cite{King_2005}. 

An illustration of the reduction is presented in Figure~\ref{fig:2}. Let $\left( G = (V,E), k \right)$ be an instance of \textsc{Dom-DAG}, and let us define $n := |V|$.
We start by computing in polynomial time a topological ordering $\varphi = \varphi_1, \ldots, \varphi_{n}$ of $G$.  
Next, we will create an instance $(\mathcal{T}, \vx, k)$ of \textsc{$k$-SufficientReason} such that there is a sufficient reason of size at most $k$ for $\vx$ under the decision tree $\mathcal{T}$ if and only if $G$ has a dominating set of size at most $k$. We create the decision tree $\mathcal{T}$, of dimension $n$, in 2 steps.
\begin{enumerate}
	\item Create nodes $v_1, \ldots, v_{n}$, where node $v_i$ is labeled with $\varphi_i$ The node $v_n$ will be the root of~$\mathcal{T}$, and for $2 \leq i \leq n$, connect $v_i$ to $v_{i-1}$ with an edge labeled with $1$. Node $v_1$ is connected to a leaf labeled $\true$ through an edge labeled with $1$. We will denote the path created in this step as $\pi$.
	\item For every vertex $\varphi_i$ create a decision tree $\mathcal{T}_i$ equivalent to the boolean formula
	\[ 
		\mathcal{F}_i = 	\bigvee_{(\varphi_j, \varphi_i) \in E} \varphi_j 
	\]
	and create an edge from $v_i$ to the root of $\mathcal{T}_i$ labeled with $0$. If $\mathcal{F}_i$ happens to be the empty formula, $\mathcal{T}_i$ is defined as $\false$. 
	Note that the nodes introduced by this step are all naturally associated with vertices of $G$.
\end{enumerate}

\begin{figure}
	\begin{subfigure}{.25\textwidth}
	\centering
		\begin{tikzpicture}
		\begin{scope}[every node/.style={circle,thick,draw}]
			\node (1) at (0, 0) {1};
			\node[ultra thick] (2) at (-1, 0) {2};
			\node (3) at (1, 0) {3};
			\node (4) at (-0.5, -1) {4};
			\node[ultra thick] (5) at (0.5, -1) {5};
			\node (6) at (0, 1) {6};
		\end{scope}
	    \begin{scope}[every path/.style={->, thick}]
			\path (1) edge (6);
			\path (1) edge (3);
			\path (2) edge (1);
			\path (2) edge (6);
			\path (4) edge (2);
			\path (4) edge (1);
			\path (5) edge (1);
			\path (5) edge (3);
			\path (5) edge (4);
			\path (6) edge (3);
		\end{scope}
		\end{tikzpicture}
		\caption{Example of an input DAG. Nodes $2$ and $5$, corresponding to the minimum dominating set of $G$ are emphasized.}
	\end{subfigure}
	\begin{subfigure}{.25\textwidth}
		\centering
		\begin{tikzpicture}
		\begin{scope}[every node/.style={circle,thick,draw}]
			\node (5) at (0, -2) {5};
			\node (4) at (0, -1) {4};
			\node (2) at (0, 0) {2};
			\node (1) at (0, 1) {1};
			\node (6) at (0, 2) {6};
			\node (3) at (0, 3) {3};
		\end{scope}
	    \begin{scope}[every path/.style={->, thick}]
			\path (1) edge (6);
			\path (1) edge [bend left=30]  (3);
			\path (2) edge (1);
			\path (2) edge [bend right=30] (6);
			\path (4) edge (2);
			\path (4) edge [bend left=30]  (1);
			\path (5) edge [bend right=30] (1);
			\path (5) edge [bend left=30] (3);
			\path (5) edge (4);
			\path (6) edge (3);
		\end{scope}
		\end{tikzpicture}
		\caption{A topological ordering $\varphi$ of $G$.}
	\end{subfigure}
	\begin{subfigure}{.5\textwidth}
		\begin{tikzpicture}
		\begin{scope}[every node/.style={thick,draw}]	
			\node (3) at (0, 0) {3};
			\node (6) at (0.8, -0.6) {6};
			\node (1) at (1.6, -1.2) {1};
			\node [ultra thick] (2) at (2.4, -1.8) {2};
			\node (4) at (3.2, -2.4) {4};
			\node[ultra thick, text=black] (5) at (4, -3) {5};			
			\node[ultra thick, text=black] (s3_1) at (-0.8, -0.6) {5};
			\node[text=black] (s3_2) at (-1.6, -1.2) {1};
			\node[text=black] (s3_3) at (-2.4, -1.8) {6};
			
			\node[text=black] (s6_1) at (0, -1.2) {1};
			\node[ultra thick, text=black] (s6_2) at (-0.8, -1.8) {2};
			
			\node[text=black] (s1_1) at (0.8, -1.8) {5};
			\node[text=black] (s1_2) at (0, -2.4) {4};
			\node[ultra thick, text=black] (s1_3) at (-0.8, -3) {2};
			
			\node[text=black] (s2_1) at (1.6, -2.4) {4};
			
			\node[ultra thick, text=black] (s4_1) at (2.4, -3) {5};
		\end{scope}
		
		
		\begin{scope}[every path/.style={->, thick}]
			\path (3) edge (6);
			\path (6) edge (1);
			\path (1) edge (2);
			\path (2) edge (4);
			\path (4) edge (5);
			\path (3) edge (s3_1);
			\path (s3_1) edge (s3_2);
			\path (s3_2) edge (s3_3);
			\path (6) edge (s6_1);
			\path (s6_1) edge (s6_2);
			\path (1) edge (s1_1);
			\path (s1_1) edge (s1_2);
			\path (s1_2) edge (s1_3);
			\path (2) edge (s2_1);
			\path (4) edge (s4_1);
		\end{scope}
		\end{tikzpicture}
		\caption{Resulting decision tree $\mathcal{T}$. Edges to the left of a node are always labeled with~$0$, and edges to the right with~$1$. The leaves are not depicted for clarity, but: if a node has no right child in the picture, then its right child is~$\true$, and if it has no left child then its left child is~$\false$. Note that in every diagonal there is an emphasized node, which is either $2$ or $5$, implying the partial instance $(\bot,1,\bot,\bot,1,\bot)$ is a sufficient reason for the instance~$\vx = (1,1,1,1,1,1)$.}
	\end{subfigure}
	\caption{Illustration of the reduction from \textsc{Dom-DAG} to \textsc{$k$-SufficientReason} over decision trees, for an example graph of $6$ nodes.}
	\label{fig:2}
\end{figure}
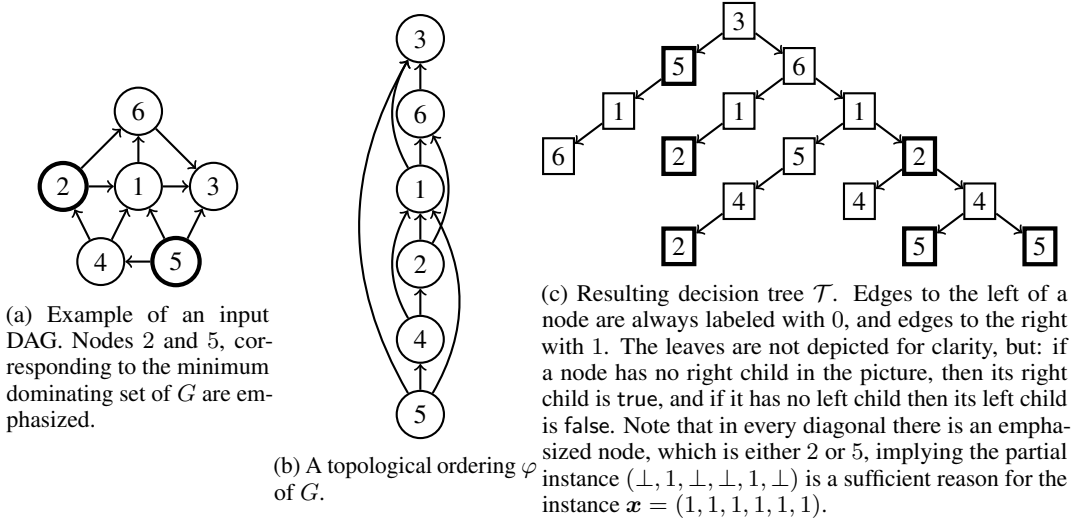

Step 2 takes polynomial time because boolean formulas in 1-DNF can easily be transformed into a decision tree in linear time.

We now check that $\mathcal{T}$ is a decision tree. Since~$\mathcal{T}$ has a tree structure, it is enough to check that for every path from the root to a leaf there are no two nodes on the path that have the same label (i.e., to check that~$\mathcal{T}$ is a valid FBDD). Note that any path from the root $v_n$ to a leaf goes first to a certain node $v_i$ in~$\pi$, from where it either takes an edge labeled with $0$, in case $i \neq 1$ or it simply goes to a leaf otherwise. In case $i = 1$, the path from the root goes exactly through $v_n, v_{n-1}, \ldots, v_1$, which all have different labels. 
In case $i \neq 1$, the path includes 
(i) nodes with labels $\varphi_n, \varphi_{n-1}, \ldots, \varphi_i$, and (ii) a subpath inside $\mathcal{T}_i$. It is clear that all the labels in (i) are different. And as by construction $\mathcal{T}_i$ is a decision tree, no two nodes inside (ii) can have the same label. It remains to check that no node in (i) can have the same label of a node in (ii). To see this, consider that all the vertices of $G$ associated to the nodes in (ii) have edges to $\varphi_i$ in $G$, and thus come before $\varphi_i$ in the topological order. But (i) is composed precisely by $\varphi_i$ and the nodes who come after it in the topological ordering, so (i) and (ii) have empty intersection. 	

Let $\vx = 1^n$ be the vector of $n$ ones. We claim that $(\mathcal{T},\vx, k)$ is a yes-instance of~\textsc{$k$-SufficientReason} if and only if $(G,k)$ is a yes-instance of~\textsc{Dom-DAG}.
  
\textbf{Forward direction.} Consider that there is a sufficient reason $\vy$ for $\vx$ under $\mathcal{T}$ of size at most $k$. As $\vx$ contains only $1$s, $\vy$ must contain only $1$s and~$\bot$s. Consider the set $S$ of components $i$ where $\vy_i = 1$. Recalling that every vertex of $G$ is canonically associated with a feature of $\mathcal{T}$, we will denote $D_S$ to the set of vertices of $G$ that are associated with the features in $S$. 
It is clear that $|D_S| \leq k$. We now prove that $D_S$ is a dominating set of $G$. First, in case $D_S = V$, we are trivially done. We know assume $D_S \neq V$. 
Consider a vertex~$v \in V \setminus D_S$, corresponding to~$\varphi_i$ in the topological ordering, and define~$\vz$ as the completion of~$\vy$ where the features~$\varphi_j$ such that~$j > i$, are set to~$1$, and all other features that are undefined by~$\vy$ are set to~$0$. 
By hypothesis,~$\vz$ must be a positive instance, and so its path on~$\mathcal{T}$ must end in a leaf labeled with~$\true$. 
Note that the path of~$\vz$ in~$\mathcal{T}$ necessarily takes the path~$\pi$ created in Step 1 of the construction, up to the node~$v_i$, and then enters its subtree~$\mathcal{T}_i$. 
Let~$t$ be the node of~$\mathcal{T}_i$ whose leaf labeled with~$\true$ ends the path of~$\vz$ in~$\mathcal{T}$, and~$\varphi_k$ its label and associated vertex in~$G$. As feature~$t$ is set to~$1$, we must have either~$\varphi_k \in D_S$ (in case~$t$ is~$1$ because of~$\vy$) or~$k > i$ (in case~$t$ is~$1$ by the construction of completion~$\vz$).
However, the second case is not actually possible, as if~$k > i$, that means~$v_k$ comes before~$v_i$ in path~$\pi$, and thus the path of~$\vz$ in~$\mathcal{T}$ passes through~$v_k$, which has label~$\varphi_k$ before passing through~$v_i$. But the path of~$\vz$ in~$\mathcal{T}$ passes through~$t$ before ending, which also has label~$\varphi_k$. This contradicts the already proven fact that~$\mathcal{T}$ is a decision tree. 
We can therefore assume that~$\varphi_k$ belongs to ~$D_S$. Then, as~$t$ is a node of~$\mathcal{T}_i$, there must be an edge~$(\varphi_k, \varphi_i)$ in~$E$ because of the way~$\mathcal{T}_i$ is constructed. But this means that vertex~$v \in V \setminus D_S$ has an edge coming from~$\varphi_k \in D_S$, and so~$v$ is effectively dominated by the set~$D_S$. As this holds for every~$v \in V \setminus D_S$, we conclude that~$D_S$ is indeed a dominating set of~$G$.

\textbf{Backward Direction.} Consider that there is a dominating set~$D \subseteq V$ of size at most~$k$. Let~$S_D$ be the set of features associated with~$D$. We claim that the partial instance~$\vy$ that has~$1$ in the features that belong to~$S_D$, and is undefined elsewhere, is a sufficient reason for~$\vx$, and by construction its size is at most~$k$. Consider an arbitrary completion~$\vz$ of~$\vy$, we need to show that~$\vz$ is a positive instance of~$\mathcal{T}$. For~$\vz$ not to be a positive instance, its path on~$\mathcal{T}$ would have to reach a leaf labeled with~$\false$. 
This can only happen by either taking the edge labeled with~$0$ from~$v_1$ (the last node in path~$\pi$ built in the construction), or inside a subtree~$\mathcal{T}_i$, corresponding to a node~$v_i$ whose associated feature in~$\vz$ is set to~$0$. We show that neither can happen. 
For the first case, every dominating set must include~$\varphi_1$, the vertex in~$G$ associated with~$v_1$, as it is the first element in the topological ordering of~$G$,  and thus it must has in-degree~$0$, which implies~$\varphi_1 \in D$. 
Therefore, it is not possible to take the edge labeled with~$0$ from~$v_1$. On the other hand, suppose the path of~$\vz$ in~$\mathcal{T}_i$ ends in a leaf labeled with~$\false$. 
Then, by construction of~$\mathcal{T}_i$, there is no vertex~$\varphi_j$ such that~$(\varphi_j, \varphi_i) \in E$ whose associated feature is set to~$1$ in~$\vz$. But as~$D$ is a dominating set, either there is indeed a~$\varphi_j \in D$ such that~$(\varphi_j, \varphi_i) \in E$ or~$\varphi_i \in D$. 
The first case is in direct contradiction with the previous statement, as~$\varphi_j \in D$ implies, by our construction of~$\vy$ that the feature associated with~$\varphi_j$ is set to~$1$.
The second case also creates a contradiction, as if $\varphi_i \in D$, then by construction $\vy$ would have a $1$ in the feature $v_i$ associated to $\varphi_i$, which contradicts the assumption of the path of $\vz$ entering $\mathcal{T}_i$. 

\end{proof}

\begin{lemma}
The \textsc{MinimumSufficientReason} query is in \ptime\ for perceptrons.
\end{lemma}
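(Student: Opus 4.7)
The plan is to design a greedy polynomial-time algorithm in the spirit of the one used for \textsc{MCR} on perceptrons (Lemma~\ref{lemma:mcr-perceptron}), using the same notion of \emph{importance} of a feature. Let $(\M = (\vw, b), \vx, k)$ be the input and assume without loss of generality that $\M(\vx)=1$ (the case $\M(\vx)=0$ is symmetric by flipping the sign of $(\vw,b)$). For a subset $S \subseteq \{1,\dots,n\}$, let $\vy^S$ denote the partial instance that agrees with $\vx$ on $S$ and is $\bot$ elsewhere; note that every partial instance which is a sub-vector of $\vx$ arises in this way. Since $\M(\vx)=1$, the instance $\vy^S$ is a sufficient reason for $\vx$ if and only if the \emph{worst-case completion value} is still above the threshold, i.e.\ $W(S) \geq -b$, where
\[
W(S) \;:=\; \min_{\vx' \text{ completion of } \vy^S} \langle \vw, \vx'\rangle \;=\; \sum_{i \in S} w_i x_i \;+\; \sum_{i \notin S,\, w_i < 0} w_i,
\]
the last equality holding because an adversary choosing the completion will set $x'_i = 0$ when $w_i > 0$ and $x'_i = 1$ when $w_i < 0$ on each $i \notin S$.

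The second step is to reformulate this constraint in terms of the importance $s(i)$ defined in the proof of Lemma~\ref{lemma:mcr-perceptron} (namely $s(i) = w_i$ if $x_i = 1$ and $s(i) = -w_i$ if $x_i = 0$). Setting $B := \sum_{i:\, w_i < 0} w_i$, a direct case analysis over the four combinations of $\mathrm{sign}(w_i)$ and $x_i \in \{0,1\}$ shows that the contribution of $i \in S$ to $W(S) - B$ equals exactly $\max(0, s(i))$. Hence $\vy^S$ is a sufficient reason for $\vx$ if and only if
\[
\sum_{i \in S} \max(0, s(i)) \;\geq\; -b - B.
\]

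The algorithm then becomes transparent: sort the features in decreasing order of $s(i)$ (using the same linear-time bucketsort variant referenced in Lemma~\ref{lemma:mcr-perceptron}), greedily pick features with $s(i) > 0$ from the top of the order, and output \textsc{Yes} iff the threshold $-b - B$ is met using at most $k$ features. Greedy optimality is immediate because only features with $s(i) > 0$ contribute positively to the left-hand side, and picking those with largest $s(i)$ first maximizes the accumulated sum per feature included; feasibility is guaranteed since $S = \{1,\dots,n\}$ yields $W(S) = \langle \vw, \vx\rangle \geq -b$. The whole procedure clearly runs in polynomial (in fact, essentially linear) time in $|\M|$.

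I expect the only delicate point to be a careful justification of the identity for $W(S)$, in particular arguing both that the adversarial completion is the claimed one and that the rewriting in terms of $\max(0, s(i))$ is tight across all four sign/value cases; once that is nailed down, the greedy correctness and the polynomial-time bound follow routinely.
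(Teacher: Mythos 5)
Your proposal is correct, and it is essentially the same algorithm as the paper's: characterize sufficiency of a sub-vector of $\vx$ by the worst-case completion value (your $W(S)$ is exactly the paper's $\psi$, with penalty $p(i)=\min(0,w_i)$), sort by importance $s(i)$, and greedily take the top features. The difference lies in how optimality of the greedy choice is justified. The paper works with prefixes of the sorted order and proves minimality via an exchange argument (Claim~\ref{claim:psi}), whose correctness rests on a case analysis it describes as tedious. You instead push the case analysis up front to obtain the identity $W(S)=B+\sum_{i\in S}\max(0,s(i))$ with $B=\sum_{i:\,w_i<0}w_i$, which indeed checks out in all four sign/value cases ($w_ix_i-\min(0,w_i)=\max(0,s(i))$); after this, the problem literally becomes ``can the $k$ largest of the non-negative quantities $\max(0,s(i))$ sum to at least $-b-B$,'' so greedy optimality is immediate rather than argued by swaps. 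This is a cleaner packaging of the same idea, and your feasibility remark ($S=\{1,\ldots,n\}$ gives $W(S)=\langle\vw,\vx\rangle\ge -b$) correctly covers the corner case. One small point to nail down in a full writeup, which the paper also glosses over: the reduction of the case $\M(\vx)=0$ to $\M(\vx)=1$ by negating $(\vw,b)$ has a strict-versus-nonstrict inequality wrinkle (since $\step$ accepts at $0$); either handle that case directly with the maximizing completion and a strict threshold, or negate after scaling to integers as the paper does elsewhere.
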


\begin{proof}
Let~($\mathcal{M} = (\vw, b),\vx,k)$ be an instance of the problem, and let~$d$ be the dimension of the perceptron.
We will assume without loss of generality that~$\M(\vx)=1$.  In this proof, what we call a \emph{minimum sufficient
reason for~$\vx$} is a sufficient reason for~$\vx$ that has the least number of
components being defined.
We show a greedy algorithm that computes a minimum
sufficient reason for~$\vx$ under~$\mathcal{M}$ in
time~$O(|\mathcal{M}|)$. For each feature~$i$ of~$\vx$ we
define its importance~$s(i)$ as~$w_i$ if~$x_i = 1$ and~$-w_i$
otherwise (just as we did in the proof of Lemma~\ref{lemma:mcr-perceptron}), and its \textit{penalty}~$p(i)$ as~$\min(0, w_i)$. Intuitively,
$s$ represents how good it is for a partial instance to be defined in a given
feature, and~$p$ represents the penalty or cost that a partial instance incurs
by not being defined in a given feature.  We now assume that~$\vx$ and~$\vw$
have been sorted in decreasing order of score~$s$. For example, if originally
$\vw = (3, -5, -2)$ and~$\vx = (1,0,1)$, then after the sorting procedure we
have ~$\vw = (-5, 3, -2)$ and~$\vx = (0,1,1)$. We now define a function~$\psi$
that takes any partial instance~$\vy$ as input and outputs the worst possible
value for a completion of~$\vy$:
	\[
	\psi(\vy) \coloneqq \min_{\vz : \, \vz \text { is a completion of } \vy} \langle \vw, \vz\rangle = \sum_{y_i \neq \bot} w_i y_i + \sum_{y_i = \bot} p(i). 
	\]
	
The second equality is easy to see based on the definition of the function $p$, and the definition of $\psi$ implies that $\psi(\vy) \geq -b$ exactly when $\vy$ is a sufficient reason. For $1 \leq l \leq d$, we define $\vy^l$ as the partial instance of $\vx$ such that $y^l_i$ is equal to $x_i$ if $i \leq l$ and to $ \bot$ otherwise. In simple terms, $\vy^l$ is the partial instance obtained by taking the first $l$ features of $\vx$; continuing our example with~$\vx = (0,1,1)$, we have for instance~$\vy^2 = (0,1,\bot)$.
Let $j$ be the minimum index such that $\psi(\vy^j) \geq  - b$.  Such an index always exists, because, since $\vx$ is a positive instance, taking $j = d$ is always a valid index. Note that $j$ can be computed in linear time.

  	We now prove that ($\dagger$) the partial instance $\vy^j$  is a minimum sufficient reason for $\vx$.
By definition we have that $\psi(\vy^j) \geq  - b$, so $\vy^j$ is indeed a
sufficient reason for $\vx$. We now need to show that~$\vy^j$ is minimum.
Assume, seeking for a contradiction, that there is a sufficient reason~$\vy'$
of~$\vx$ with strictly less components defined than $\vy^j$; clearly we can
assume without loss of generality that $\vy'$ has exactly $j-1$ components
defined.  We will now show that ($\star$) $\vy^{j-1}$ is a also a sufficient
reason for $\vx$, which is a contradiction since~$j$ was assumed to be the
minimal index such that~$\vy^j$ is a sufficient reason of~$\vx$, hence proving~($\dagger$). If~$\vy' = \vy^{j-1}$, we have that ($\star$) is
trivially true. Otherwise, and considering that~$\vy'$ and~$\vy^{j-1}$ have the same
size, and that~$\vy^{j-1}$ is defined exactly on the first~$j-1$ features,
there must be at least a pair of features~$(m, n)$, with~$m \leq j-1 < n$, such
that~$\vy^{j-1}$ is defined at feature~$m$ and~$\vy'$ is not, and on the other
hand~$\vy'$ is defined at feature~$n$ whereas~$\vy^{j-1}$ is not. In order to
finish the proof of ($\star$), we will prove a simpler claim that will
help us conclude.
	
	\begin{claim}
    Assume that there is a pair of features~$(m, n)$ with~$m\leq j-1<n$ such
that $y'_m = \bot, y^{j-1}_m \neq \bot$ and $y'_n \neq \bot, y^{j-1}_n
= \bot$, and let $\vy^*$ be the resulting partial instance that is equal
to~$\vy'$ except that $y^*_m := y^{j-1}_m$ and $y^*_n := \bot$. Then we
have that  $\psi(\vy^*) \geq \psi(\vy')$.
	\label{claim:psi}
	\end{claim}
\begin{proof}[Proof of Claim~\ref{claim:psi}]
By definition, $\psi(\vy^*) - \psi(\vy') = p(n) - p(m) + w_m y^{j-1}_m -
w_n y'_n =  p(n) - p(m) + w_m x_m - w_n x_n $. But because the features
in $y^{j-1}$ are sorted in decreasing order of score, it must hold that
$s(m) \geq s(n)$. Using this last inequality and reasoning by cases on the
values $x_m, x_n$ and on the signs of $w_m, w_n$, one can tediously check
that $\psi(\vy^*) - \psi(\vy') \geq 0$ and thus $\psi(\vy^*) \geq \psi(\vy')$.
\end{proof}

We now continue with the proof of ($\star$).  As a result of
Claim~\ref{claim:psi}, one can iteratively modify $\vy'$ until it becomes equal
to~$\vy^{j-1}$ in such a way that the value of $\psi$ is never decreased along
the process, implying therefore that $\psi(\vy^{j-1}) \geq \psi(\vy')$.
But~$\psi(\vy') \geq -b$, because $\vy'$ is assumed to be a sufficient reason,
hence we have that $\psi(\vy^{j-1}) \geq -b$, implying that $\vy^{j-1}$ is a
sufficient reason for $\vx$, and thus concluding the proof of ($\star$).
Therefore, ($\dagger$) is proven, and since~$\vy^j$ can clearly be
computed in polynomial time (in fact, the runtime of the whole procedure is dominated by the sorting subroutine, which again has cost~$O(|\M|)$ as it is a classical problem of sorting strings whose total length add up to $|\M|$ and can be carried with a variant of Bucketsort \cite{CLRS}), this finishes the proof of the lemma; indeed, we can output $\textsc{Yes}$ if~$j\leq k$ and $\textsc{No}$ otherwise.\qedhere
\end{proof}

\begin{lemma}
\label{lem:sigma2p}
The \textsc{MinimumSufficientReason} query is $\Sigma_2^p$-complete for MLPs.
\end{lemma}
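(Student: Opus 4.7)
The plan is to establish both $\Sigma_2^p$-membership and $\Sigma_2^p$-hardness of~$\textsc{MSR}(\C_\text{\emph{MLP}})$. Membership follows from the natural $\exists\forall$ structure of the problem: given an instance $(\M,\vx,k)$, one nondeterministically guesses a partial instance $\vy\in\{0,1,\bot\}^n$ compatible with $\vx$ and satisfying $\|\vy\|\leq k$, and then verifies in $\conp$ that every completion $\vx'$ of $\vy$ satisfies $\M(\vx')=\M(\vx)$. Since $\M$ can be evaluated in polynomial time, the inner check is indeed in $\conp$, which places $\textsc{MSR}(\C_\text{\emph{MLP}})$ in $\Sigma_2^p$.

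For the matching lower bound, I would reduce from Umans' \textsc{Shortest Implicant Core} problem~\cite{Umans2001}, known to be $\Sigma_2^p$-complete: given a DNF $\varphi(z_1,\ldots,z_n)$ and an integer $k$, decide whether $\varphi$ admits an implicant (a conjunction of at most $k$ literals that implies $\varphi$). Given such an instance $(\varphi,k)$, the reduction would output an MLP $\M_\Psi$, an instance $\vx^*$, and an integer $k'=k$ such that $\varphi$ has an implicant of size $\leq k$ iff $\vx^*$ has a sufficient reason w.r.t.~$\M_\Psi$ of size $\leq k'$. The MLP would be built by first constructing a Boolean formula $\Psi$ using a doubled-variable encoding, with features $(a_i,b_i)$ per variable $z_i$ of $\varphi$---so that fixing $a_i=1$ (resp.~$b_i=1$) in the sufficient reason encodes the inclusion of the literal $\neg z_i$ (resp.~$z_i$) in the implicant---and then applying Lemma~\ref{lem:circuits-to-MLPs} to convert $\Psi$ into an equivalent MLP in polynomial time. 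The completions of a sufficient reason would play the role of universally quantified Boolean assignments~$\alpha$, with $\Psi$ expressing that whenever $\alpha$ satisfies the encoded implicant, it must also satisfy $\varphi$.

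The main technical obstacle is ensuring an exact, size-preserving correspondence between sufficient reasons of $\vx^*$ in $\M_\Psi$ and implicants of $\varphi$. A naive choice such as $\Psi(\va,\vb)=\bigvee_i(a_i\land b_i)\lor\varphi(\vb)$ handles the ``implicant yields sufficient reason'' direction but admits spurious degenerate sufficient reasons, for instance the size-$2$ partial instance that fixes both $a_i$ and $b_i$, which activates the disjunct $a_i\land b_i$ on every completion and would only correspond to the inconsistent term $z_i\land\neg z_i$. Ruling out such degeneracies is the delicate step: it typically requires augmenting $\Psi$ with additional features encoding the universal assignment $\alpha$ along with explicit consistency checks, so that fixing both $a_i$ and $b_i$ can no longer yield a sufficient reason. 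Once the encoding is designed carefully, the biconditional follows from a routine case analysis over the possible completions of the sufficient reason.
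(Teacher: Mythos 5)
Your membership argument matches the paper's. For hardness, however, you have misstated the target problem and chosen a reduction strategy that has a gap you acknowledge but do not close.

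The problem you describe --- ``given a DNF $\varphi$ and $k$, decide whether $\varphi$ admits an implicant of at most $k$ literals'' --- is the general \textsc{Shortest Implicant} problem, not \textsc{Shortest Implicant Core}. The Core variant requires in addition that the implicant be a sub-term of the \emph{last} disjunct $t_n$ of $\varphi$, i.e.\ that every literal of the implicant already appear in $t_n$. This distinction is exactly what the paper exploits: a sufficient reason for $\vx$ can only fix features to the values they have in $\vx$, just as a core implicant can only use literals already present in $t_n$. By picking $\vx$ to be a completion of $t_n$, the paper obtains for free the one-to-one polarity correspondence that you are trying to manufacture with your doubled-variable $(a_i,b_i)$ encoding. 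The degeneracy you worry about (fixing both $a_i$ and $b_i$) simply does not arise in the paper's reduction, because there is only one feature per variable.

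The issue the paper does have to handle is a different one: a size-$\leq k$ sufficient reason for $\vx$ might fix features corresponding to variables \emph{not} occurring in $t_n$, which would not correspond to a core implicant. The paper's fix is a pigeonhole gadget, not a consistency-check gadget: each variable $x_j$ not in $t_n$ is split into $k+1$ copies $x_j^1,\ldots,x_j^{k+1}$, and $\varphi'$ is taken to be $\bigwedge_{i=1}^{k+1}\varphi^{(i)}$ where $\varphi^{(i)}$ substitutes the $i$-th copies. Any sufficient reason of size $\leq k$ must then miss some index $l$, and its restriction to core variables is already an implicant of $\varphi^{(l)}$ (hence of $\varphi$, by renaming). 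Your proposal does not contain this idea, and the ``additional features encoding the universal assignment $\alpha$ along with explicit consistency checks'' you gesture at is left entirely unconstructed; it is precisely the step where the reduction could fail. As written, the hardness half of your proof is incomplete.
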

\begin{proof}
Membership in~$\Sigma_2^p$ is clear, as one can non-deterministically guess the value of the $k$ features that would make for a sufficient reason, and then use an oracle in co-NP to verify that no completion of that guess has a different classification.
To show hardness, we will reduce from the problem \textsc{Shortest Implicant Core}, defined and proven
to be $\Sigma_2^p$-hard by Umans \cite[Theorem 1]{Umans2001}. First, we need a few
definitions in order to present this problem.  A formula in \emph{disjunctive
normal form} (DNF) is a Boolean formula of the form~$\varphi = t_1 \lor t_2
\lor \ldots \lor t_n$, where each \emph{term}~$t_i$ is a conjunction of
literals (a literal being a variable of the negation thereof). An
\emph{implicant} for~$\phi$ is a partial assignment of the variables of~$\phi$
such that any extension to a full assignment makes the formula evaluate
to~$\true$; note that we can equivalently see an implicant of~$\phi$ as  what
we call a sufficient reason of~$\phi$. For a partial assigment~$C$
of the variables and for a set of literals~$t$ (or conjunction of
literals~$t$), we write~$C \subseteq t$ when for every variable~$x$, if~$x\in
t$ then~$C(x)=1$ and if~$\lnot x \in t$ then~$C(x)=0$ and~$C(x)$ is undefined
otherwise.  An instance of \textsc{Shortest Implicant Core} then consists of a
DNF formula $\varphi = t_1 \lor t_2 \lor \ldots \lor t_n$, together with an
integer $k$. Such an instance is positive for \textsc{Shortest Implicant Core}
when there is an implicant $C$ for $\varphi$ such that~$C \subseteq
t_n$.\footnote{Note that, in order to keep our notation consistent, we use the
symbol $\subseteq$ where Umans uses $\supseteq$.} Note that the  \textsc{Shortest Implicant Core} is closer to the problem at hand than the general \textsc{Shortest Implicant} problem, as  (minimum) sufficient reasons of an instance $\vx$ can only induce literals according to $\vx$, in a similar fashion of implicants that can only induce literals according to the core $t_n$.

\paragraph{A reduction that does not work, and how to fix it on an example.} 
In order to convey the main intuition, we start by presenting a first tentative
of a reduction that does not work. Thanks to Lemma~\ref{lem:circuits-to-MLPs} we
know that it is possible to build an MLP $\mathcal{M}_{\varphi}$ equivalent to
$\varphi$. However, doing so directly creates a problem: we would need to find
a convenient instance $\vx$ such that $(\varphi, k) \in \textsc{Shortest
Implicant Core}$ if and only if $(\mathcal{M}_\varphi, \vx, k) \in
k\textsc{-SufficientReason}$. A natural idea is to consider
$t_n$ as a candidate for $\vx$, but the issue is that $t_n$ does not
necessarily include every variable. The next natural idea is to try with $\vx$ being
an arbitrary completion of $t_n$ (interpreting $t_n$ as the partial instance
that is uniquely defined by its satisfying assignment). This approach fails
because there could be a sufficient reason of size at most $k$ for such an $\vx$
that relies on features (variables) that are not in $t_n$.  We illustrate this
with an example for $n = 4$.
\[
	\varphi \coloneqq x_1 \overline{x_5} \lor \overline{x_2}\, \overline{x_6} \lor x_3 x_6
	\lor \overline{x_1}\, \overline{x_2} x_4 \lor \underbrace{x_1 x_3 x_5}_{t_4}
\]
While it can be checked that $(\varphi, 2) \not\in \textsc{Shortest Implicant
Core}$,
we have that $(\mathcal{M}_\varphi, (1, 0, 1, 0, 1, 1), 2)$ is in fact a positive
instance of \textsc{$k$-SufficientReason}, as the partial instance that assigns
$1$ to $x_3$ and $x_6$ and is undefined for the rest of the features, is a
sufficient reason of size $2$ for $\vx$. The issue is that we are allowing
$x_6$ to be part of the sufficient reason for $\vx$ even though~$x_6 \not \in t_4$.
We can avoid this from happening by splitting each variable that is not in
$t_n$, such as $x_6$, into $k+1$ variables, in such a way that defining the value of
$x_6$ would force us to define the value of all the $k+1$ variables, which is of
course unaffordable. Continuing with the example, we build the formula $\varphi'$ as
follows:

\[
	\varphi' \coloneqq  \bigwedge_{i=1}^3 \left( x_1 \overline{x_5} \lor \overline{x_2^i}\, \overline{x_6^i} \lor x_3 x_6^i
	\lor \overline{x_1}\, \overline{x_2^i} x_4^i \lor x_1 x_3 x_5 \right)
\]

Now we can simply take $(\mathcal{M}_{\varphi'}, \vx, 1)$ where $\vx$ is an arbitrary completion of $t_4$ over the new set of variables, for example, one that assigns~$1$ to the features~$1,3$ and~$5$, and~$0$ to all other features (variables).
Note that~$\varphi'$ is not a DNF anymore, but this is not a problem, since we only need to compute~$\M_{\varphi'}$.
It is then easy to check that this instance is equivalent to the original input instance. 

\paragraph{The reduction.} 
We now present the correct reduction and prove that it works.  Let~$(\varphi,
k)$ be an instance of \textsc{Shortest Implicant Core}. Let~$X_{c}$ be the set
of variables that are not mentioned in~$t_n$. We split every variable~$x_j \in
X_{c}$ into~$k+1$ variables~$x_j^1, \ldots x_j^{k+1}$ and for each~$i \in \{1,
\ldots, k+1\}$ we build~$\varphi^{(i)}$ by replacing every occurrence of a
variable~$x_j$, that belongs to~$X_{c}$, by~$x_j^i$. Finally we define
$\varphi'$ as the conjunction of all the~$\varphi^{(i)}$. That is,

\begin{align}
	\varphi^{(i)} &\coloneqq \varphi [x_j \to x_j^i \text {, for all } x_j \in X_c]\\
	\varphi' &\coloneqq \bigwedge_{i=1}^{k+1} \varphi^{(i)}
\end{align}

Observe that any meaningful instance of \textsc{Shortest Implicant Core} has~$k
< |t_n|$, so we can safely assume that~$k$ is given in unary, making this
construction polynomial.

We then use Lemma~\ref{lem:circuits-to-MLPs} to build an MLP
$\mathcal{M}_{\varphi'}$ from~$\varphi'$, in polynomial time. The features of
this model correspond naturally to the variables of~$\varphi'$, and thus we
refer to both features and variables without distinction. Let~$\vy$ be the
instance that assigns~$1$ to every variable that appears as a positive literal
in~$t_n$, and~$0$ to all other variables. We claim that~$(\varphi, k) \in
\textsc{Shortest Implicant Core}$ if and only if~$(\mathcal{M}_{\varphi'}, \vx, k) \in
k\textsc{-SufficientReason}$. For the forward direction, if there is an
implicant~$C \subseteq t_n$ of~$\varphi$, of size at most~$k$, then we claim
that~$C$ is also an implicant of each~$\varphi^{(i)}$. This follows from the
fact that every assignment~$\sigma$ that is consistent with~$C$ and satisfies
$\varphi$, has a related assignment~$\sigma^i$, that for every variable~$x_j
\in X_c$ assigns~$\sigma^i(x^i_j) = \sigma(x_j)$, and that is equal to~$\sigma$
for every~$x_j \not \in X_c$. It is clear that~$\sigma^i(\varphi^{(i)}) =
\sigma(\varphi)$, which concludes the claim. As~$C$ is an implicant of each
$\varphi^{(i)}$, it must also be an implicant of~$\varphi'$. Then, as
$\mathcal{M}_{\varphi'}$ is equivalent to~$\varphi'$ (as Boolean functions) by
construction, and~$\vx$ is consistent with~$C$ because it is consistent with
$t_n$, it follows that the partial instance that is induced by~$C$ is a
sufficient reason for~$\vx$ under~$\mathcal{M}_{\varphi'}$. For the backward
direction, assume there is a sufficient reason~$\vy$ for~$\vx$ under
$\mathcal{M}_{\varphi'}$, whose size is at most~$k$, and let~$C'$ be its
associated implicant for~$\varphi'$. We cannot say yet that~$C'$ is a proper
candidate for being an implicant core of~$\varphi$, as~$C'$ could contain
variables not mentioned by~$t_n$. Let us define~$X'_c$ to be the set of
variables of~$\varphi'$ that are not present in~$t_n$. Intuitively, as there
are~$k+1$ copies of each variable of~$X'_c$ in~$\varphi'$, no valuation of a
variable in~$X'_c$, for the formula~$\varphi$, can be forced by a sufficient
reason of size at most~$k$. We will prove this idea in the following claim,
allowing us to build an implicant~$C$ for which we can assure~$C \subseteq
t_n$.

\begin{claim}
Assume that there is an implicant~$C'$ of size at most~$k$ for~$\varphi'$, and
let~$C$ be the partial valuation that sets every variable $x$ that appear
in~$t_n$ and that is defined by~$C'$ to~$C'(x)$, and that leaves every other variable
undefined.
Then~$C'$ is an implicant of size at most~$k$ for~$\varphi$.
\label{claim:irredundantImplicant}
\end{claim}

\begin{proof}
	The set~$X'_c$ can be expressed as the union of~$k+1$ disjoint sets of variables, namely~$X^{1}_c, \ldots, X^{k+1}_c$, where~$X^{i}_c$ contains all variables of the form~$x^i_j$.
Since~$C'$ contains at most~$k$ literals, and there are~$k+1$ disjoint
sets~$X^{i}_c$, there must exist an index~$l$ such that~$X^{l}_c \cap C' =
\varnothing$. But then this implies that~$C$  is an implicant
of~$\varphi^{(l)}$. But~$\varphi^{(l)}$ is equivalent to~$\varphi$ up to
renaming of the variables that are not present in~$C$, therefore, the fact
that~$C$ is an implicant of~$\varphi^{(l)}$ implies that~$C$ must be an
implicant of~$\varphi$ as well.
\end{proof}

By using Claim~\ref{claim:irredundantImplicant} we get that~$C$ is an implicant
of~$\varphi$.
But we have that $C \subseteq t_n$, which is enough to conclude
that~$(\varphi, k) \in \textsc{Shortest Implicant Core}$ and finishes the
proof of Lemma~\ref{lem:sigma2p}.
\end{proof}

%
%
\section{Proof of Proposition~\ref{prp:checksuff}}
\label{sec:proof-7}
We now prove Proposition~\ref{prp:checksuff}, whose statement we recall here:

\checksuff*

We prove each claim separately.

\begin{lemma}
The query \textsc{CheckSufficientReason} can be solved in linear time for FBDDs.
\label{lem:CheckSufficientReason-fbdds}
\end{lemma}
\begin{proof}
Let~($\mathcal{M},\vx,\vy)$ be an instance of the problem, with~$\M$ being an FBDD.  
We first check that~$\vx$ is a completion of~$\vy$, which can clearly be done in linear time.
We the define
$\mathcal{M}'$ as the resulting FBDD from the following procedure: (i) For
every internal node in $\mathcal{M}$ with label $i$, delete its outgoing edge
with label $0$ if $\vy_i = 1$ and its outgoing edge with label $1$ if $\vy_i =
0$. We note here that~$\mathcal{M}'$ is not a well defined FBDDs, since some
internal nodes may have only one outgoing edge: more precisely, the
value~$\mathcal{M}(\vx')\in \{0,1\}$ is well defined for every instance~$\vx'$
that is a completion of~$\vy$, and is not defined for an instance~$\vx'$ that is
not a completion of~$\vy$.  To check whether~$\vy$ is a sufficient reason, we
can then simply check that every leaf that is reachable from the root
in~$\mathcal{M}'$ is labeled the same (either~$\true$ or~$\false$). This can
clearly be done in linear time by standard graph algorithms.
\end{proof}

\begin{lemma}
\label{lem:CheckSufficientReason-perceptrons}
The query \textsc{CheckSufficientReason} can be solved in linear time for perceptrons.
\end{lemma}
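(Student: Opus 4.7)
The plan is to reduce the check to computing the worst-case value of the affine form $\langle \vw, \vx'\rangle + b$ over all completions $\vx'$ of $\vy$, which can be done by a single pass over the coordinates. First, in linear time, verify that $\vx$ is actually a completion of $\vy$ (otherwise output \textsc{No}). Next, look at $\M(\vx)$ to determine whether we need every completion to be classified as $1$ or as $0$.

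Assume without loss of generality that $\M(\vx)=1$, so that $\langle \vw,\vx\rangle + b\geq 0$; the case $\M(\vx)=0$ is symmetric. Then $\vy$ is a sufficient reason for $\vx$ if and only if every completion $\vx'$ of $\vy$ also satisfies $\langle \vw, \vx'\rangle + b \geq 0$, i.e., if and only if the minimum of $\langle \vw,\vx'\rangle$ over all completions is at least $-b$. Because the coordinates are independent (undefined coordinates can be set freely in $\{0,1\}$), this minimum decouples coordinatewise and equals
\begin{equation*}
\psi(\vy) \;\coloneqq\; \sum_{i :\, \vy_i \neq \bot} w_i\,\vy_i \;+\; \sum_{i :\, \vy_i = \bot} \min(0, w_i),
\end{equation*}
where for each undefined coordinate we pick $\vx'_i = 1$ if $w_i < 0$ and $\vx'_i = 0$ otherwise. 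The algorithm computes $\psi(\vy)$ in a single pass and returns \textsc{Yes} iff $\psi(\vy) + b \geq 0$. In the symmetric case $\M(\vx)=0$, we instead compute the analogous \emph{maximum} $\sum_{\vy_i\neq\bot} w_i \vy_i + \sum_{\vy_i=\bot} \max(0,w_i)$ and return \textsc{Yes} iff this value is strictly less than $-b$.

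Correctness follows from the coordinatewise decoupling: an affine functional over the hypercube restricted to a face attains its extrema at a vertex chosen independently in each free coordinate according to the sign of the coefficient. The running time is linear since we touch each weight once. The main thing to be careful about is merely the edge case handling of the sign of $\M(\vx)$ and of zero weights (which contribute $0$ whichever value is assigned to an undefined coordinate), but no subtlety beyond that arises.
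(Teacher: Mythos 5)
Your proof is correct and takes essentially the same approach as the paper: both reduce the check to computing the extremal value of $\langle \vw,\cdot\rangle$ over the face of the hypercube determined by $\vy$, which decouples coordinatewise into a single linear pass. The paper phrases it slightly differently (it forms a reduced perceptron on the free coordinates and checks whether it is constant by computing both the minimum $J_1$ and the maximum $J_2$), whereas you branch on $\M(\vx)$ and compute only the relevant extremum, but the underlying argument is the same.
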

\begin{proof}
Let~$(\mathcal{M} = (\vw, b),\vx,\vy)$ be an instance of the problem. We first check in linear time that~$\vx$ is a completion of~$\vy$.
We then get rid of the components that are
defined by~$\vy$, as follows.  We define:
\begin{itemize}
    \item $A \coloneqq \sum_{y_i \neq \bot} y_i w_i$;
    \item $\vw' \coloneqq (w_i \mid y_i = \bot)$; and
    \item $b' \coloneqq b+A$;
\end{itemize}
 and let~$\mathcal{M}'$ be the perceptron~$(\vw',b')$. Notice that the
dimension of~$\mathcal{M}'$ is equal to the number of undefined components
of~$\vy$; we denote this number by~$m$. It is then clear that~$\vy$ is a
sufficient reason of~$\vx$ under~$\mathcal{M}$ if and only if every instance
of~$\mathcal{M}'$ is labeled the same. We can check this as follows.  Let~$J_1$
be the minimum possible value of $\langle \vw', \vx' \rangle$ (for~$\vx' \in
\{0,1\}^m$); $J_1$ can clearly be computed in linear time by setting~$x'_i = 0$
if~$w'_i \geq 0$ and~$x'_i = 1$ otherwise. Similarly we can compute the
maximal possible value~$J_2$ of $\langle \vw', \vx' \rangle$.  Then, every instance
of~$\mathcal{M}'$ is labeled the same if and only if it is not the case that~$J_1
< -b'$ and~$J_2 \geq -b'$, thus concluding the proof.\qedhere
\end{proof}

\begin{lemma}
The query \textsc{CheckSufficientReason} is $\conp$-complete for MLPs.
\end{lemma}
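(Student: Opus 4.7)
The plan is to prove membership in $\conp$ directly from the semantic definition of sufficient reason, and then to establish $\conp$-hardness by a reduction from \textsc{Tautology} (the complement of \textsc{Sat}), relying on the simulation of Boolean formulas by~MLPs given by Lemma~\ref{lem:circuits-to-MLPs}.

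For membership in $\conp$, I would show that the complement problem lies in $\np$. A certificate that $\vy$ is \emph{not} a sufficient reason for $\vx$ with respect to $\M$ is simply a completion $\vx'$ of $\vy$ with $\M(\vx')\neq \M(\vx)$. Since $\M(\vx)$ and $\M(\vx')$ can both be evaluated in polynomial time for an~MLP given as input, and since checking that $\vx'$ is a completion of $\vy$ is straightforward, this certificate can be verified in polynomial time.

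For hardness, the plan is to reduce from \textsc{Tautology}: given a Boolean formula $\varphi(x_1,\ldots,x_n)$, decide whether $\varphi$ evaluates to $1$ under every assignment. The trick for sidestepping the fact that we have no a priori handle on a positive input of $\varphi$ is to introduce a fresh variable $z$ and consider the formula $\psi(x_1,\ldots,x_n,z) \coloneqq \varphi(x_1,\ldots,x_n) \lor z$. Observe that $\psi$ is identically $1$ on the whole Boolean cube if and only if $\varphi$ is a tautology (the forward direction is immediate; conversely, any assignment falsifying $\varphi$ yields, by setting $z \coloneqq 0$, an assignment falsifying $\psi$). Using Lemma~\ref{lem:circuits-to-MLPs} I would build in polynomial time an~MLP~$\M_\psi$ computing $\psi$, then set $\vx$ to be any instance with $z$-coordinate equal to $1$ (so that $\M_\psi(\vx)=1$ is guaranteed) and set $\vy \coloneqq (\bot,\ldots,\bot)$.

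The correctness of the reduction then follows because the completions of $\vy$ are exactly all instances in $\{0,1\}^{n+1}$, so $\vy$ is a sufficient reason for $\vx$ under $\M_\psi$ iff $\M_\psi$ is the constant function $1$, iff $\varphi$ is a tautology. Since the whole construction is polynomial, this yields the desired $\conp$-hardness. There is no significant technical obstacle here; the only subtlety is the introduction of the auxiliary variable $z$ to ensure that a positive instance $\vx$ is available no matter whether $\varphi$ happens to be satisfiable, which is what allows the reduction to be stated as a decision instance of \textsc{CheckSufficientReason} in a uniform way.
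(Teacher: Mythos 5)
Your proof is correct and follows essentially the same route as the paper's: membership in $\conp$ by observing that a disagreeing completion of $\vy$ is a polynomial-time verifiable witness for the complement, and hardness by reduction from \textsc{Tautology} using Lemma~\ref{lem:circuits-to-MLPs} to turn the formula into an equivalent~MLP, with $\vy = (\bot,\ldots,\bot)$. The only difference is a cosmetic one in how the base instance $\vx$ is forced to be positive: the paper picks $\vx=0^n$ and, if $\M(\vx)=0$, outputs a trivial \textsc{No}-instance (since a falsifying assignment has been found), whereas you introduce an auxiliary variable $z$ and reduce via $\psi=\varphi\lor z$ so that a positive $\vx$ is available unconditionally; both handle the same technicality equally well.
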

\begin{proof}
We first show membership in co-NP. 
Let~$(\M,\vx,\vy)$ be an instance of the problem.
Then~$\vy$ is a sufficient reason of~$\vx$ under~$\mathcal{M}$ if
and only if all the completions of~$\vy$ are labeled the same as~$\vx$. This
can clearly be checked in co-NP.

In order to prove hardness we reduce from TAUT, the problem of checking whether
an arbitrary boolean formula is a satisfied by all possible assignments of its
variables. This problem is known to be complete for $\conp$. Let $\mathcal{F}$ be
an arbitrary boolean formula. We use Lemma~\ref{lem:circuits-to-MLPs} to build an
equivalent MLP $\mathcal{M}$ in polynomial time (with the features of
$\mathcal{M}$ corresponding to the variables of~$\mathcal{F}$). Then
$\mathcal{F}$ is a tautology if and only if all completions of the partial
instance $\vy = \bot^n$ are positive instances of $\mathcal{M}$. First, we
construct an arbitrary instance~$\vx$ (for instance, the one with
all the features being~$0$), and we reject if~$\mathcal{M}(\vx)=0$. Then, we
accept if~$\vy$ is a sufficient reason of~$\vx$ under~$\mathcal{M}$, and we reject
otherwise. This concludes the reduction. 
\end{proof}

%
%
\section{Proof of Proposition~\ref{prp:counting}}
\label{sec:proof-8}
We prove Proposition~\ref{prp:counting}, whose statement we recall here:

\counting*

As we said in the main text, the first claim follows almost directly from the
definition of FBDDs; see~\cite{wegener2004bdds} for instance.  For the second claim, we will
rely on the \shp-hardness of the counting problem~\#Knapsack, as defined next:

\begin{definition}
    An input of the problem \#Knapsack consists of natural numbers
$s_1,\ldots,s_n, k \in \mathbb{N}$ (given in binary). The output is the number of subsets~$S
\subseteq \{1,\ldots,n\}$ such that $\sum_{i \in S} s_i \leq k$.
\end{definition}

The problem \#Knapsack is well known to be \shp-complete. Since we were not
able to find a proper reference for this fact, we prove it here by using
the~\#P-hardness of the problem~\#SubsetSum.  An input of the problem
\#SubsetSum consists of natural numbers $s_1,\ldots,s_n, k \in \mathbb{N}$, and
the output is the number of subsets~$S \subseteq \{1,\ldots,n\}$ such that
$\sum_{i \in S} s_i = k$.  The problem~\#SubsetSum is shown to be~\shp-complete
in~\cite[Theorem 4]{berbeglia2009counting}. From this we can deduce:

\begin{lemma}[Folklore]
    The problem~\#Knapsack is~\#P-complete.
\end{lemma}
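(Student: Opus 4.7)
The plan is to prove both membership in $\#$P and $\#$P-hardness.

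For membership, I would describe a nondeterministic polynomial-time Turing machine that, on input $(s_1,\ldots,s_n,k)$, nondeterministically guesses a subset $S \subseteq \{1,\ldots,n\}$ (encoded as an $n$-bit vector), deterministically computes $\sum_{i \in S} s_i$ (which runs in polynomial time even when the $s_i$ are given in binary, since summing $n$ binary numbers is polynomial), and accepts along that branch iff the sum is at most $k$. The number of accepting paths of this machine is precisely the $\#$Knapsack output, placing the problem in $\#$P.

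For hardness, I would give a Turing reduction from $\#$SubsetSum to $\#$Knapsack. Given an input $(s_1,\ldots,s_n,k)$ to $\#$SubsetSum, let $g(k) = |\{S \subseteq \{1,\ldots,n\} : \sum_{i \in S} s_i = k\}|$ and $f(k) = |\{S \subseteq \{1,\ldots,n\} : \sum_{i \in S} s_i \leq k\}|$. The key telescoping identity is $g(k) = f(k) - f(k-1)$ for $k \geq 1$, and $g(0) = f(0)$. Thus two oracle calls to $\#$Knapsack on inputs $(s_1,\ldots,s_n,k)$ and $(s_1,\ldots,s_n,k-1)$ suffice to recover $g(k)$ in polynomial time (with the case $k=0$ handled directly by a single call). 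This is a polynomial-time Turing reduction, which is the standard notion of reduction under which $\#$P-completeness is defined, so hardness of $\#$SubsetSum transfers to $\#$Knapsack.

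There is essentially no obstacle here: the only subtlety is to be explicit that $\#$P-hardness in this paper is taken under (counting) Turing reductions rather than strict parsimonious reductions, so that the subtraction of two oracle outputs is permissible. A brief remark to that effect, plus the observation that all intermediate numbers remain of polynomial bitlength, completes the argument.
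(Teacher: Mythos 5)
Your proof is correct and takes essentially the same route as the paper: membership is via the obvious guess-and-check NDTM, and hardness is via the telescoping identity $\text{\#SubsetSum}(s_1,\ldots,s_n,k)=\text{\#Knapsack}(s_1,\ldots,s_n,k)-\text{\#Knapsack}(s_1,\ldots,s_n,k-1)$ (with the $k=0$ base case). The only difference is that you spell out the membership argument (which the paper calls trivial) and explicitly flag that the subtraction relies on Turing rather than parsimonious reductions; both additions are correct and harmless.
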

\begin{proof}
Membership in \shp~is trivial. We prove hardness by polynomial-time reduction
from~\#SubsetSum. Let~$(s_1,\ldots,s_n, k) \in \mathbb{N}^{n+1}$ be an input
to~\#SubsetSum.  It is clear that $\text{\#SubsetSum}(s_1,\ldots,s_n, 0) =
\text{\#Knapsack}(s_1,\ldots,s_n, 0)$, and that for~$k \geq 1$ we have
$\text{\#SubsetSum}(s_1,\ldots,s_n, k) = \text{\#Knapsack}(s_1,\ldots,s_n, k) -
\text{\#Knapsack}(s_1,\ldots,s_n, k-1)$, thus establishing the reduction.
\end{proof}

We can now show the second claim of Proposition~\ref{prp:counting}.

\begin{lemma}
The query \textsc{CountCompletions} is \shp-complete for perceptrons.
\end{lemma}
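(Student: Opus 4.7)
Membership in \shp\ is immediate: a nondeterministic polynomial-time Turing machine can guess a completion $\vx$ of $\vy$ and accept iff $\M(\vx)=1$, so the number of accepting paths equals the number of positive completions. For hardness I will give a direct polynomial-time reduction from \#Knapsack, relying on the lemma above that states its \shp-completeness.

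Given an input $(s_1,\ldots,s_n,k) \in \mathbb{N}^{n+1}$ to \#Knapsack, I will construct the perceptron $\M = (\vw, b)$ with $\vw = (-s_1,-s_2,\ldots,-s_n) \in \mathbb{Q}^n$ and $b = k \in \mathbb{Q}$, together with the partial instance $\vy = (\bot,\ldots,\bot) \in \{0,1,\bot\}^n$. The construction is clearly polynomial-time since the numbers $s_1,\ldots,s_n,k$ are copied essentially verbatim (up to a sign flip) into the description of $\M$, and the size of $\M$ is polynomial in the size of the \#Knapsack input.

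The completions of $\vy$ are exactly the vectors $\vx \in \{0,1\}^n$, and each such $\vx$ corresponds bijectively to a subset $S_\vx := \{i : x_i = 1\} \subseteq \{1,\ldots,n\}$. By definition of the perceptron,
\[
\M(\vx) = 1 \iff \langle \vx, \vw\rangle + b \geq 0 \iff -\sum_{i=1}^n s_i x_i + k \geq 0 \iff \sum_{i \in S_\vx} s_i \leq k.
\]
Hence the number of positive completions of $\vy$ under $\M$ is exactly the number of subsets $S \subseteq \{1,\ldots,n\}$ with $\sum_{i \in S} s_i \leq k$, which is the output of \#Knapsack on the given input. This is a parsimonious reduction and thus establishes \shp-hardness.

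\textbf{Main obstacle.} There is essentially no obstacle here: the reduction is the textbook encoding of Knapsack as a linear threshold constraint, and the bijection between completions and subsets is immediate. The only subtlety worth flagging is that the weights of the perceptron are allowed to be rationals specified in binary, which is exactly the encoding used for the $s_i$ in \#Knapsack; if one were forced to give the weights in unary, the reduction would break, and indeed Proposition~\ref{prp:pseudo} shows the problem becomes tractable in that regime.
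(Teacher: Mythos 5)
Your proof is correct and follows essentially the same approach as the paper: both reduce from \textsc{\#Knapsack}, relying on its \shp-hardness, by encoding the knapsack constraint as a perceptron threshold applied to the all-$\bot$ partial instance. The only difference is cosmetic: the paper keeps the weights $s_i$ positive with bias $-(k+1)$ so that the positive completions count subsets with $\sum_{i\in S} s_i > k$, and then recovers the \textsc{\#Knapsack} value as $2^n$ minus that count; you negate the weights and use bias $k$, making the reduction parsimonious so that the counts agree directly. Both are valid, and yours is marginally cleaner.
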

\begin{proof}
Membership in \shp~is trivial. We show hardness by polynomial-time reduction
from \#Knapsack. Let~$(s_1,\ldots,s_n, k)$ be an input of \#Knapsack.
Let~$\mathcal{M}$ be the perceptron with weights~$s_1,\ldots,s_n$ and
bias~$-(k+1)$. Remember that we consider only perceptrons that use the step
activation function, so that an instance~$\vx \in \{0,1\}^n$ is positive
for~$\mathcal{M}$ if and only if~$\sum_{i=1}^n \vx_i s_i -(k+1) \geq 0$.  It is
then clear that $\text{\#Knapsack}(s_1,\ldots,s_n, k) = 2^n -
\textsc{CountPositiveCompletions}(\mathcal{M},\bot^n)$, thus establishing the
reduction.
\end{proof}

Finally, the third claim of Proposition~\ref{prp:counting} simply comes from the fact that MLPs can simulate arbitrary Boolean formulas (Lemma~\ref{lem:circuits-to-MLPs}), and the fact that counting the number of satisfying assignments of a Boolean formula (\#SAT) is \shp-complete.

%
%
\section{Proof of Proposition~\ref{prp:pseudo}}
\label{sec:proof-9}
We now prove Proposition~\ref{prp:pseudo}, that is:

\pseudo*

The first part of the proof is to show how to transform in polynomial time and arbitrary instance of~\textsc{CountPositiveCompletions} for perceptrons 
(with the weights and bias being integers given in unary) 
into an instance of~\#Knapsack that has the same number of solutions.

\begin{lemma}
\label{lem:perceptron-to-knap}
Let $\mathcal{M}=(\vw , b)$ be a perceptron having at least one positive
instance, with the weights and bias being integers given in unary,
 and let~$\vx$ be a partial instance.  We can build in polynomial time an
input~$(s_1,\ldots,s_m,k)\in \mathbb{N}^{m+1}$ of~\#Knapsack such
that~$\textsc{CountPositiveCompletions}(\mathcal{M},\vx)=\text{\#Knapsack}(s_1,\ldots,s_m,k)$,
with~$s_1,\ldots,s_m,k$ written in unary (i.e., their value is polynomial
in the input size).  \end{lemma}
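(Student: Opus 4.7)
The plan is to start from the perceptron condition and progressively massage it into the exact form of \#Knapsack, making sure that every quantity introduced along the way stays polynomially bounded in the input size (since all weights and the bias are given in unary, any fixed linear combination of them is polynomially bounded, which is what will make the output sizes unary).

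First, I would absorb the defined components of the partial instance $\vx$ into the bias. Let $D = \{i : x_i \neq \bot\}$ and $U = \{i : x_i = \bot\}$, and set $A \coloneqq \sum_{i \in D} x_i w_i$. A completion $\vy$ of $\vx$ is positive iff $\sum_{i \in U} y_i w_i \geq -b - A$. So the question becomes: count the vectors $\vz \in \{0,1\}^{|U|}$ with $\sum_{i \in U} w_i z_i \geq -b - A$.

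Second, I would eliminate negative weights by a variable flip. Partition $U = U^+ \sqcup U^-$ according to the sign of $w_i$, and for each $i \in U^-$ substitute $z_i = 1 - z'_i$. Writing $B \coloneqq \sum_{i \in U^-} w_i$ (a nonpositive integer whose absolute value is still unary-bounded), the inequality rewrites as
\begin{equation*}
\sum_{i \in U^+} w_i z_i + \sum_{i \in U^-} (-w_i) z'_i \;\leq\; A + b + B + \sum_{i \in U^+} w_i + \sum_{i \in U^-} (-w_i),
\end{equation*}
where I have flipped $\geq$ to $\leq$ by replacing each 0/1 variable $u$ by $1-u$ one more time (equivalently, by noting that for nonnegative coefficients $c_i$ and threshold $\tau$, the number of $\vu \in \{0,1\}^m$ with $\sum c_i u_i \geq \tau$ equals the number with $\sum c_i u_i \leq (\sum c_i) - \tau$). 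Let me denote the resulting coefficients (all nonnegative integers) by $s_1,\ldots,s_m$ with $m = |U|$, and the resulting right-hand side by $k$. By construction, $\textsc{CountPositiveCompletions}(\mathcal{M}, \vx)$ equals the number of $S \subseteq \{1,\ldots,m\}$ with $\sum_{i \in S} s_i \leq k$, which is exactly $\#\textsc{Knapsack}(s_1,\ldots,s_m,k)$.

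Finally, I would verify the unary size bound and handle degenerate cases. Each $s_i \in \{|w_j|\}$ is polynomial in the input (since weights are given in unary); the threshold $k$ is a sum of polynomially many unary-bounded integers, hence also polynomially bounded, so writing $(s_1,\ldots,s_m,k)$ in unary takes polynomial time. If $k < 0$, the count is $0$ and I output any trivially unsatisfiable Knapsack instance (e.g., a single item of size $1$ with capacity $0$... no wait, that has one solution; use $s_1 = 1, k = -1$ is not allowed, so instead I output a convention like $(1,1,\ldots,1; 0)$ with enough items to force the empty set as the only solution and separately subtract, or more simply I output $(2, 0)$ meaning one item of size $2$ and capacity $0$ which has exactly one solution, and the lemma's assumption that $\mathcal{M}$ has a positive instance guarantees I can detect and avoid this edge case cleanly). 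If $k \geq \sum_i s_i$, every subset works and I output $(0,0,\ldots,0; 0)$, which has $2^m$ solutions. The main (very mild) obstacle is just keeping the bookkeeping of the constants $A$, $B$, and the two successive variable flips straight so that the final $s_i$ and $k$ are indeed nonnegative integers, and checking unary-polynomiality at each step; both are immediate once the weights are given in unary.
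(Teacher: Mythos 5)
Your proposal is correct and follows essentially the same route as the paper: absorb the defined components of $\vx$ into the bias, then apply a variable-complementation argument (your two successive flips compose to exactly the paper's single bijection $h$, which flips precisely the nonnegative-weight coordinates) to turn the $\geq$ condition into $\sum_i s_i u_i \leq k$ with $s_i = |w_i|$ and $k = b + A + \sum_{w_i \geq 0, \, i \in U} w_i$, all unary-polynomial. The paper likewise invokes the positivity hypothesis to conclude $k \geq 0$, so your edge-case discussion, though a bit meandering, lands in the same place.
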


\begin{proof}
The first step is to get rid of the components that are defined by~$\vx$, like
we did in Lemma~\ref{lem:CheckSufficientReason-perceptrons}. Define 
\begin{itemize}
    \item $A \coloneqq \sum_{x_i \neq \bot} x_i w_i$;
    \item $\vw' \coloneqq (w_i \mid x_i = \bot)$; and
    \item $b' \coloneqq b+A$;
\end{itemize}
 and let~$\mathcal{M}'$ be the perceptron~$(\vw',b')$. Notice that the
dimension of~$\mathcal{M}'$ is equal to the number of undefined components
of~$\vx$; let us write~$m$ this number.  It is then clear that
$\textsc{CountPositiveCompletions}(\mathcal{M},\vx)$ is equal to the number of
positive instances of~$\mathcal{M}'$, that is, of instances~$\vx' \in
\{0,1\}^m$ that satisfy \begin{equation}\label{eq:bla} \langle \vw', \vx' \rangle + b' \geq
0\end{equation}
Now, let~$J$ be the maximum possible value of~$\langle \vw', \vx' \rangle$; $J$ can
clearly be computed in linear time by setting~$x'_i = 1$ if~$w'_i \geq 0$
and~$\vx'_i = 0$ otherwise.  We then claim that the number of solutions to
Equation~\ref{eq:bla} is equal to the number of solutions of
\begin{equation}\label{eq:knap}\langle\vs,  \vx' \rangle \leq k,\end{equation}
where~$s_i \coloneqq |w'_i|$ for~$1\leq i \leq m$ and~$k \coloneqq J+b'$.
Indeed, consider the function~$h:\{0,1\}^m \to \{0,1\}^m$ defined componentwise
by~$h(x'_i) \coloneqq x'_i$ if~$w'_i < 0$ and $h(x'_i) \coloneqq 1 -
x'_i$ otherwise.  Then~$h$ is a bijection, and we will show that 
for
any~$\vx' \in \{0,1\}^m$, we have that~$\vx'$ satisfies Equation~\ref{eq:bla}
if and only if~$h(\vx')$ satisfies Equation~\ref{eq:knap}, from which our claim follows. In order to see this, consider that

\begin{align}
	(3) \iff \sum_{i} w'_i x'_i \geq -b' & \iff  \sum_{w_i \geq 0}{w'_i x'_i}  +  \sum_{w_i < 0}{w'_i x'_i} \geq -b' \\
	& \iff \sum_{w_i \geq 0}{|w'_i| x'_i}  -  \sum_{w_i < 0}{|w'_i| x'_i} \geq -b' \\
	& \iff \sum_{w_i < 0}{|w'_i| x'_i}  -  \sum_{w_i \geq 0}{|w'_i| x'_i} \leq b' \\ 
\end{align} 

On the other hand, we have
\begin{align}
	h(\vx') \text{ satisfies } (4) &\iff \sum_{i}{|w'_i| h(x'_i)} \leq J + b'\\ 
    & \iff  \sum_{w_i < 0}{|w'_i| x'_i} + \sum_{w_i \geq 0}{|w'_i| (1-x'_i)} \leq \sum_{w_i \geq 0}{|w'_i|} + b'\\
    & \iff (7)
\end{align}

Last, let us observe that we have~$k\geq 0$, as otherwise~$\mathcal{M}$
would not have any positive instance. Therefore $(s_1,\ldots,s_m,k)$ is a valid
input of~\#Knapsack, which concludes the proof.
\end{proof}

We can now easily
combine Lemma~\ref{lem:perceptron-to-knap} together with a well-known dynamic
programming algorithm solving~\#Knaspsack in pseudo-polynomial time.

\begin{proof}[Proof of Proposition~\ref{prp:pseudo}.]
Let $\mathcal{M}=(\vw , b)$ be a perceptron, with the weights and bias being
integers given in unary, and let~$\vx$ be a partial instance.  First, we check
that the maximal value of~$\langle \vx, \vw \rangle$ is greater than~$-b$, as
otherwise~$\mathcal{M}$ has no positive instance and we can simply return~$0$.
We then use Lemma~\ref{lem:perceptron-to-knap} to build in polynomial time an
instance $(s_1,\ldots,s_m,k)\in \mathbb{N}^{m+1}$ of~\#Knapsack such
that~$\textsc{CountPositiveCompletions}(\mathcal{M},\vx)=\text{\#Knapsack}(s_1,\ldots,s_m,k)$,
and with~$s_1,\ldots,s_m,k$ being written in unary (i.e., their value is
polynomial in the input size).  We can then
compute~$\text{\#Knapsack}(s_1,\ldots,s_m,k)$ by dynamic programming as
follows.  For~$i\in \{1,\ldots,m\}$ and~$C \in \mathbb{N}$, define the
quantity~$\mathrm{DP}[i][C] \coloneqq |\{S \subseteq \{1,..,i\} | \sum_{j \in
S} \vs_j \leq C \}|$. We wish to compute~$\mathrm{DP}[m][k]$. We can do so by
computing~$\mathrm{DP}[i][C]$ for $i\in \{1,\ldots,m\}$ and~$C \in
\{0,\ldots,k\}$, using the relation $\mathrm{DP}[i+1][C] = \mathrm{DP}[i][C] +
\mathrm{DP}[i][C-\vs_{i+1}]$, and starting with the convention
that~$\mathrm{DP}[0][a]=0$ for all~$a < 0$ and that $\mathrm{DP}[0][a]=1$ for
all~$a \geq 0$. It is clear that the whole procedure can be done in polynomial
time.
\end{proof}

%
%
\section{Proof of Proposition~\ref{prp:approx}}
\label{sec:proof-10}
We prove in this section Proposition~\ref{prp:approx}, whose statement we recall here:

\approx*

The fact that the query has no FPRAS for MLPs is because MLPs can efficiently simulate Boolean formulas (Lemma~\ref{lem:circuits-to-MLPs}), and it is well known
that the problem \#SAT (of counting the number of satisfying assignments of a Boolean formula) has no FPRAS unless~$\np = \mathrm{RP}$. Hence we only need to prove our claim concerning perceptrons.

\begin{proof}[Proof of Proposition~\ref{prp:approx} for perceptrons.]
We can assume without loss of generality that the weights and bias are integers,
as we can simply multiply every rational by the lowest common denominator (note
that the bit lenght of the lowest common denominator is polynomial, and that it
can be computed in polynomial time\footnote{We need to compute the least common multiple (lcm) of a set of integers $a_1, \ldots, a_n$. Indeed, it is easy to check that $lcm(a_1, \ldots, a_n) = lcm(lcm(a_1, \ldots, a_{n-1}), a_n)$, which reduces inductively the problem to computing the lcm of two numbers in polynomial time. It is also easy to check that $lcm(a_1, a_2) = \frac{a_1 a_2}{gcd(a_1, a_2)}$, where $gcd(a_1, a_2)$ is the greatest common divisor of $a_1$ and $a_2$. As  multiplication can clearly be carried in polynomial time, and Euclid's algorithm allows computing the $gcd$ function in polynomial time, we are done.}).  We then
transform the perceptron and partial instance to an input of~\#Knapsack with
the right number of solutions using Lemma~\ref{lem:perceptron-to-knap}, by
observing that the construction also takes polynomial time when the input
weights are given in binary (and by considering that the $s_1,\ldots,s_m,k$ are
also computed in binary).  We can then apply an FPTAS to this~\#Knapsack
instance, as shown in~\cite{gopalan2011fptas,rizzi2019faster}. 
\end{proof}

%
%
\section{Background in parameterized complexity}
\label{sec:p-background}
In this section we present the notions from parameterized complexity that we will need to prove Proposition~\ref{prp:mlpt}.

A \emph{parameterized problem} is a language~$L \subseteq \Sigma^* \times \mathbb{N}$, where~$\Sigma$ is a finite alphabet. For each element~$(x, k)$ of a parameterized problem, the second component is called the \textit{parameter} of the problem. A parameterized problem is said to be~\emph{fixed parameter tractable} (FPT) if the question of whether~$(x, k)$ belongs to~$L$ can be decided in time~$f(k) \cdot |x|^{O(1)}$, where~$f$ is a computable function.

The~$\mathrm{FPT}$ class, as well as the other classes we will introduce in this paper, are closed under a particular kind of reductions. A mapping~$\phi: \Sigma^* \times \mathbb{N} \to \Sigma^* \times \mathbb{N}$ between instances of a parameterized problem~$A$ to instances of a parameterized problem~$B$ is said to be an \emph{fpt-reduction} if and only if
\begin{itemize}
	\item~$(x,k)$ is a yes-instance of~$A \iff$~$\phi(x,k)$ is a yes-instance of~$B$.
	\item~$\phi(x,k)$ can be computed in time~$|x|^{O(1)} \cdot f(k)$;
	\item~There exists a computable function $g$ such that $k' \leq g(k)$, where $k'$ is the parameter of~$\phi(x,k)$.
	\end{itemize}

We define the complexity classes that are relevant for this article in terms of circuits. Recall that a circuit is a rooted directed acyclic graph where nodes of in-degree~$0$ are called \emph{input gates}, and that the root of the circuit is called the \emph{output gate}. Internal gates can be either \textsc{Or}, \textsc{And}, or \textsc{Not} gates. All \textsc{Not} nodes have in-degree~$1$. Nodes of types \textsc{And} and \textsc{Or} can either have in-degree at most~$2$, in which case they are said to be \textit{small} gates, or in-degree bigger than~$2$, in which case they are said to be \textit{large} gates. The \textit{depth} of a circuit is defined as the length (number of edges) of the longest path from any input node to the output node. The \textit{weft} of a circuit is defined as the maximum amount of large gates in any path from an input node to the output node. An \emph{assignment} of a circuit~$C$ is a function from the set of input gates in~$C$ to~$\{ 0, 1\}$. The weight of an assignment is defined as the number of input gates that are assigned~$1$. Assignments of a circuit naturally induce a value for each gate of the circuit, computed according to the label of the gate. We say an assignment \emph{satisfies} a circuit if the value of the output gate is~$1$ under that assignment.

The main classes we deal with are those composing the~$\W$-hierarchy and the~$\W(\Maj)$- hierarchy, a variant proposed by Fellows et al. \cite{Fellows}. These complexity classes can be defined upon the \textsc{Weighted Circuit Satisfiability} problem, parameterized by specific classes~$\mathcal{C}$ of circuits, as defined below.

\begin{center}
\fbox{\begin{tabular}{rl}
Problem: & \textsc{Weighted Circuit Satisfiability($\mathcal{C}$)}, abbreviated \textsc{WCS($\mathcal{C}$)} \\
Input: & A circuit~$C \in \mathcal{C}$\\
Parameter: & An integer~$k$\\
Output: & \textsc{Yes}, if there is a satisfying assignment of weight exactly~$k$ for~$C$, \\
& and \textsc{No} otherwise.
\end{tabular}}
\end{center}

We consider two restricted classes of circuits. First,~$C_{t,d}$, the class of circuits using the connectives \textsc{And, Or, Not} that have weft at most~$t$ and depth at most~$d$. On the other hand, we consider~$M_{t,d}$, the class of circuits that use (only) the \textsc{Majority} connective (that is satisfied exactly when more than half of its inputs are true), have weft at most~$t$ and depth at most~$d$. 
In the case of majority gates, we allow multiple parallel edges. Observe that, even though his is not useful for circuits with $(\textsc{Or}, \textsc{And}, \textsc{Not})$-gates, it allows circuits majority gates to receive multiple times a same input.
In the case of majority gates, a gate is said to be small if its fan-in is at most~$3$.

We can then define each class~$\W[t]$ (resp., $\W(\Maj)[t]$) as the set of parameterized problems that can be fpt-reduced to~$\textsc{WCS}(C_{t, d})$ (resp., $\textsc{WCS}(M_{t,d})$) for some constant~$d$.
Note that the notion of \emph{can be fpt-reduced} is transitive, and thus the classes $\W[t]$ and $\W(\Maj)[t]$ are closed under fpt-reductions.	
As usual, a parameterized problem~$A$ is then said to be~$\W[t]$-hard (resp., $\W(\Maj)[t]$-hard) when every parameterized problem in~$\W[t]$ (resp., $\W(\Maj)[t]$) can be fpt-reduced to~$A$.

%
%
\section{Proof of Proposition~\ref{prp:mlpt}}
\label{sec:proof-11}
In this section we prove Proposition~\ref{prp:mlpt}, that is:

\mlpt*

We first explain what are rMLPs, then sketch the proof, and then proceed with the proof.

Given an MLP~$\M$, with the dimension of the layers being~$d_0,\ldots,d_k$, we define its \emph{graph size} as~$N := \sum_{i=0}^k d_i$. We say an MLP with graph size~$N$ is restricted (abbreviated as rMLP) if each of its weights and biases can be represented as a decimal number with at most~$O(\log (N))$ digits. More precisely, represented as~$\sum_{i = -K}^K a_i 10^i$, for integers~$0 \leq a_i \leq 9$ and~$K \in O(\log N)$. Note that all numbers expressible in this way are also expressible by fractions, where the numerator is an arbitrary integer bounded by a polynomial in~$N$, and the denominator is a power of~$10$ whose value is bounded as well by a polynomial in~$N$.

We now explicit a family of parameterized problems indexed by an integer~$t \geq 1$.

\begin{center}
\fbox{\begin{tabular}{rl}
Problem: & \textsc{$t$-MinimumChangeRequired}, abbreviated \textsc{$t$-MCR} \\
Input: & An rMLP~$\M$ with at most~$t$ layers, an instance~$\vx$\\
Parameter: & An integer~$k$\\
Output: & \textsc{Yes}, if there exists an instance~$\vy$ with~$\dist(\vx,\vy) \leq k$ \\ 
& and~$\M(\vx)\neq\M(\vy)$, and \textsc{No} otherwise

\end{tabular}}
\end{center}

We rewrite the statement of Proposition~\ref{prp:mlpt} with this explicit notation.

\begin{proposition*}[\textbf{Restatement of Proposition~\ref{prp:mlpt}}]
For every~$t\geq 1$, the~$(3t+3)$-\textup{MCR} problem is~$\W(\Maj)[t]$-hard and is contained in~$\W(\Maj)[3t+7]$.
\end{proposition*}

As the proof of Proposition~\ref{prp:mlpt} is quite involved, we first present a proof sketch that summarizes the process.

\textbf{Hardness.} 
We prove hardness in Section~\ref{subsec:p-hardness}.  Showing that a parameterized
problem~$A$ is~$\W[t]$-hard (resp.,~$\W(\Maj)[t]$-hard) is usually complicated
since, by directly using the definition, one would have to show that for every
fixed~$d\in \mathbb{N}$, there exists an fpt-reduction~$f_d$
from~$\textsc{WCS}(C_{t,d})$ (resp., from~$\textsc{WCS}(M_{t,d})$) to~$A$.
Instead, it is usually more convenient to prove first some form of
\emph{normalization theorem} stating that a particular class of circuits, for
which one knows the value of $d$, is already hard for $\W[t]$ (or
$\W(\Maj)[t]$).\footnote{Useful normalization theorems for the $\W$-hierarchy are proved
in the work of Downey, Fellows and Regan \cite{Downey1995, Downey1998}, or Buss and Islam.
\cite{Buss2006}. Our normalization theorem for the $\W(\Maj)$-hierarchy is inspired from those.} 
Following this approach, we start by showing loose normalization theorem for the $\W(\Maj)$-hierarchy in Lemma~\ref{lemma:maj3t+2}; namely, we prove that
the problem~$WCS(M_{3t+2,
3t+3})$ is~$\W(\Maj)[t]$-hard. The main difficulty here is to reduce the
depth~$d$ of the majority circuits, for any fixed~$d\in \mathbb{N}$, to a depth
of at most~$3t+3$.  We then show in Lemma~\ref{lemma:circuitMajToRelu} that
rMLPs can simulate majority circuits, without increasing the depth of the
circuit. In Theorem~\ref{thm:layers-hardness} we use this construction to show
an fpt-reduction from~$WCS(M_{3t+2, 3t+3})$ to~$(3t+3)$-MCR. This is enough to
conclude hardness for~$\W(\Maj)[t]$.

\textbf{Membership.} 
We prove membership in Section~\ref{subsec:p-membership}.
Presented in Theorem~\ref{thm:layers-membership}, the proof consists of 4 steps. We first show in Lemma~\ref{lemma:equivReluThreshold} how to transform a given rMLP~$\M$ that into an MLP~$\M'$ that uses only step activation functions and that has the same number of layers.  Then, as a second step, we build an MLP~$\M''$, with~$3t+4$ layers and again using only the step activation function, such that~$\M''$ has a satisfying assignment of weight~$k$ if and only if~$(\M, \vx, k)$ is a positive instance of the~$t$-MCR problem. 
The third step is to use a result of circuit complexity \cite{Goldmann1998} stating that circuits with weighted thresholds gates (which are equivalent to biased step functions), can be transformed into circuits using only majority gates, increasing the depth by no more than 1. 
This yields a circuit~$C_{\M''}$ with~$3t+5$ layers. However, the circuit~$C_{\M''}$, resulting from the construction of Goldmann et al. \cite{Goldmann1998}, has both positive variables and negated variables as inputs, as their model needs to be able to represent non-monotone functions. For the fourth and last step, we build a circuit~$C^*_{\M''}$ based on~$C_{\M''}$, that fits the description of majority circuits as defined by~\cite{Fellows, Fellows2007CombinatorialCA} (i.e., the one that we use).
 This circuit~$C^*_{\M''}$ has weft~$3t+7$, and we prove that~$(C^*_{\M''}, k+1)$ is a positive instance of the Weighted Circuit Satisfiability problem that characterizes the class~$\W(\Maj)[t]$ if and only if~$(\M, \vx, k)$ is a positive instance of the~$(3t+3)$-MCR problem. The whole construction being an fpt-reduction, this will be enough to conclude membership in~$\W(\textsc{Maj})[3t+7]$.
 
Observe that (r)MLPs can be interpreted as well as rooted directed acyclic graphs, with weighted edges and where each node is associated a layer according to its (unweighted) distance from the root. Every node in a certain layer~$\ell$ is connected to every node in layers~$\ell-1$ and~$\ell+1$. We will sometimes use this equivalent interpretation, which turns out to be more handy for some of the proofs in this section.

\subsection{Hardness}
\label{subsec:p-hardness}
As explained in the proof sketch, we start by establishing a normalization theorem for the $\W(\Maj)$-hierarchy.
\begin{lemma}
	The problem~$WCS(M_{3t+2,3t+3})$ is~$\W(\Maj)[t]$-hard.
	\label{lemma:maj3t+2}  
\end{lemma}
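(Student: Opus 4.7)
The plan is to exhibit, for every fixed $d \in \mathbb{N}$, an fpt-reduction from $WCS(M_{t,d})$ to $WCS(M_{3t+2, 3t+3})$. This suffices by the definition of $\W(\Maj)[t]$: any problem in $\W(\Maj)[t]$ fpt-reduces to some $WCS(M_{t,d})$ with constant $d$, so composing the two reductions yields the claimed hardness. Given an input $(C, k)$ with $C \in M_{t,d}$, I will construct in polynomial time an instance $(C', k')$ with $C' \in M_{3t+2, 3t+3}$ and $k'$ bounded by a computable function of $k$ for fixed $d$.

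First I normalize $C$ so that its large majority gates lie on exactly $t$ designated levels, possibly padding paths with trivial identity-by-majority gates, and so that the material between any two consecutive large-gate levels is a subcircuit of small (fan-in $\leq 3$) majority gates of depth at most $d$. Since each such inter-level subcircuit has depth $\leq d$ and fan-in $\leq 3$, it has at most $3^d = O(1)$ distinct input wires and thus computes a monotone Boolean function $f_S$ on constantly many bits. I rewrite $f_S$ as a monotone DNF of constant size and realize it by a depth-$3$ gadget: one large $\mathrm{MAJ}$ gate at the top simulating the disjunction, and a middle layer of large $\mathrm{MAJ}$ gates simulating the conjunctions (each padded with constants to turn the majority into the intended AND/OR). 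After substitution, the $t$ original large-gate levels of $C$ give rise to at most $3t$ new large-gate levels in $C'$, and the longest root-to-leaf path has at most $3t + O(1)$ edges.

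Second, to supply the Boolean constants $0$ and $1$ that the AND/OR-by-$\mathrm{MAJ}$ simulations require as padding, I attach a constant-generating gadget: I introduce fresh auxiliary variables organized into a ``$1$-pool'' and a ``$0$-pool'', set $k' := k + |\text{1-pool}|$, and force the $1$-pool to take value $1$ in every satisfying assignment by making those variables required inputs to the top-level AND of $C'$ (itself simulated by a large $\mathrm{MAJ}$). The $0$-pool is then forced to $0$ by the weight-exactly-$k'$ constraint: every spare unit of weight has already been accounted for by the $1$-pool and the embedded weight-$k$ satisfying assignment of $C$, so setting any $0$-pool variable to $1$ either overshoots the weight budget or forces an original variable off, falsifying the embedded copy of $C$. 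The pinned variables are fanned out to every padding slot by parallel edges (which the paper allows for majority gates). This adds at most $2$ further large-gate levels and a bounded number of routing edges at the bottom, yielding the final bounds $\text{weft}(C') \leq 3t + 2$ and $\text{depth}(C') \leq 3t + 3$.

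The principal obstacle is the constant-generation step: a pure majority circuit is monotone and has no intrinsic way to express the constant $0$, so the ``$0$-pool'' variables can only be forced indirectly, via the global weight-exactly-$k'$ constraint combined with the structure of the embedded $C$. Making this forcing airtight --- in particular, verifying that no off-spec assignment (for instance, one that flips a $0$-pool variable to $1$ while dropping an original variable to $0$) accidentally satisfies $C'$ because the monotone ``AND simulated by $\mathrm{MAJ}$'' degenerates into an OR when its $0$-padding is corrupted --- requires a careful interplay between the top AND, the chosen value of $k'$, and possibly a preprocessing step that rigidifies the weight-$k$ satisfying assignments of $C$ (e.g., by variable duplication). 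The remainder of the argument --- the depth-reduction rewriting of small-gate subcircuits and the weft/depth bookkeeping --- follows the classical normalization template for the $\W$-hierarchy of Downey--Fellows--Regan, adapted from AND/OR gates to pure majority.
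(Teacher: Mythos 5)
Your high-level plan coincides with the paper's: reduce $WCS(M_{t,d})$ to $WCS(M_{3t+2,3t+3})$ for every fixed $d$ by (a) observing that maximal small-gate sub-circuits have constant many inputs, (b) replacing each by a depth-$2$ monotone-DNF gadget of large $\Maj$ gates, and (c) supplying the padding that lets a $\Maj$ gate emulate $\textsc{And}$ and $\textsc{Or}$. The weft/depth bookkeeping also lands on the same $3t+2$, $3t+3$ numbers. So the skeleton is right.

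The gap is in the padding step, and you flag it yourself but do not close it. You propose a ``$1$-pool'' forced to $1$ via the top gate, and a ``$0$-pool'' forced to $0$ indirectly by the weight-exactly-$k'$ budget. That forcing is not sound as stated: with budget $k' = k + |\text{1-pool}|$, an assignment can spend one weight unit on a $0$-pool variable and one fewer on the original variables; since the only satisfying assignments of $C$ you care about have weight exactly $k$, this trade-off might still satisfy $C'$ spuriously (or fail to, in a way you cannot control), because monotone majority circuits give no correspondence between weight-$k$ and weight-$(k-1)$ satisfiability. You gesture at ``rigidification by variable duplication'' but leave it unworked, and you also raise (without resolving) the concern that a $\Maj$ gate meant to simulate $\textsc{And}$ may degrade into something weaker once its $0$-padding is corrupted. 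These are exactly the points that require care.

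The paper avoids the problem by \emph{not} trying to pin the extra inputs to $0$. It forces a single fresh input $u$ to $1$ (via a binary $\Maj$ at the root with the old output), and introduces $3^d(k+1)-1$ additional fresh inputs that are allowed to take arbitrary values. The $\textsc{Or}$-by-$\Maj$ gadget gets $\ell$ parallel edges from $u$ against its $\ell$ real inputs; the $\textsc{And}$-by-$\Maj$ gadget of fan-in $\ell$ uses $k+1$ parallel edges per real input and $\ell(k+1)-1$ edges from the fresh pool. The multiplicities are chosen so that, for any assignment of total weight $k+1$ with $u=1$ (hence at most $k$ among all other variables), the pool can contribute at most $k$ to the gate and hence cannot push a partially-fed $\textsc{And}$ over threshold: if some real input is $0$, the gate sees at most $(\ell-1)(k+1)+k = \ell(k+1)-1 < \ell(k+1)$, which is below the majority threshold for a fan-in of $2\ell(k+1)-1$. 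That robustness to ``spurious ones'' is what your proposal is missing; it replaces the need to manufacture a $0$ constant, which, as you note, monotone majority circuits cannot do directly.
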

\begin{proof}
A significant part of this proof is based on techniques due to Fellows et al.~\cite{Fellows} and to Buss et al.~\cite{Buss2006}.
	Let~$C$ be an arbitrary majority circuit of weft at most~$t$ and depth at most~$d \geq t$ for some constant~$d$, and let~$k$ be the parameter of the input instance. We define a \emph{small sub-circuit} as a maximally connected sub-circuit comprising only small gates. Now, consider a path~$\pi$ from an arbitrary input node of~$C$ to its output gate. We claim that~$\pi$ intersects at most~$t+1$ small sub-circuits. Indeed, there must be at least one large gate separating every pair of small sub-circuits intersected by~$\pi$, as otherwise the maximality assumption would be broken. But in~$\pi$, as in any path, there are at most~$t$ large gates, because of the weft restriction, from where we conclude the claim. Now, for each small sub-circuit~$S$, consider the set~$I_S$ of its inputs (that may be either large gates or input nodes of~$C$). As small gates have fan-in at most~$3$, and the depth of each small sub-circuit is at most~$d$, we have that~$|I_S| \leq 3^d$. We can thus enumerate in constant time all the satisfying assignments of~$S$. We identify each assignment with the set of variables to which it assigns the value~$1$. We keep a set~$\Gamma~$ with the satisfying assignments among~$I_S$ that are minimal with respect to~$\subseteq$. Then, because of the fact that majority circuits are monotone,~$S$ can be written in monotone DNF as
	\[
		S \equiv \bigvee_{\gamma \in \Gamma} \bigwedge_{x\in \gamma} x 
	\]

	Note that the size of~$\Gamma$ is trivially bounded by the constant~$2^{3^d}$. We then build a circuit~$C'$, based on~$C$, by following these steps:
	\begin{enumerate}
		\item Add~${3^d}(k+1)$ extra input nodes. We distinguish the first, that we denote as~$u$, from the~$3^d(k+1)-1$ remaining, that we refer to by~$N$.
		\item Add a new output gate that is a binary majority between the old output gate and the node~$u$.
		\item Replace every small sub-circuit~$S$ by its equivalent monotone DNF formula, consisting of one large~$\gor$-gate and many large~$\gand$-gates.
		\item Relabel every large~$\gor$-gate, of fan-in~$\ell \leq 2^{3^d}$ created in the previous step to be a majority gate with the same inputs, but to which one wires as well~$\ell$ parallel edges from the input node~$u$.
		\item Relabel every large~$\gand$-gate~$g$, of fan-in~$\ell \leq 3^d$, to be a majority gate. If~$g$ had edges from gates~$g_1, \ldots, g_\ell$, then replace each edge coming from a~$g_i$ by~$k+1$ parallel edges, and finally, wire~$\ell (k+1) - 1$ nodes in~$N$ to~$g$.
	\end{enumerate}
	
      An illustration of the transformation ins presented in Figure~\ref{fig:normalization}. We now check that~$C'$ is a (majority) circuit in~$M_{3t+2,3t+3}$.  To
bound the depth and weft of~$C'$ we need to account for all the sub-circuits of
depth~$2$ that we introduced in steps~$3$--$5$ to replace each small
sub-circuit of~$C$. Note that two small sub-circuits that were parallel in~$C$
(meaning no input-output path could intersect both) have corresponding
sub-circuits that are parallel in~$C'$.  Consider now an arbitrary path~$\pi$
from a variable to the root of~$C$, and let~$\pi'$ be the corresponding path
in~$C'$ (that goes to the new root of~$C'$). The path~$\pi$ contains one
variable gate, at most~$t$ large gates, and intersects at most~$t+1$ small
sub-circuits.  The corresponding path~$\pi'$ in~$C'$ still contains the
variable gate, the (at most~$t$) large gates that were in~$\pi$, and for each
of the~at most~$t+1$ small-subcircuits that~$\pi$ intersected,~$\pi'$ now contains
exactly~$2$ large gate (and~$\pi'$ also contains the new output gate of~$C'$).
Therefore, the length of~$\pi'$ is at most~$1+t+2(t+1)+1-1=3t+3$, and it
contains at most~$t+2(t+1)=3t+2$ large gates.  Since every path~$\pi'$ in~$C'$
from a variable to the root of~$C'$ corresponds to such a path~$\pi$ in~$C$, we obtain
that the depth of~$C'$ is at most~$3t+3$ and its weft is at most~$3t+2$.
Hence,~$C'$ is indeed a majority circuit in~$M_{3t+2,3t+3}$.

We now prove that ($\star$) there is a satisfying assignment of weight~$k+1$
for~$C'$ if and only if there is a satisfying assignment of weight~$k$ for~$C$,
which would conclude our fpt-reduction.  The proof for this claim is based on
how the constructions in step 4 and 5 actually simulate large~$\gor$-gates
and~$\gand$-gates, respectively.\footnote{Although this technique can already
be found in the work of Fellows et al.  \cite{Fellows}, we include it here for
completeness.}
We prove each direction in turn.
	
\paragraph*{Forward direction.} Let us assume that there exists a satisfying assignment of weight~$k+1$
for~$C'$. First, because input node~$u$ is directly connected to the output gate
through a binary majority, it must be assigned to~$1$ in order to satisfy~$C'$.
Let~$C''$ be the sub-circuit of~$C'$ formed by all the nodes
that descend from the old output-gate in~$C'$. Then~$C''$ needs to be
satisfied in order to satisfy~$C'$. Since~$u$ is not present in~$C''$, an
assignment of weight~$k+1$ that satisfies~$C'$ is made by assigning~$1$ 
to~$u$ and to exactly~$k$ other input gates. In order to prove the
claim, we will show that ($\dagger$) an assignment of weight~$k$ for the inputs
of~$C''$ satisfies~$C''$ if and only if its restriction to the inputs of~$C$
satisfies~$C$, assuming~$u$ is assigned to~$1$. As~$C''$ only differs from~$C$
because of the replacement of each small sub-circuit~$S$ by its equivalent DNF,
and the additional inputs in~$N$, we only need to prove that steps 4 and 5
actually compute large~$\gor$ and~$\gand$ gates. Consider a gate~$g$ introduced
in step 4, having edges from gates~$g_1, \ldots, g_\ell$ and~$\ell$ edges from
node~$u$. Therefore,~$g$ has fan-in~$2\ell$, and as~$u$ always contributes with
a value of~$\ell$ to~$g$, we have that~$g$ is satisfied exactly when at least
one of the gates~$g_1, \ldots, g_\ell$ is satisfied. Consider now a
gate~$g$ introduced in step~$5$. By construction,~$g$ has fan-in equal
to~$2\ell(k+1) - 1$, from which we deduce that if all gates~$g_1, \ldots,
g_\ell$ are satisfied, then~$g$ is indeed satisfied in~$C''$. On the other hand, if an
assignment of weight~$k$ does not satisfy every gate~$g_i$, then~$g$ receives
at most~$(\ell-1)(k+1)~$ units from the gates~$g_i$, and as the assignment has
weight~$k$, it receives at most~$k$ from the nodes in~$N$. Thus,~$g$ receives
at most~$(k+1)\ell -1$ units, which is less than half of its fan-in, and
thus,~$g$ is not satisfied. 
Thus, we have proved ($\dagger$).
However, notice that the restriction of the assignment might have a weight of strictly less than~$k$ in~$C$.
But it is clear that, since the circuit is monotone, we can increase the weight by setting some variables of~$C$ to~$1$, until the weight becomes equal to~$k$.
This proves the forward direction.

\paragraph*{Backward direction.}
Let us now assume an assignment of
weight~$k$ for~$C$. We then we extend such an assignment to~$C'$ by assigning~$0$
to the inputs in~$N$ and~$1$ to~$u$. Thanks to ($\dagger$), this
is a satisfying assignment of weight~$k+1$ for~$C'$, which proves the backward direction of~($\star$) and thus concludes the proof of Lemma~\ref{lemma:maj3t+2}.
\end{proof}

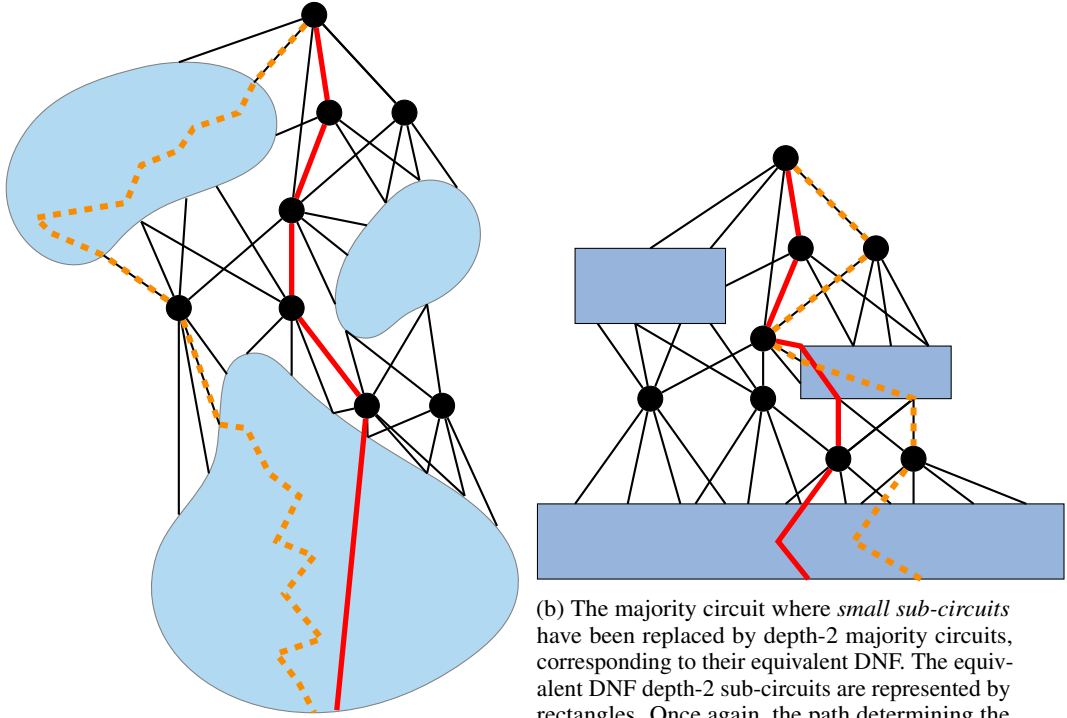
\begin{figure}
	\begin{subfigure}{0.45\textwidth}
\centering
\begin{tikzpicture}
	
	\def\vsp{1.3}
	\begin{scope}[every node/.style={draw, circle, fill=black}]
	
	
\definecolor{blobColor}{RGB}{100,180,230}

\path[draw, opacity=0.5, fill=blobColor, use Hobby shortcut,closed=true]
(-0.3, 0.6) .. (3.8, -0.8) .. (4.5, 0.1)
.. (3.3, 1.7) .. (1.5, 3.0) .. (1.0, 3.3) ..
(.5, 2);

	\node at (2.5,2*\vsp) (l22) {};
	\node at (3.5,2*\vsp) (l23) {};
	
	\node at (0,3*\vsp) (l31) {};
	\node at (1.5,3*\vsp) (l32) {};
	
	\node at (1.5,4*\vsp) (l41) {};
	
	\path[draw, opacity=0.5, fill=blobColor, use Hobby shortcut,closed=true]
(3,3.8) .. (3.4,4) .. (4,5) .. (3.3,5.6) .. (2.5,5) .. (2.5,3.5);

	\node at (2,5*\vsp) (l101) {};
	\node at (3,5*\vsp) (l102) {};
	
	\path[draw, opacity=0.5, fill=blobColor, use Hobby shortcut,closed=true]
(-0.8, 4.8) .. (-0.1,5.3) .. (1.3,6.3) .. (-0.5,7.1) .. (-2,6.3) .. (-1,4.6);
	
	\node at (1.8,6*\vsp) (l111) {};

	\end{scope}
	
	\begin{scope}[every path/.style={-, thick}]
	
	\path (2.51, 2.18) edge node {} (l22);
	\path (3.3, 1.7) edge node {} (l22);
	\path (3.8, 1.4) edge node {} (l22);
	\path (2.05, 2.5) edge node {} (l22);
	
	\path (2.51, 2.18) edge node {} (l23);
	\path (3.3, 1.7) edge node {} (l23);
	\path (3.8, 1.4) edge node {} (l23);
	\path (4.23, 1.0) edge node {} (l23);
	
	\path (1,6.9) edge node {} (l111);
	\path (0,7.17) edge node {} (l111);
	\path (l102) edge node {} (l111);
	\path (l101) edge node {} (l111);
	\path (l102) edge node {} (l111);
	\path (l102) edge node {} (l41);
	\path (l41) edge node {} (l111);

	\path (l41) edge node {} (2.08, 3.95);
	\path (l41) edge node {} (2.25, 4.6);
	\path (l41) edge node {} (2.5, 5);

	\path (l31) edge node {} (l41);
	\path (l32) edge node {} (l41);
	
	\path (l31) edge node {} (-.5,5.05);
	\path (l32) edge node {} (-.5,5.05);
	
	\path (l32) edge node {} (0.5,5.52);
	
	\path (l31) edge node {} (-1,4.6);

	\path (l31) edge node {} (0.1,5.36);
	
	\path (l31) edge node {} (0,1.15);
	\path (l31) edge node {} (.4,1.7);
	\path (l31) edge node {} (.54,2.35);
	\path (l31) edge node {} (0.64,3.02);
	
	\path (l32) edge node {} (0.9,3.29);
	\path (l32) edge node {} (2.05,2.5);
	\path (l32) edge node {} (1.5,3);

	\path (l32) edge node {} (l22);
	
	\path (2.22,3.6) edge node {} (l22);
	\path (2.22,3.6) edge node {} (l23);
	
	\path (3.3,3.95) edge node {} (l22);
	\path (3.3,3.95) edge node {} (l23);
	
	\path (2.75,5.3) edge node {} (l102);
	\path (3.2,5.6) edge node {} (l102);
	\path (3.7,5.5) edge node {} (l102);
	
	\path (1.29,6.2) edge node {} (l101);
	\path (2.75,5.3) edge node {} (l101);
	\path (3.2,5.6) edge node {} (l101);

	\path[draw, line width=0.7mm, red] (l111) -- (l101) -- (l41) -- (l32) -- (l22) -- (2.1, -1.45) ;

\definecolor{Forange}{RGB}{250,140,0}
	\path[draw, dashed, ultra thick, line width=0.8mm, color=Forange] (l111) -- (1, 6.9) -- (0.8, 6.5)
	-- (0.2, 6.3) -- (0, 6) --  (-0.5, 5.8) -- (-0.7, 5.3) -- (-1.9, 5.1) -- (-1.7, 4.9) -- (-1, 4.6) -- (l31) -- (.54, 2.35) -- (.9, 2.3)
	-- (1.2, 1.7) -- (1.6, 1.4) -- (1.3, .8) -- (1.8, .6) -- (1.4, .1) -- (1.9, -0.5) -- (1.3, -0.7) -- (1.6, -1.1) -- (1.8, -1.48);

	\end{scope}
\end{tikzpicture}
\caption{A majority circuit where \emph{small sub-circuits} are represented with blue blobs, and black nodes correspond to large majority gates. The path determining the \emph{weft} is colored red. The longest path, determining the \emph{depth} of the circuit, is drawn with a dashed orange line.}
\end{subfigure}
\hfill
\begin{subfigure}{0.45\textwidth}
\centering
\begin{tikzpicture}
	
	\def\vsp{0.8}
	\begin{scope}[every node/.style={draw, circle, fill=black}]
	
	
\definecolor{blobColor}{RGB}{150,180,220}

	\draw[fill=blobColor] (-1.5,1.6) rectangle ++(7,1);
	
	\draw[fill=blobColor] (2,4) rectangle ++(2,0.7);
	
	\draw[fill=blobColor] (-1,5) rectangle ++(2,1);

	\node at (2.5,4*\vsp) (l22) {};
	\node at (3.5,4*\vsp) (l23) {};
	
	\node at (0, 5*\vsp) (l31) {};
	\node at (1.5, 5*\vsp) (l32) {};
	
	\node at (1.5,6*\vsp) (l41) {};
	

	\node at (2, 7.5*\vsp) (l101) {};
	\node at (3, 7.5*\vsp) (l102) {};
	
	
	\node at (1.8, 9*\vsp) (l111) {};

	\end{scope}
	
	\begin{scope}[every path/.style={-, thick}]
	
	\path (l102) edge node {} (l111);
	\path (l101) edge node {} (l111);
	\path (l102) edge node {} (l41);
	\path (l41) edge node {} (l111);

	\path (0.8, 6) edge node {} (l111);
	\path (0,6) edge node {} (l111);
	
	\path (1, 5.5) edge node {} (l101);
	
	\path (2.7, 4.7) edge node {} (l101);
	\path (3.7, 4.7) edge node {} (l101);
	
	\path (2.7, 4.7) edge node {} (l102);
	\path (3.2, 4.7) edge node {} (l102);
	\path (3.7, 4.7) edge node {} (l102);

	\path (-1, 2.6) edge node {} (l31);
	\path (-0.3, 2.6) edge node {} (l31);
	\path (0.4, 2.6) edge node {} (l31);
	\path (1, 2.6) edge node {} (l31);
	
	\path (0.6, 2.6) edge node {} (l32);
	\path (1.3, 2.6) edge node {} (l32);
	\path (2, 2.6) edge node {} (l32);
	
	\path (1.8, 2.6) edge node {} (l22);
	\path (2.6, 2.6) edge node {} (l22);
	\path (3.2, 2.6) edge node {} (l22);
	
	\path (2.8, 2.6) edge node {} (l23);
	\path (3.5, 2.6) edge node {} (l23);
	\path (4.3, 2.6) edge node {} (l23);
	\path (5, 2.6) edge node {} (l23);

	\path (l22) edge node {} (2.5,4);
	\path (l23) edge node {} (3.5,4);
	\path (l23) edge node {} (2.5,4);
	\path (l22) edge node {} (3.5,4);
	
	\path (l22) edge node {} (3.5,4);
	
	\path (l41) edge node {} (2,4.2);
	\path (l41) edge node {} (2,4.5);
	\path (l41) edge node {} (2,4.7);
		
	\path (l31) edge node {} (l41);
	\path (l32) edge node {} (l41);
	
	\path (l31) edge node {} (-.2,5);
	\path (l31) edge node {} (-.7,5);
	\path (l31) edge node {} (0.4,5);
	
	\path (l32) edge node {} (-.2,5);
	
	\path (l32) edge node {} (0.8,5);
	\path (l32) edge node {} (l22);
	
\path[draw, line width=0.7mm, red] (l111) -- (l101) -- (l41) -- (2, 4.7) -- (2.5, 4) -- (l22) -- (1.7, 2.1) -- (2.1, 1.6);

\definecolor{Forange}{RGB}{250,140,0}
\path[draw, dashed, ultra thick, line width=0.8mm, color=Forange] (l111) -- (l102) -- (l41) -- (2, 4.5) -- (3.5, 4) -- (l23) -- (2.7, 2.1) -- (3.6, 1.6);

	\end{scope}
\end{tikzpicture}
\caption{The majority circuit where \emph{small sub-circuits} have been replaced by depth-2 majority circuits, corresponding to their equivalent DNF. The equivalent DNF depth-2 sub-circuits are represented by rectangles. Once again, the path determining the \emph{weft} is colored red. The longest path, determining the \emph{depth} of the circuit, is drawn with a dashed orange line.}
\end{subfigure}
	\caption{Illustration of the Normalization Lemma (\ref{lemma:maj3t+2}). In a nutshell, by paying a controlled increase in weft, the depth of the circuit can be substantially reduced.}
	\label{fig:normalization}
\end{figure}

Then, we show that rMLPs can simulate majority circuits, without increasing the
depth of the circuit.

\begin{lemma}
	Given a circuit~$C$ containing only majority gates, we can build in polynomial time an rMLP that is equivalent to~$C$ (as a Boolean function)
and whose number of layers is equal to the depth of~$C$.
	\label{lemma:circuitMajToRelu}
\end{lemma}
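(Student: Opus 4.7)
The plan is to adapt the three-step construction of Lemma~\ref{lem:circuits-to-MLPs}. First, I would layerize $C$ without increasing its depth: noting that a majority gate of fan-in $1$ is the identity function, I insert such ``identity'' gates exactly as in the proof of Lemma~\ref{lem:circuits-to-MLPs} to obtain in polynomial time an equivalent circuit $C'$ of the same depth, in which every wire connects adjacent levels.

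The core of the proof is a two-neuron gadget that simulates a single internal majority gate of fan-in $n$. Setting $y \coloneqq \sum_{i=1}^{n} x_i - \lfloor n/2 \rfloor$, the gate outputs $1$ iff $y\geq 1$. I introduce two relu neurons
\[
r_1 \coloneqq \relu(y), \qquad r_2 \coloneqq \relu(y-1),
\]
and a case analysis on integer values of $y$ shows $r_1 - r_2 \in \{0,1\}$ and always equals the gate's value (for $y\leq 0$ both relus vanish; for $y = m \geq 1$ we get $m - (m-1) = 1$). Crucially, the subtraction is not performed by the gadget itself but is absorbed into the linear combination of the \emph{next} layer: if a subsequent majority gate reads predecessors $g_1, \ldots, g_m$ in $C'$, each represented by its own pair $(r_{1,i}, r_{2,i})$, then the next layer computes $\sum_i g_i - \lfloor m/2 \rfloor$ via weights $(+1,-1,+1,-1,\dots)$ and bias $-\lfloor m/2 \rfloor$ over the $2m$ predecessor neurons before applying its own relu. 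Parallel edges into a gate are handled by scaling the corresponding $\pm 1$ weights by the multiplicity. The output gate is the only exception, since the MLP's last activation must be step: I simulate the root majority by a single neuron $\step(\sum_i x_i - \lfloor n/2 \rfloor - 1)$, which outputs $1$ iff $y\geq 1$, and thus preserves the layer count.

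Iterating this construction across the levels of $C'$, each internal level becomes one layer of relu neurons and the output level becomes the final step layer, so the total number of layers equals the depth of $C$. As in Lemma~\ref{lem:circuits-to-MLPs}, I would finish by inserting weight-$0$ edges to recover the fully-connected structure required by Section~\ref{subsec:models}. The rMLP restriction is then straightforward to check: every weight lies in $\{-k,\dots,k\}$ with $k$ at most the maximum fan-in, and every bias is an integer of absolute value at most the fan-in; both are bounded by the graph size $N$ and hence admit decimal representations with $O(\log N)$ digits.

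The main subtlety will be the inductive bookkeeping, where I would state the invariant that a majority gate at level $\ell$ of $C'$ is represented in layer $\ell$ of the rMLP by an \emph{ordered pair} of relu neurons whose difference equals the gate's Boolean value. Once this invariant is stated cleanly, the verification that the next layer's weights correctly reconstruct $\sum_i g_i$ is immediate, and the remaining steps (layerization and zero-padding) are essentially those already carried out in Lemma~\ref{lem:circuits-to-MLPs}.
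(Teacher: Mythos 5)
Your construction is essentially the paper's proof: the same two-\relu\ gadget $\relu(y)-\relu(y-1)$ with the subtraction absorbed into the $\pm 1$ weights of the next layer, the same $\step$ neuron with bias $-\lfloor n/2\rfloor-1$ at the root, and the same layerization via unary (identity) majority gates followed by zero-weight padding, so the argument is correct. The one place you deviate is parallel edges: the paper first eliminates them by duplicating gates, which is what makes the rMLP digit bound immediate (all weights become $\pm 1$ or $0$ and biases are bounded by the number of neurons), whereas scaling weights by the multiplicity leaves weights and biases bounded only by the fan-in counted with multiplicity, which need not be polynomial in the resulting MLP's graph size $N$, so your final $O(\log N)$-digit claim would need either the paper's duplication step or an extra argument there.
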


\begin{proof}
    First, note that we can assume that circuit~$C$ does not contain parallel edges by replacing each gate $g$ having $p$ edges to a gate $g'$ by $p$ copies $g_1, \ldots, g_p$ with single edges to $g'$.
    We then build a layerized circuit (remember the definition of a layerized
circuit from Appendix~\ref{sec:simulation})~$C'$ from~$C$, by applying the same
construction that we used in Lemma~\ref{lem:circuits-to-MLPs} to layerize a
circuit, but using unary majority gates as identity gates
instead.  Note that the depth of~$C'$ is the same as that of~$C$.
	
    Next, we show how each non-output majority gate can be simulated by using
two~$\relu$-gates (again, remember the definition of a relu gate from
Appendix~\ref{sec:simulation}). First, note that ($\dagger$) for any
non-negative integers~$x,n\in \mathbb{N}$, the function 
	\[
	f_n(x) \coloneqq \relu \left(x-\lfloor \frac{n}{2} \rfloor \right) - \relu\left(x - \lfloor \frac{n}{2} \rfloor - 1 \right)
	\]
	 is equal to 
	 \[
	 \Maj_n(x) = \begin{cases}
		1 & \text { if } x > \frac{n}{2}\\
		0 & \text{otherwise}
	\end{cases}.
	\]
    We will use ($\dagger$) to transform the majority circuit~$C'$ into a
circuit~$C''$ that has only relu gates for the non-output gates, and that is equivalent to $C'$ in a sense
that we will explain next.  For every non-output majority gate~$g$ of~$C'$, we create two
relu gates~$g'_1,g'_2$ of~$C''$. The idea is that~($\star$) for any valuation
of the input gates (we identify the input gates of~$C'$ with those of~$C''$),
the Boolean value of any non-output gate~$g$ in~$C'$ will be equal to the (not necessarily
Boolean) value of gate~$g'_1$ (in~$C''$) minus the value of the gate~$g'_1$
(in~$C''$).  We now explain what the biases of these new gates~$g'_1,g'_2$ 
for every majority gate~$g$ of~$C'$ are.  Letting~$n$ be the in-degree of a
majority gate~$g$ in~$C'$, the bias of~$g'_1$ is~$-\lfloor \frac{n}{2}
\rfloor$, and that of~$g'_2$ is~$- \lfloor \frac{n}{2} \rfloor - 1$.  Next, we
explain what the weights of these new gates~$g'_1,g'_2$ are and how we connect
them to the other relu gates. We do this by a bottom-up induction on~$C'$, that
is, on the level of the gates of~$C'$ (since~$C'$ is layerized), and we will at
the same time show that~($\star$) is satisfied. To connect the
gates~$g'_1,g'_2$ to the gates of the preceding layer, we differentiate two
cases:
\begin{description}
    \item[Base case.] The inputs of the gate~$g$ are variable gates; in other words, the level of~$g$ in~$C'$ is~$1$ (remember that variable gates have level~$0$). We then set these variable gates to be an input of both~$g'_2$ and~$g'_2$, and set all the weights to~$1$. It is clear that~($\star$) is satisfied for the gates~$g,g'_1,g'_2$, thanks to ($\dagger$).
    \item[Inductive case.]  The inputs of the gate~$g$ are other majority gates; in other words, the level of~$g$ in~$C'$ is~$>1$. Then, let~$^1g,\ldots, ^mg$ be the inputs\footnote{Please excuse us for using left superscripts.} (majority gates) of the gate~$g$ in~$C'$, and consider their associated pairs of relu gates~$(^1g'_1,^1g'_2),\ldots,(^mg'_1,^mg'_2)$ in~$C''$. We then set all the gates~$^1g'_1,\ldots, ^mg_1$ to be input gates of both gates~$g'_1$ and~$g'_2$, with a weight of~$1$, and 
set all the gates~$^1g_2,\ldots, ^mg_2$ to be input gates of both gates~$g'_1$ and~$g'_2$, with a weight of~$-1$. By induction hypothesis, and using again ($\dagger$), it is clear that ($\star$) is satisfied.
\end{description}

Finally, based on the output gate $r$ of $C'$, we create a step gate $r'$ in~$C''$ in the following way. Let $^1g,\ldots, ^mg$ be the inputs of $r$, and $(^1g'_1,^1g'_2),\ldots,(^mg'_1,^mg'_2)$ their associated pairs in~$C''$. Then wire each gate $^ig'_1$ to $r'$ with weight $1$, and also wire each gate $^ig'_2$ to $r'$ with weight $-1$. Let $-\lfloor \frac{n}{2} \rfloor - 1$ be the bias of $r'$. 

We have constructed a circuit~$C''$ whose output gate is a step gate, and all other gates are relu gates. 
Consider now a valuation $\vx$ of the input gates of $C'$, which we identify as well as a valuation $\vx'$ of the input gates of $C''$. We claim that $C'(\vx) = 1$ if and only if $C''(\vx') = 1$. 
But this simply comes from the fact that for~$x,n\in \mathbb{N}$, we have~$x> \frac{n}{2} \iff x \geq \lfloor \frac{n}{2} \rfloor +1$, and from the fact that~($\star$) is satisfied for the input gates of~$r$ and of~$r'$.

 The last thing that we have to do is to transform the
circuit~$C''$, that uses only relu gates except for its output step gate, into
a valid MLP.  This can be done easily as in the proof of
Lemma~\ref{lem:circuits-to-MLPs} by adding dummy connections with weights zero,
because~$C''$ is layerized.  The resulting MLP~$\M_C$ is then equivalent
to~$C$, it is clearly an rMLP, its number of layers is exactly the depth of~$C$, and, since we have constructed it in polynomial
time, this concludes the proof. 
\end{proof}

Finally, we use this construction to show
an fpt-reduction from~$WCS(M_{3t+2, 3t+3})$ to~$(3t+3)$-MCR. This is enough to
conclude hardness for~$\W(\Maj)[t]$, thanks to Lemma~\ref{lemma:maj3t+2}.
 
\begin{theorem}
There is an fpt-reduction from the problem~$WCS(M_{3t+2, 3t+3})$ to the~$(3t+3)$-MCR problem.
	\label{thm:layers-hardness}
\end{theorem}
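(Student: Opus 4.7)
The plan is to exploit Lemma~\ref{lemma:circuitMajToRelu} essentially as a black box and to choose the all-zeros input as the instance to be perturbed. Given an input $(C, k)$ of $WCS(M_{3t+2, 3t+3})$ with $n$ variables, I would first handle the degenerate case $n < k$ by producing a trivial \textsc{No} instance of $(3t+3)$-MCR (e.g., a constant rMLP on one variable with $k' = 0$); this is correct since there is no assignment of weight exactly $k$ when $n < k$. Otherwise, I would apply Lemma~\ref{lemma:circuitMajToRelu} to $C$ to build in polynomial time an rMLP $\M_C$ equivalent to $C$ as a Boolean function and whose number of layers is equal to the depth of $C$, which is at most $3t+3$. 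I would then output the $(3t+3)$-MCR instance $(\M_C, \vx, k)$ with $\vx := 0^n$, taking $k' := k$ as the new parameter.

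The correctness argument relies on the key observation that majority circuits compute monotone Boolean functions; in particular, $\M_C(0^n) = C(0^n) = 0$, since with all inputs equal to $0$ every majority gate (and hence the output gate) evaluates to $0$. I would then verify both directions. For the forward direction, if $\vy \in \{0,1\}^n$ is a satisfying assignment of weight exactly $k$ for $C$, then $\dist(0^n, \vy) = k$ and $\M_C(\vy) = 1 \neq 0 = \M_C(0^n)$, so $(\M_C, 0^n, k)$ is a \textsc{Yes} instance of $(3t+3)$-MCR. For the backward direction, suppose there is $\vy \in \{0,1\}^n$ with $\dist(0^n, \vy) \leq k$ and $\M_C(\vy) \neq \M_C(0^n)$; then $\vy$ has Hamming weight $w \leq k$ and $C(\vy) = 1$. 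Using the monotonicity of $C$ together with $n \geq k$, I can flip any $k - w$ zero-coordinates of $\vy$ to $1$ to obtain an assignment $\vy'$ of weight exactly $k$ that still satisfies $C$, yielding a \textsc{Yes} instance of $WCS(M_{3t+2, 3t+3})$.

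Finally, I would check that the map is an fpt-reduction: by Lemma~\ref{lemma:circuitMajToRelu} the rMLP $\M_C$ is computable in polynomial time in $|C|$, the input $\vx = 0^n$ is trivial to produce, and the parameter is preserved ($k' = k$). Since the number of layers of $\M_C$ is at most $3t+3$, the output is a valid instance of $(3t+3)$-\textsc{MCR}. Combined with Lemma~\ref{lemma:maj3t+2}, which gives $\W(\Maj)[t]$-hardness of $WCS(M_{3t+2,3t+3})$, this yields the $\W(\Maj)[t]$-hardness half of Proposition~\ref{prp:mlpt}. I do not expect a serious obstacle here: the content of the theorem was absorbed by Lemmas~\ref{lemma:maj3t+2} and \ref{lemma:circuitMajToRelu}, and the only subtlety is the monotonicity trick converting ``weight exactly $k$'' to ``at most $k$ flips'', together with the small-$n$ edge case, both of which are entirely routine.
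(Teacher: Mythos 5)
Your proof is correct, and it takes essentially the same approach as the paper's but in a leaner form. Both arguments apply Lemma~\ref{lemma:circuitMajToRelu} to get an rMLP $\M_C$ of depth equal to that of $C$, both use the all-zeros vector as the instance to be perturbed, and both invoke the monotonicity of majority circuits to pass from ``at most $k$ flips'' to ``weight exactly $k$'' (your small-$n$ edge case handles the only failure mode of that padding). The one structural difference is that the paper does not output $(\M_C, 0^n, k)$ directly; it first enlarges the model to $\M'_C$ by adding one fresh input $v_1$, routing it through a chain of identity-like nodes to the output step gate, and shifting that gate's bias so that every positive instance of $\M'_C$ must set $v_1 = 1$, and then outputs $(\M'_C, 0^{n+1}, k+1)$. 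Your version shows this gadget is unnecessary: since every majority gate evaluates to $0$ on the all-zeros input, $\M_C(0^n) = C(0^n) = 0$ holds automatically, which is precisely the property the paper's gadget was engineered to guarantee for $0^{n+1}$. The net effect is a simpler fpt-reduction with the parameter preserved as $k$ instead of $k+1$; there is no gap.
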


\begin{proof}
	We will in fact show an fpt-reduction from~$WCS(M_{t, t})$ to~$t$-MCR, which gives the claim when applied to~$3t+3$, noting of course that $WCS(M_{3t+3, 3t+3})$ is trivially at least as hard as $WCS(M_{3t+2, 3t+3})$.
	Let~$(C,k)$ be an instance of~$WCS(M_{t,t})$. 
	We first build an MLP~$\M_C$ equivalent to~$C$ (as Boolean functions) 
    by using Lemma~\ref{lemma:circuitMajToRelu}. The MLP~$\M_C$ has~$t$ layers. Then, we build an MLP~$\M'_C$, that is based on~$\M_C$, by following the steps described below:

	\begin{enumerate}
		\item Initialize~$\M'_C$ to be an exact copy of~$\M_C$.
		\item Add an extra input, that we call~$v_1$, to~$\M'_C$. This means that if~$\M_C$ had dimension~$n$, then~$\M'_C$ has dimension~$n+1$.
		\item Create nodes~$v_2, \ldots, v_{t}$, all having a bias of~$0$, and for each~$1 \leq i < t$, connect node~$v_i$ to node~$v_{i+1}$ with an edge of weight~$1$.
		\item Let~$r$ be the root of~$\M'_C$, and let~$m$ be its fan-in. We connect node~$v_{t}$ to~$r$ with an edge of weight~$m$. Moreover, if the bias of~$r$ in~$\M_C$ was~$b$, we set it to be~$b-m$ in~$\M'_C$.	
        \item Observe that $\M'_C$ is layerized. To make it a valid MLP (where all the neurons of a layer are connected to all the neurons of the adjacent layers), we do as in the proof of Lemma~\ref{lem:circuits-to-MLPs} by adding dummy null weights.
	\end{enumerate} 
	
	It is clear that the construction of~$\M'_C$ takes polynomial time, and that its number of layers is again~$t$. We now prove a claim describing the behavior of~$\M'_C$.
	
	\begin{claim}
	For any instance~$\vx'$ of~$\M'_C$, expressed as the concatenation of a feature~$\vx'_1$ (for the extra input node~$v_1$) and an instance~$\vx$ of~$\M_C$, we have that~$\vx'$ is a positive instance of~$\M'_C$ if and only if~$\vx'_1 = 1$ and~$\vx$ is a positive instance of~$\M_C$
	\end{claim}
	\begin{proof}
	Consider that, by construction, an instance~$\vx'$ is positive for~$\M'_C$ if and only if 
	\[
		\sum_{i=1}^{n+1} \vh'^{(t-1)}_i \mW'^{(t)}_i = m \vh'^{(t-1)}_1 + \sum_{i=2}^{n+1} \vh'^{(t-1)}_i \mW'^{(t)}_i \geq -b+m 	
	\]
	
	But by construction~$\vh'^{(t-1)}_1 = \vx'_1$, and~$\sum_{i=2}^{m+1} \vh'^{(t-1)}_i \mW'^{(t)}_i = \sum_{i=1}^{m} \vh^{(t-1)}_i \mW^{(t)}_i$. This means that~$\vx'$ is a positive instance of~$\M'_C$ if and only if
	
	\[
		m\vx'_1 + \sum_{i=1}^{m} \vh^{(t-1)}_i \mW^{(t)}_i \geq -b + m
	\]
	
	Note that if~$\vx'_1 = 1$ and~$\vx$ is a positive instance of~$\M_C$, this inequality is achieved, making~$\vx'$ a positive instance. For the other direction, it is
clear that it holds if~$\vx'_1=1$. We show that in fact~$\vx'_1 = 0$ is not possible. 
	Indeed, by the construction of~$\M_C$, we have that~$0 \leq \sum_{i=1}^{m} \vh^{(t-1)}_i \mW^{(t)}_i \leq m$, and also that~$-b \geq 1$, which makes the inequality unfeasible.  
	
	This concludes the proof of the claim.
	\end{proof}
	
	This claim has two important consequences:
	\begin{enumerate}
		\item As satisfying assignments of~$C$ correspond to positive instance of~$\M_C$, we have that there is a satisfying assignment of weight exactly~$k$ for~$C$ if and only if there is a positive instance of weight exactly~$k+1$ for~$\M'_C$.
		\item The instance~$0^{n+1}$ is negative for~$\M'_C$
	\end{enumerate} 
	
	This consequences will allow us to finish the reduction. Consider the instance~$(\M'_C, 0^{n+1}, k+1)$ of~$t$-MCR. We claim that this is a positive instance for the problem if and only if~$(C, k)$ is a positive instance of~$WCS(M_t)$.

	For the forward direction, consider~$(\M'_C, 0^{n+1}, k+1)$ to be a positive instance of~$t$-MCR. This means there is an instance~$\vx^*$ that has the opposite classification as~$0^{n+1}$ under~$\M'_C$, and differs from it in at most~$k+1$ features. By the second consequence of the claim,~$\vx^*$ must be a positive instance. Also, differing in at most~$k+1$ features from~$0^{n+1}$ means that~$\vx^*$ has weight at most~$k+1$. But as majority gates are monotone connectives, majority circuits are monotones as well, so the existence of a positive instance~$\vx^*$ of weight at most~$k+1$ implies the existence of a positive instance~$\vx'^*$ of weight exactly~$k+1$.
	Therefore, by the first consequence of the claim, there is a satisfying assignment of weight exactly~$k$ for~$C$, which implies~$(C, k)$ is a positive instance of~$WCS(M_{t,t})$
	
	For the backward direction, consider~$(C, k)$ to be a positive instance of~$WCS(M_{t,t})$. This means, by the first consequence of the claim, that there is a positive instance~$\vx^*$ of weight exactly~$k+1$ 	for~$\M'_C$. But based on the second consequence of the claim,~$0^{n+1}$ is a negative instance for~$\M'_C$. As~$\vx^*$ differs from~$0^{n+1}$ in no more than~$k+1$ features, and they have opposite classifications, we have that~$(\M'_C, 0^{n+1}, k+1)$ is a positive instance of~$t$-MCR.
	
	As the whole construction takes polynomial time, and the reduction changes the parameter in a computable way, from~$k$ to~$k+1$, it is an fpt-reduction. This concludes the proof.
	\end{proof}

\subsection{Membership}
\label{subsec:p-membership}
In this section we prove membership in~$\W(\Maj)[3t+7]$. This will be enough to prove:

\begin{theorem}
There is an fpt-reduction from~$t$-MCR to~$WCS(M_{t+4, t+4})$, implying~$(3t+3)$-MCR belongs to~$\W(\Maj)[3t+7]$.	
\label{thm:layers-membership}
\end{theorem}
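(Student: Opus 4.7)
The plan is to follow the four-step roadmap outlined in the proof sketch. Given an input $(\M,\vx,k)$ of $t$-\textsc{MCR}, where $\M$ is an rMLP with $t$ layers, I will construct in polynomial time a majority circuit $C^*$ of weft and depth at most $t+4$, together with a target weight $k'$ depending only on $k$, such that $(\M,\vx,k)$ is a yes-instance of $t$-\textsc{MCR} iff $(C^*, k')$ is a yes-instance of $\textsc{WCS}(M_{t+4,t+4})$. Instantiating the resulting fpt-reduction with $t \coloneqq 3t+3$ then immediately yields $(3t+3)$-\textsc{MCR} $\in \W(\Maj)[3t+7]$.

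The first two steps reduce MCR on rMLPs to weighted satisfiability on a threshold circuit. First, I apply Lemma~\ref{lemma:equivReluThreshold} to replace every $\relu$ in $\M$ by a step activation, producing an equivalent MLP $\M'$ with the same $t$ layers, i.e., a depth-$t$ weighted-threshold circuit. Next, I build an MLP $\M''$ with $t+1$ layers that encodes MCR as weighted satisfiability. Introduce $n$ \emph{flip} variables $z_1,\ldots,z_n$ and $k$ \emph{padding} variables $p_1,\ldots,p_k$, and set $y_i \coloneqq x_i + (1-2x_i)z_i$, so that $\vy = \vx \oplus \vz$. Since each $x_i$ is a constant, this is linear in $\vz$ and folds into the first layer of $\M'$, while the padding variables enter with weight zero. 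I then negate the output step gate iff $\M(\vx)=1$, so that $\M''(\vz,\vp) = 1$ iff $\M(\vx \oplus \vz) \ne \M(\vx)$, independently of $\vp$. A satisfying assignment of $\M''$ of weight exactly $k$ is thus in bijection with a $\vz$ satisfying $\sum_i z_i \le k$ (the padding absorbs the slack) and $\M(\vx \oplus \vz) \ne \M(\vx)$; hence $(\M,\vx,k)$ is a yes-instance of $t$-\textsc{MCR} iff $\M''$ admits a positive assignment of weight exactly $k$.

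The last two steps convert $\M''$ into the required negation-free majority circuit. I invoke the Goldmann--H\aa stad--Razborov simulation cited in the sketch to turn the depth-$(t+1)$ threshold circuit $\M''$ into an equivalent polynomial-size depth-$(t+2)$ majority circuit $C_{\M''}$ whose input literals may carry negations. I then transform $C_{\M''}$ into a monotone (negation-free) majority circuit $C^*$ in $M_{t+4,t+4}$ as follows: for every input $v$ appearing negated in $C_{\M''}$, introduce a companion $v'$, replace every negated occurrence of $v$ by $v'$, and attach a small majority gadget of weft and depth at most $2$ that, on satisfying assignments of the chosen target weight, enforces the pairing constraint $v+v'=1$. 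This costs $+2$ in both weft and depth, placing $C^*$ in $M_{t+4,t+4}$, and I set $k' \coloneqq k+1$ so that positive assignments of $C^*$ of weight exactly $k'$ correspond bijectively to positive assignments of $\M''$ of weight exactly $k$.

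The main obstacle is this last step, where a naive doubling of \emph{every} negated input would make $k'$ grow with $n$, violating the fpt-reduction requirement that $k'$ be bounded by a computable function of $k$. Resolving this cleanly calls for a careful gadget design in the same spirit as the extra ``activator'' input $u$ used in the hardness direction of Lemma~\ref{lemma:maj3t+2}: the idea is to introduce a single distinguished auxiliary input that is forced to be set in any satisfying assignment of weight $k+1$, and to wire it (with parallel edges) into the pairing gadgets so that the correct complementary behaviour of all companion inputs is enforced while only this one extra input contributes to the weight. Implementing this inside the $+2$ weft and depth budget, and verifying the bijective correspondence of positive assignments at weight $k+1$, is the technical heart of the proof; once it is in place, the whole construction runs in polynomial time and the fpt-reduction is complete.
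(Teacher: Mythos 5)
Your high-level roadmap is the right one, and steps 1--3 essentially match the paper: use Lemma~\ref{lemma:equivReluThreshold} to replace $\relu$ by $\step$, absorb the XOR $\vy = \vx \oplus \vz$ into the first affine layer, flip the output sign so that the model accepts exactly the flip patterns that change $\M$'s verdict, add padding inputs so that MCR becomes an exact-weight question, and then apply the Goldmann--Karpinski simulation to turn the resulting threshold circuit into a majority circuit. (Two small remarks here: the paper makes the padding \emph{participate} in the computation via an extra gate $p$ of bias $-k$, rather than giving the padding variables weight zero; weight-zero inputs risk being absent from the Boolean function that Goldmann--Karpinski actually sees, so they may not survive as genuine inputs of the constructed majority circuit. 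Also your layer count drifts by one -- if the XOR truly folds into the first layer and you have no $p$-gate, your $\M''$ has $t$ layers, not $t+1$ -- but that is not the real issue.)

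The real gap is step~4, and you have correctly diagnosed it yourself: ``Implementing this inside the $+2$ weft and depth budget, and verifying the bijective correspondence of positive assignments at weight $k+1$, is the technical heart of the proof; once it is in place\ldots''. That step is not in place, and the direction you sketch would not get you there. You propose to introduce a companion input $v'$ for every negated input $v$ and attach a gadget enforcing $v+v'=1$. Such a pairing constraint is non-monotone and cannot be enforced by a monotone (majority) circuit; fixing the global weight to $k+1$ does not rescue it, since a pair could still take values $(0,0)$ and another pair $(1,1)$ while preserving the total. Gesturing at ``a single activator input wired via parallel edges into the pairing gadgets'' does not resolve this, because the companions are still inputs that contribute weight, and the constraint being enforced is still the non-monotone one.

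The paper's resolution is of a different nature and introduces \emph{no} companion inputs at all. After Goldmann--Karpinski one has a circuit over $2n$ literal-nodes $\vx_1,\dots,\vx_n,\overline{\vx_1},\dots,\overline{\vx_n}$. One adds a fresh input layer $\vx'_1,\dots,\vx'_{n+1}$ with $\vx'_{n+1}$ the activator. Each positive literal-node $\vx_i$ becomes a unary majority of $\vx'_i$. Each negated literal-node $\overline{\vx_i}$ is replaced by a \emph{large majority gate} $m_i$ whose inputs are $n+1-2k$ parallel copies of $\vx'_{n+1}$ together with one copy of each $\vx'_j$ for $j\notin\{i,n+1\}$. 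When exactly $k+1$ of the new inputs are $1$ (including $\vx'_{n+1}$, forced by a binary majority at the new root), the sum feeding $m_i$ is $n+1-k-\vx'_i$ against fan-in $2n-2k$, so $m_i = \lnot \vx'_i$. This computes the negation \emph{from the known weight} rather than enforcing any pairing constraint, adds only the single activator input, and costs exactly $+2$ in depth and weft, so the parameter moves from $k$ to $k+1$ and the target class is $\W(\Maj)[3t+7]$. That construction is the piece your proof is missing.
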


As explained in the proof sketch, we first show how to transform a given
rMLP~$\M$ that into an MLP~$\M'$ that uses only step activation functions and
that has the same number of layers.  More formally, we prove that rMLPs using
only step activation functions are powerful enough to simulate MLPs that use
\relu\ activation functions in the internal layers (and a step function for the
output neuron). The construction is polynomial in
the width (maximal number of neurons in a layer) of the given \relu-rMLP, but
exponential on its depth (number of layers).  We show:

\begin{lemma}
    Given an rMLP $\mathcal{M}$ with \relu\ activation functions, there is an
equivalent MLP $\mathcal{M}'$ that uses only step activation functions and has
the same number of layers.  Moreover, if the number of layers of $\mathcal{M}$
is bounded by a constant, then $\mathcal{M}'$ can be computed in polynomial
time.
\label{lemma:equivReluThreshold}
\end{lemma}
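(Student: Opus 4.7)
The plan is to build $\M'$ by replacing each ReLU neuron of $\M$ with a small group of step neurons in the same layer, while adjusting the weights on the edges to the next layer so that the pre-activations are preserved. The driving identity is that if a ReLU neuron's pre-activation $z$ takes positive values in a finite set $\{v_1<v_2<\dots<v_m\}$ over all inputs $\vx\in\{0,1\}^n$, then
\[
\relu(z)\;=\;\sum_{j=1}^{m}(v_j-v_{j-1})\,\step(z-v_j), \qquad v_0\coloneqq 0.
\]
The plan is therefore to replace every ReLU neuron at layer $\ell$ of $\M$ by step neurons $s_j\coloneqq \step(z-v_j)$ in layer $\ell$ of $\M'$ (same incoming weights, bias shifted by $-v_j$), and to rescale each downstream weight $w$ that originally multiplied $\relu(z)$ into the weight $w(v_j-v_{j-1})$ multiplying $s_j$. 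By a straightforward induction on $\ell$, the pre-activation at each group of step neurons in $\M'$ then coincides with the pre-activation of the corresponding ReLU neuron in $\M$, so the final Boolean output is preserved. Since the input space $\{0,1\}^n$ is finite, the set $\{v_1,\dots,v_m\}$ is finite at every neuron, and this establishes the existence part of the lemma.

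For the polynomial-time claim, the key observation is that in an rMLP of graph size $N$, every weight and bias is a decimal number with $O(\log N)$ digits, so they all share a common denominator $Q=10^{O(\log N)}=N^{O(1)}$. A routine induction on the layer index $\ell$ then shows that every pre-activation at layer $\ell$ is of the form $z^{(\ell)}_i=p^{(\ell)}_i/Q^\ell$ with $p^{(\ell)}_i\in\mathbb{Z}$ and $|p^{(\ell)}_i|\leq N^{O(\ell)}$: passing from layer $\ell-1$ to $\ell$ multiplies the denominator by $Q$, while the numerator grows by at most a sum of $N$ products of two integers of magnitudes $N^{O(1)}$ and $N^{O(\ell-1)}$, which stays polynomial for constant $\ell$. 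The ReLU step preserves the denominator and only clips the numerator to be non-negative.

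Consequently, for constant depth $k$, every pre-activation takes values in the polynomial-size set $\{p/Q^\ell:p\in\mathbb{Z},\,|p|\leq N^{O(k)}\}$. The plan is then to enumerate, at each ReLU neuron, this polynomial-size superset of the achievable values and apply the substitution above, producing an $\M'$ of polynomial size in polynomial time. The main obstacle will be the inductive numerator-magnitude bound; this step crucially relies on both the $O(\log N)$-digit precision of the rMLP weights and on the constant-depth assumption, since either condition alone would allow the numerators to blow up super-polynomially across the layers.
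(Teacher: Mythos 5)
Your proof is correct and follows essentially the same strategy as the paper's: exploit the rMLP precision bound and constant depth to show each ReLU pre-activation ranges over a polynomially bounded set of values, then replace each ReLU neuron by polynomially many step neurons realizing a sum-of-thresholds decomposition of $\relu$. The only difference is bookkeeping: the paper first rescales weights and biases to make all values integers and uses $\relu(z)=\sum_{j=1}^{S}[z\geq j]$ with unchanged outgoing weights, whereas you keep rational values and weight the step copies by the gaps $v_j-v_{j-1}$, which amounts to the same construction.
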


\begin{proof} 
Let $(\mW^{(1)},\ldots,\mW^{(\ell)})$, $(\vb^{(1)},\ldots,\vb^{(\ell)})$ and
$(f^{(1)},\ldots,f^{(\ell)})$ be the sequences of weights, biases, and activation
functions of the rMLP~$\M$. Note that~$f^{(i)}$ for~$1\leq i \leq \ell-1$
is~$\relu$ and that~$f^{(\ell)}$ is the step activation function.  
The first step of the proof is to transform every weight and bias into an integer.
To this end, let~$L\in \mathbb{N}$,~$L>0$ be the lowest common denominator of all the weights and biases, and let
$\mathcal{M}'$ be the MLP that is exactly equal to~$\mathcal{M}$ except that
all the weights have been multiplied by~$L$, and all the biases of layer~$i$
have been multiplied by~$L^i$.  Observe that $\mathcal{M}'$ has only integer
weights and biases.  When~$w$ (resp.,~$b$) is a weight (resp., bias) of~$\M$,
we write~$w'$ (resp.,~$b'$) the corresponding value in~$\M'$.  We claim
that~$\mathcal{M}$ and~$\mathcal{M}'$ are equivalent, in the sense that for
every $\vx \in \{0, 1\}^n$, it holds that $\mathcal{M}(\vx) =
\mathcal{M}'(\vx)$. 
Indeed, for~$0\leq i \leq \ell$, let~$\vh^{(i)}$ and~$\vh'^{(i)}$ be the vectors
of values for the layers of~$\M$ and~$\M'$, respectively, as defined by
Equation~\ref{eq:mlp}.  We will show that ($\star$) for all~$1\leq i \leq \ell-1$ 
we have~$\vh'^{(i)} = L^i\times \vh^{(i)}$. The base case of~$i=0$ (i.e., the
inputs) is trivially true. For the inductive case, assume that ($\star$) holds
up to~$i$ and let us show that it holds for~$i+1$. We have:

		\begin{equation*}
		\begin{split}
			\vh'^{(i+1)} & =  \relu(\vh'^{(i)}\mW'^{(i+1)} + \vb'^{(i+1)}) \\
			& =  \relu(L \times \vh'^{(i)}\mW^{(i+1)} + L^{i+1}\times \vb^{(i+1)}) \text{  by the definition of~$\M'$}\\
			& =  \relu(L^{i+1} \times \vh^{(i)}\mW^{(i+1)} + L^{i+1}\times \vb^{(i+1)}) \text{  by inductive hypothesis} \\
			& = L^{i+1} \times \relu(\vh^{(i)}\mW^{(i+1)} + \vb^{(i+1)}) \text{  by the linearity of $\relu$} \\
			& = L^{i+1} \times \vh^{(i+1)}, \\
		\end{split}
	\end{equation*}
and ($\star$) is proven. Since the step function (used for the output neuron) satisfies $\step(cx)=c\step(x)$ for~$c>0$, we indeed have that $\mathcal{M}(\vx) = \mathcal{M}'(\vx)$.

We now show how to build a model~$\M''$ that uses only step activation
functions and that is equivalent to~$\M'$. The first step is to prove an upper
bound for the values in~$\vh'$. We start by bounding the values in~$\vh$.
Let~$D$ be width of~$\M$, that is, the maximal dimension of a layer of~$\M$,
and let~$C$ be the maximal absolute value of a weight or bias in~$\M$; note
that the value of~$C$ is asymptotically bounded by $|\M|^{O(1)}$  because $\M$ is an rMLP.  For every instance~$\vx$, we have that 
\[0 \leq h_j^{(i)} = \relu \left(\sum_k h_k^{(i-1)} W^{(i)}_{k,j} + b^{(i)}_j \right) \leq D C \max_k{h_k^{(i-1)}} + C \leq (D+1)C \max(1,\max_k{h_k^{(i-1)}})\]
  Using this inequality, and the fact that
$\max_k{h_k^{(0)}} \leq 1$, we obtain inductively that $0 \leq
h_j^{(i)} \leq ((D+1)C)^{i}$. By ($\star$), this implies that $0 \leq
h'^{(i)}_j \leq ((D+1)CL)^{i}$.

As all values (weights, biases and the $\vh'$ vectors) in $\M'$ consist only of integers, and are all bounded by the integer $S \coloneqq ((D+1)CL)^{\ell}$, then each \relu\ in~$\M'$ with bias~$b$ becomes equivalent to the following function~$f^*$:
 
 \begin{equation}
 	 f^*(x+b) \coloneqq [x+b \ge 1] + [x+b \ge 2] + \ldots + [x+b \ge S]	
 	 \label{eq:stepSim}
 \end{equation}
 
 Where $[y \ge j] \coloneqq 1$ if $y \ge j$ and $0$ otherwise. Hence, in order to finish the proof, it is enough to show how activation functions of the form $f^*$ can be simulated with step activation functions. Namely, we show how to build $\M''$, that uses only step activation functions, from $\M'$, in such a way that both models are equivalent. In order to do so, we replace each $f^{(i)}, \mW'^{(i)}, \vb'^{(i)}$ for~$1\leq i \leq \ell$ in the following way. 
If $i = \ell$, then nothing needs to be done, as $f^{(\ell)}$ is already assumed to be a step activation function. When $1 \leq i < \ell$, we replace the weights, activations and biases in a way that is better described in terms of the underlying graph of the MLP.
 We split every internal node, with bias $b$ into $S$ copies, all of which will have the same incoming and outgoing edges as the original nodes, with the same weights. The $j$-th copy
will have a bias equal to $b-j$. We illustrated this step in Figure~\ref{fig:trick}. This construction is an exact simulation of the function $f^*$ defined in Equation~\ref{eq:stepSim}.
 
 The computationally expensive part of the algorithm is the replacement of each node in~$\M'$ by~$S$ nodes, which takes time at most~$S=((D+1)CL)^{\ell} \in O(|\M|^{\ell}(CL)^{\ell})$ per node and thus at most~$O(|\M|^{\ell+1}(CL)^{\ell})$ in total. 
Since~$\ell$ is a constant, and~$C$ is bounded by a polynomial on~$\M$, we only need to argue that~$L$ is bounded as well. Indeed, as~$\M$ is an rMLP, each weight and bias can be assumed to be represented as a fraction whose denominator is a power of~$10$ of value polynomial in the graph size~$N$ of~$\M$. But the lowest common multiple of a set of powers of~$10$ is exactly the largest power of~$10$ in the set. Therefore~$L \leq 10^p$, where~$p \in O(\log N)$, and thus~$L \in O(N^c) \subseteq O(|\M|^c)$ for some constant~$c$.
We conclude from this that the construction takes polynomial time.
 \end{proof}
 
  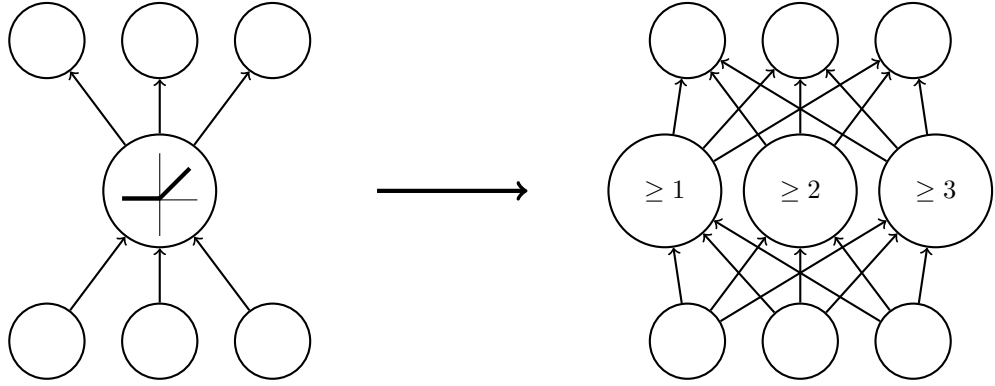
\begin{figure}
 	\begin{subfigure}{.4\textwidth}
 	\centering
 	\begin{tikzpicture}
 		\begin{scope}[every node/.style={circle,thick,draw, minimum size=1cm}]
			\node[minimum size=1.5cm] (1) at (0, 0) {};
			\node (2) at (-1.5, -2) {};
			\node (3) at (0, -2) {};
			\node (4) at (1.5, -2) {};
			\node (5) at (-1.5, 2) {};
			\node (6) at (0, 2) {};
			\node (7) at (1.5, 2) {};
					\end{scope}
		\draw[ultra thick] (-0.5,-0.1) -- (0, -0.1) -- (0.4, 0.3);
		\draw (0, -0.6) -> (0, 0.5);
		\draw (-0.5, -0.12) -> (0.5, -0.12);
		\begin{scope}[every path/.style={->, thick}]
			\path (1) edge node [left] {} (5);
			\path (1) edge node [left] {} (6);
			\path (1) edge node [right] {} (7);
			\path (2) edge node [left] {} (1);
			\path (3) edge node [left] {} (1);
			\path (4) edge node [left] {} (1);
		\end{scope}
		\end{tikzpicture}
 	\end{subfigure}
 	\begin{subfigure}{.2\textwidth}
 		\begin{tikzpicture}
 			\draw[ultra thick, ->] (0,0) --(2,0); 
 		\end{tikzpicture}
 	\end{subfigure}
	\begin{subfigure}{.4\textwidth}
	\centering
	\begin{tikzpicture}
		\begin{scope}[every node/.style={circle,thick,draw, minimum size=1cm}]
			\node[minimum size=1.5cm] (1a) at (-1.8, 0) {$\geq 1$};
			\node[minimum size=1.5cm] (1b) at (0, 0) {$ \geq 2$};
			\node[minimum size=1.5cm] (1c) at (1.8, 0) {$ \geq 3$};
			\node (2) at (-1.5, -2) {};
			\node (3) at (0, -2) {};
			\node (4) at (1.5, -2) {};
			\node (5) at (-1.5, 2) {};
			\node (6) at (0, 2) {};
			\node (7) at (1.5, 2) {};
					\end{scope}
		\begin{scope}[every path/.style={->, thick}]
			\path (1a) edge node [left] {} (5);
			\path (1a) edge node [left] {} (6);
			\path (1a) edge node [right] {} (7);
			\path (2) edge node [left] {} (1a);
			\path (3) edge node [left] {} (1a);
			\path (4) edge node [left] {} (1a);
			\path (1b) edge node [left] {} (5);
			\path (1b) edge node [left] {} (6);
			\path (1b) edge node [right] {} (7);
			\path (4) edge node [left] {} (1b);
			\path (2) edge node [left] {} (1b);
			\path (3) edge node [left] {} (1b);
			\path (1c) edge node [left] {} (5);
			\path (1c) edge node [left] {} (6);
			\path (1c) edge node [right] {} (7);
			\path (4) edge node [left] {} (1c);
			\path (2) edge node [left] {} (1c);
			\path (3) edge node [left] {} (1c);
		\end{scope}
	\end{tikzpicture}
	\end{subfigure}
	\caption{Illustration of the conversion from a \relu\ activation function to step activation functions, for $S = 3$. The weights are unchanged, and if the bias of the original neuron was~$b$ then the bias in the~$j$-th copy of that neuron becomes~$b-j$.}
    \label{fig:trick}
 \end{figure}

We are now ready to prove Theorem~\ref{thm:layers-membership}.

\begin{proof}[Proof of Theorem~\ref{thm:layers-membership}]
Let~$(\M, \vx, k)$ be an instance of~$t$-MCR. During this reduction we assume that~$n > 2k$, as otherwise the result can be achieved trivially; if~$n \leq 2k$ then trying all instances that differ by at most~$k$ from~$\vx$ takes only~$O(k^k)$, and thus we can solve the entire problem in fpt-time and return a constant-size instance of~$WCS(M_{t+2})$, completing the reduction.

We start by applying Lemma~\ref{lemma:equivReluThreshold} to build an equivalent MLP~$\M'$ that uses only step activation functions. As~$t$ is constant, this construction takes polynomial time, and its resulting MLP~$\M'$ has~$t$ layers as well. If~$\vx$ is a negative instance of~$\M'$ (and thus of~$\M$) we do nothing. This can trivially be checked in polynomial time, evaluating~$\vx$ in~$\M'$. But if~$\vx$ happens to be a positive instance of~$\M'$, then we change the definition of~$\M'$ negating its root perceptron\footnote{Let~$\mathcal{P} = (\vw, \vb)$ be the perceptron at the root of~$\M'$, which contains only integer values by construction. Then, the negation of~$\mathcal{P}$ is simply~$\bar{\mathcal{P}} = (-\vw, -\vb + 1)$, as~$-\vw \vx \geq -\vb + 1$ precisely when~$\vw \vx \leq \vb -1$, which occurs over the integers exactly when it is not true that~$\vw \vx \geq \vb$.}, and thus making~$\vx$ a negative instance. As a result, we can safely assume~$\vx$ to be a negative instance of~$\M'$.  We can also, in the same fashion that we assumed~$n > 2k$, discard the case where the instance~$0^n$ is a positive instance of~$\M'$ that differs by at most~$k$ from~$\vx$, as in such scenario we could also solve the problem in fpt-time. The same can be done for~$1^n$.

 We now build an MLP~$\M''$, that still uses only step activation functions, such that~$\M''$ has a positive instance of weight exactly~$k$ if and only if~$(\M, \vx, k)$ is a positive instance of~$t$-MCR.

Let~$\M''$ be a copy of~$\M'$ to which we add one extra layer at the bottom. For each~$1 \leq i \leq n$, we connect the~$i$-th input node of~$\M''$ to what was the~$i$-th input node of~$\M'$, but is now an internal node in~$\M''$. If~$\vx_i = 0$ then the node in~$\M''$ corresponding to the~$i$-th input node of~$\M'$ has a bias of~$1$, and the weight of the edge coming from the~$i$-th input node of~$\M''$ is also~$1$. On the other hand, if~$\vx_i = 1$, then the node in~$\M''$ corresponding to the~$i$-th input node of~$\M'$ has a bias of~$0$, and the weight of the connection added to it is~$-1$. After doing this, we add~$k-1$ more input nodes to~$\M''$, a new node~$p$ in the~$t$-th layer and a new root node~$r''$, that is placed in the layer~$t+1$. We connect~$r'$, the previous root node, to~$r''$ of~$\M'$ with weight~$1$, and all input nodes to node~$p$ with weights of~$1$. In case~$p$ is more than one layer above the new input nodes, we connect them through paths of identity gates, as shown in Lemma~\ref{lem:circuits-to-MLPs}. We set the bias of~$r''$ to~$-2$, and the bias of~$p$ to~$-k$. All non-input nodes added in the construction use step activation functions.	

We now prove a claim stating that~$\M''$ has exactly the intended behavior.

\begin{claim}
	The MLP~$\M''$ has a positive instance of weight exactly~$k$ if and only if~$(\M, \vx, k)$ is a positive instance of~$t$-MCR.
	\label{claim:M''}
\end{claim}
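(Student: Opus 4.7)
My plan is to interpret $\M''$ as a gadget that directly encodes the MCR condition. I will think of an input $\vy \in \{0,1\}^{n+k-1}$ of $\M''$ as split into two pieces: the first $n$ bits $\vy_{[1..n]}$ act as a \emph{flip mask}, while the remaining $k-1$ bits serve as \emph{padding}. The first step is to verify, by a short case analysis on the value of each $\vx_i$, that the extra bottom layer added to $\M'$ computes $\vx \oplus \vy_{[1..n]}$ componentwise, so the embedded copy of $\M'$ inside $\M''$ evaluates to $r' = \M'(\vx \oplus \vy_{[1..n]})$. Next, since $p$ has bias $-k$ and receives a unit-weight edge from every one of the $n+k-1$ new input nodes, I observe that $p$ fires iff the Hamming weight $|\vy|$ is at least $k$. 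Finally, because $r''$ has bias $-2$ and receives unit-weight edges from $r'$ and $p$, the new root realizes the AND of $r'$ and $p$. Thus $\M''(\vy)=1$ iff both $\M'(\vx \oplus \vy_{[1..n]})=1$ and $|\vy| \geq k$.

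For the forward direction of the claim, take a positive instance $\vy$ of $\M''$ of weight exactly $k$. The characterization above gives $\M'(\vx \oplus \vy_{[1..n]})=1$, and since the padding contributes at most $k-1$ to the total weight, the flip mask $\vy_{[1..n]}$ has some Hamming weight $w \leq k$. Setting $\vy^\star \coloneqq \vx \oplus \vy_{[1..n]}$, we then have $\dist(\vx,\vy^\star)=w\leq k$ and $\M'(\vy^\star)\neq \M'(\vx)$, because $\vx$ was arranged beforehand to be a negative instance of $\M'$. Since $\M'$ is equivalent to $\M$ up to the possible negation of the output root that we performed earlier, this translates into a genuine MCR witness and hence $(\M,\vx,k)\in t\text{-MCR}$.

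For the backward direction, suppose $(\M,\vx,k)\in t\text{-MCR}$ with witness $\vy^\star$ at distance $w\leq k$ from $\vx$ and with opposite label under $\M$. Let $\vz \coloneqq \vx \oplus \vy^\star$ be the corresponding flip mask. Since the preprocessing has already discarded the cases where $\vx$ itself, $0^n$, or $1^n$ serve as a witness, we may assume $1 \leq w \leq k$. I then build the input $\vy$ of $\M''$ by using $\vz$ as the first $n$ coordinates and turning on exactly $k-w$ of the $k-1$ padding bits; note that $0 \leq k-w \leq k-1$, so this is always feasible. The resulting $\vy$ has Hamming weight exactly $k$, and the characterization of $\M''$ yields $\M''(\vy)=1$.

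The main obstacle I anticipate is the careful bookkeeping of the possible negation of $\M'$'s root (done earlier when $\vx$ is positive under $\M$) so that throughout the argument ``positive under $\M'$'' correctly represents ``label different from $\M(\vx)$''. The other delicate point is that the padding range $0 \leq k-w \leq k-1$ must be valid, which is precisely why the preceding preprocessing discards the trivial MCR witnesses; this explains both the choice of $k-1$ padding bits (rather than $k$ or $0$) and the need to assume $n > 2k$ before the reduction even begins.
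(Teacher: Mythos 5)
Your proof is correct and takes essentially the same route as the paper's: you characterize $\M''$ as the conjunction (via $r''$) of the test ``$\M'(\vx \oplus \vy_{[1..n]})=1$'' computed through the XOR bottom layer and the weight test ``$|\vy|\geq k$'' at $p$, and then convert weight-$k$ positive instances into flip masks and back, padding with $k-w\leq k-1$ extra ones, exactly as in the paper's argument. The only cosmetic difference is your justification of $w\geq 1$: it follows directly from the fact that the MCR witness has a label different from $\M'(\vx)$ (hence cannot equal $\vx$), rather than from the $0^n$/$1^n$ preprocessing, which the paper needs only later for the majority-circuit gadget.
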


\begin{proof}
For the forward direction, assume~$\M''$ has a positive instance~$\vx'$ of weight exactly~$k$. As the root~$r''$ has a bias of~$-2$, and two incoming edges with weight~$1$, and given that the output of any node is bounded by~$1$, as only step activation functions are used, we conclude that both~$p$ and~$r'$, the children of~$r''$, must have a value of~$1$ on~$\vx'$. The fact that~$r'$ has a value of~$1$ on~$\vx'$ implies that~$\vx^s$, the restriction of~$\vx$ that considers only nodes that descend from~$r'$, must be a positive instance for the submodel~$\M^s$ induced by considering only nodes that descend from~$r'$.  But one can easily check that by construction, we have that~$\M^s(\vx^s) = \M'(\vx^s \oplus \vx)$, where~$\oplus$ represents the bitwise-xor. Thus,~$\vx^s \oplus \vx$ is a positive instance for~$\M$, and consequently for~$\M$. As~$\vx^s \oplus \vx$ differs from~$\vx$ by exactly the weight of~$\vx^s$, as~$0$ is the neutral element of~$\oplus$, and the weight of~$\vx^s$ is by definition no more than the weight of~$\vx'$, which is in turn no more than~$k$ by hypothesis, we conclude that~$(\M, \vx, k)$ is a positive instance of~$t$-MCR.

For the backward direction, assume there is a positive instance~$\vx'$  of~$\M$ that differs from~$\vx$ in at most~$k$ positions. This means that~$\vx''= \vx \oplus \vx'$ has weight at most~$k$. By the same argument used in the forward direction,~$\M^s(\vx'') = \M'(\vx'' \oplus \vx) = \M'(\vx')$, as~$\vx \oplus \vx' \oplus \vx = \vx \oplus \vx \oplus \vx' = \vx'$, because~$\oplus$ is both commutative and its own inverse. But the fact that~$\vx'$ is a positive instance of~$\M$ implies that it is also a positive instance for~$\M'$. As we are assuming~$\vx'| \neq 0^n$, we have that~$k- |\vx'| \leq k-1$. Thus, we can create an instance~$\vx''$ for~$\M''$ that is equal to~$\vx'$ on its corresponding features, and that sets~$k-|\vx'|$ arbitrary extra input nodes to~$1$, among those created in the construction of~$\M''$. As the instance~$\vx''$ has weight exactly~$k$, it satisfies the submodel descending from~$p$, and as~$\vx''$ its equal to~$\vx'$ on the submodel descending from~$r'$, and~$\vx'$ is a positive instance of~$\M'$, we have that this submodel must be satisfied as well. Both submodels being satisfied, the whole model~$\M''$ is satisfied, hence we conclude the proof.
\end{proof}

We thus have a model~$\M''$ with step activation functions, and~$t+2$ layers, such that if that model has a satisfying assignment of weight exactly~$k$, then~$(\M, \vx, k)$ is a positive instance of~$t$-MCR.

Note that step activation functions with bias are equivalent to weighted threshold gates. We then use a result by Goldmann and Karpinski~\cite[Corollary 12]{Goldmann1998} to build a circuit~$C_{\M''}$ that is equivalent (as Boolean functions) to~$\M''$ but uses only majority gates. The construction of Goldmann et al. can be carried in polynomial time, and guarantees that~$C_{\M''}$ will have at most~$t+3$ layers. 

There is however a caveat to surpass: although not explicitly stated in the work of Goldmann et al.~\cite{Goldmann1998}, their definition of majority circuit must assume that for representing a Boolean function from~$\{0, 1\}^n$ to~$\{0,1\}$, the circuit is granted access to~$2n$ input variables~$\vx_1,\ldots, \vx_n, \overline{\vx_1}, \ldots, \overline{\vx_n}$, as it is usual in the field, and described for example in the work of Allender \cite{Allender1989}. We thus assume that the circuit~$C_{\M''}$ resulting from the construction of Goldmann et al. has this structure, which does not match the required structure of the majority circuits defining the~$\W(\Maj)$-hierarchy as defined by Fellows et al \cite{Fellows, Fellows2007CombinatorialCA}. In order to solve this, we adapt a technique from Fellows et al. \cite[p. 17]{Fellows2007CombinatorialCA}. We build a circuit~$C^*_{\M''}$ that does fit the required structure. Let~$n$ be the dimension of~$\M''$ (which exceeds by~$k-1$ that of~$\M$). We now describe the steps one needs to apply to~$C_{\M''}$ in order to obtain~$C^*_{\M''}$.

\begin{enumerate}
	\item Add a new layer with~$n+1$ input nodes~$\vx'_1, \ldots, \vx'_{n+1}$, below what previously was the layer of~$2n$ input nodes~$\vx_1,\ldots, \vx_n, \overline{\vx_1}, \ldots, \overline{\vx_n}$.
	\item For every~$1 \leq i \leq n$, connect input node~$\vx'_i$ with its corresponding node~$\vx_i$ in the second layer, making~$\vx_i$ a unary majority, with the same outgoing edges it had as an input node. This enforces~$\vx_i = \vx'_i$.
	\item Create a new root~$r'$ for the circuit, and let~$r'$ be a binary majority between the input node~$\vx'_{n+1}$ and the previous root~$r$.
	
	\item Replace each previous input node~$\overline{\vx_i}$ by a majority gates~$m_i$ that has~$n+1-2k$ incoming edges from~$\vx'_{n+1}$, and one incoming edge from each~$\vx'_j$ with~$j \not \in  \{i, n+1\}	$. The outgoing edges are preserved.
\end{enumerate}

It is clear that the circuit~$C^*_{\M''}$ is a valid majority circuit in the sense defining the~$\W(\Maj)$-hierarchy. And it has~$2$ layers more than~$C_{\M''}$, yielding a total of~$t+5$ layers, where the last one has a small gate.  However, it is not evident what this new circuit does. We now prove a tight relationship between the circuit~$C^*_{\M''}$ and~$\M''$.

\begin{claim}
	The circuit~$C^*_{\M''}$ has a satisfying assignment of weight exactly~$k+1$ if and only if~$\M''$ has a positive instance of weight exactly~$k$.
	\label{claim:Cstar}
\end{claim}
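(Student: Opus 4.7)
The plan is to exploit the fact that the new root $r'$ is a binary majority of $\vx'_{n+1}$ and the previous root $r$, so it outputs $1$ if and only if both inputs are $1$. This forces $\sigma(\vx'_{n+1})=1$ in any satisfying assignment $\sigma$ of weight $k+1$, which is the hinge of the argument.

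The main verification is to check that, under the assumption $\sigma(\vx'_{n+1})=1$, the construction faithfully simulates $C_{\M''}$ on input $(\sigma(\vx'_1),\ldots,\sigma(\vx'_n))$. The unary majorities give $\vx_i=\sigma(\vx'_i)$ directly, so I only need to check that each gate $m_i$ correctly computes $\overline{\sigma(\vx'_i)}$. By construction, $m_i$ has fan-in $(n+1-2k)+(n-1)=2n-2k$ (with $n+1-2k\geq 2$ since the reduction explicitly handles the case $n\leq 2k$ separately), and therefore outputs $1$ iff strictly more than $n-k$ of its inputs are $1$. Writing $S\coloneqq\{j:\sigma(\vx'_j)=1,\ 1\leq j\leq n\}$ and using $|S|=k$, the number of ones reaching $m_i$ equals $(n+1-2k)+|S\setminus\{i\}|$, which is $n+1-k$ when $i\notin S$ and $n-k$ when $i\in S$. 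Hence $m_i$ fires precisely when $\sigma(\vx'_i)=0$, so $m_i=\overline{\sigma(\vx'_i)}$, as required.

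For the forward direction, take a satisfying assignment $\sigma$ of $C^*_{\M''}$ of weight exactly $k+1$. The binary majority at $r'$ forces $\sigma(\vx'_{n+1})=1$, and the remaining $k$ ones are distributed among $\vx'_1,\ldots,\vx'_n$. By the analysis above, every gate descending from $r$ evaluates under $\sigma$ exactly as $C_{\M''}$ does on the input $(\sigma(\vx'_1),\ldots,\sigma(\vx'_n))$. Since $r$ evaluates to $1$ and $C_{\M''}$ is equivalent (as a Boolean function) to $\M''$ by the Goldmann--Karpinski construction invoked earlier, this input is a positive instance of $\M''$ of weight exactly $k$.

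For the backward direction, given a positive instance $\vx^*$ of $\M''$ of weight exactly $k$, define $\sigma$ by $\sigma(\vx'_i)\coloneqq\vx^*_i$ for $1\leq i\leq n$ and $\sigma(\vx'_{n+1})\coloneqq 1$, giving total weight $k+1$. The same analysis shows that the sub-circuit rooted at $r$ evaluates to $C_{\M''}(\vx^*)=\M''(\vx^*)=1$, and together with $\sigma(\vx'_{n+1})=1$ this makes the binary majority at $r'$ output $1$, so $\sigma$ satisfies $C^*_{\M''}$. The only delicate bookkeeping is the majority threshold calculation and the inequality $n+1-2k\geq 2$, both of which follow from the early case distinction made at the start of the reduction; there is no deeper obstacle.
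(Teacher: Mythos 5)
Your proof is correct and takes essentially the same approach as the paper's: force $\sigma(\vx'_{n+1})=1$ via the binary majority at $r'$, verify that under weight-$(k{+}1)$ assignments each gate $m_i$ computes $\overline{\vx'_i}$ by the fan-in/threshold calculation (fan-in $2n-2k$, threshold $n-k$, with $n+1-k$ ones when $\vx'_i=0$ and $n-k$ when $\vx'_i=1$), and then read off the equivalence with $C_{\M''}$ in both directions. The only difference is presentational: you factor out the analysis of the $m_i$ gates once and reuse it, whereas the paper inlines the calculation in the forward direction and invokes it by reference in the backward one.
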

 
 \begin{proof}
 
 	\textbf{Forward Direction.} Assume~$C^*_{\M''}$ has a satisfying assignment of weight~$k+1$. 
 	By step 3 of the construction, in order to satisfy~$C^*_{\M''}$, the assignment must set~$\vx'_{n+1}$ to~$1$. 
 
 	 As we assume that node~$\vx'_{n+1}$ is set to~$1$, the assignment must set to~$1$ exactly~$k$ input nodes among~$\vx'_1, \ldots, \vx'_n$ and thus the sum of inputs set to~$1$ of each majority gate~$m_i$ constructed in step 4, is exactly equal to 
 	\[
 	n+1-2k + \sum_{j \not \in  \{i, n+1\}}{\vx'_j} = n+1-2k + (k - \vx'_i) = n+1-k-\vx'_i
 	\]
 	and its fan-in is exactly equal to~$2n-2k$. Therefore~$m_i$ is activated when~$n+1-k-\vx'_i > n-k$, which happens precisely when~$\vx'_i = 0$. This way, each gate~$m_i$ corresponds to the negation of~$\vx'_i$. 

	This way, the subcircuit induced by considering only the nodes that descend from~$r'$ computes the same Boolean function that~$C_{\M''}$ computes, under the natural mapping of their variables. Therefore, a satisfying assignment of weight~$k+1$ for~$C^*_{\M''}$ implies the existence of a satisfying assignment for~$C_{\M''}$ that chooses exactly~$k$ positive variables, and thus a positive instance of weight~$k$ for~$\M''$.
	
	\textbf{Backward Direction.} Assume~$\M''$ has a positive instance of weight exactly~$k$. That implies that~$C_{\M''}$ has a satisfying assignment~$\sigma$ that sets at most~$k$ positive variables to~$1$. Let us consider the assignment~$\sigma'$ for~$C^*_{\M''}$ that sets to~$1$ the same variables that~$\sigma$ does, and additionally sets~$\vx_{n+1}$ to~$1$. 
 	The assignment~$\sigma'$ has weight exactly~$k+1$. By the same argument used in the forward direction, under assignment~$\sigma'$ the gates~$m_i$ behave like negations. Thus, the assignment~$\sigma'$ induces an assignment over the second layer of~$C^*_{\M''}$ that corresponds precisely to a satisfying assignment of~$C_{\M''}$, and thus makes the value of~$r$ equal to~$1$. As both~$r$ and~$\vx_{n+1}$ have value~$1$ under assignment~$\sigma'$,  it follows that the value of~$r'$, and thus of circuit~$C^*_{\M''}$, are~$1$ under~$\sigma'$ as well. This means that assignment~$\sigma'$, which by construction has weight~$k+1$, is a satisfying assignment for~$C^*_{\M''}$, and thus concludes the proof.
 	\end{proof}

 By combining Claim~\ref{claim:M''} and Claim~\ref{claim:Cstar}, and noting again that circuit~$C^*_{\M''}$ is a valid majority circuit, in the sense that defines the~$\W(\Maj)$-hierarchy, and has weft at most~$t+4$, we conclude the reduction of Theorem~\ref{thm:layers-membership}.
\end{proof}

%
%
\section{Proof of Proposition~\ref{prp:result-layers}}
\label{sec:proof-12}
Based on Proposition~\ref{prp:mlpt}, we know that interpreting an rMLP (for the problem MCR) with $9t + 27 = 3(3t+8) + 3$ is $\W(\Maj)[3t+8]$-hard. On the other hand, by using the same proposition, the problem of interpreting an rMLP with $3t+3$ layers is contained in $\W(\Maj)[3t+7]$. But by hypothesis, $\W(\Maj)[3t+7] \subsetneq \W(\Maj)[3t+8]$, which is enough to conclude the proof.

\end{appendices}


\end{document}